\theoremstyle{plain}
\newtheorem{theorem}{Theorem}[section]
\newtheorem{lemma}[theorem]{Lemma}
\theoremstyle{definition}
\newtheorem{definition}[theorem]{Definition}
\newtheorem{assumption}[theorem]{Assumption}
\theoremstyle{remark}
\newtheorem{remark}[theorem]{Remark}
\title{Learning Personalized Ad Impact via Contextual Reinforcement Learning under Delayed Rewards\thanks{This work is  supported by the AI2050 program at Schmidt Sciences (Grant G-24-66104), Army Research Office
Award W911NF-23-1-0030, and  NSF Award CCF-2303372. }}
\author{%
  Yuwei Cheng \\
  Department of Statistics\\
  University of Chicago\\
  Chicago, IL 60637 \\
  \texttt{yuweicheng@uchicago.edu} \\
  % examples of more authors
  \And
  Zifeng Zhao \\
  Mendoza College of Business \\
  University of Notre Dame \\
  Notre Dame, IN 46556 \\
  \texttt{zzhao2@nd.edu} \\
  \And
  Haifeng Xu  \\
  Department of Computer Science and Data Science\\
  University of Chicago\\
  Chicago, IL 60637\\
  \texttt{haifengxu@uchicago.edu}
}
\begin{document}

\maketitle

\begin{abstract}
Online advertising platforms use automated auctions to connect advertisers with potential customers, requiring effective bidding strategies to maximize profits. Accurate ad impact estimation requires considering three key factors: delayed and long-term effects, cumulative ad impacts such as reinforcement or fatigue, and customer heterogeneity. However, these effects are often \textit{not} jointly addressed in previous studies. To capture these factors, we model ad bidding as a Contextual Markov Decision Process (CMDP) with delayed Poisson rewards. For efficient estimation, we propose a two-stage maximum likelihood estimator combined with data-splitting strategies, ensuring controlled estimation error based on the first-stage estimator's (in)accuracy. Building on this, we design a reinforcement learning algorithm to derive efficient personalized bidding strategies. This approach achieves a near-optimal regret bound of $\Tilde{\mathcal{O}}(dH^2\sqrt{T})$, where $d$ is the contextual dimension, $H$ is the number of rounds, and $T$ is the number of customers. Our theoretical findings are validated by simulation experiments.
\end{abstract}
\newtheorem*{retheorem}{Theorem}
\newtheorem*{reproposition}{Proposition}
\newtheorem*{relemma}{Lemma}
\newtheorem*{recorollary}{Corollary}
\ifx\fact\undefined
\newtheorem{fact}[theorem]{Fact}

\def\e{\bm{e}} 
\def\o{\bm{o}} 
\def\g{\bm{g}} 
\def\s{\bm{s}} 
\def\p{\bm{p}} 
\def\v{\bm{v}} 
\def\w{\bm{w}} 
\def\x{\mathbf{x}} 
\def\y{\mathbf{y}} 
\def\z{\bm{z}} 
\def\t{\bm{t}} 
\def\F{\bm{F}} 
\def\E{\bm{E}} 
\def\A{\bm{A}}
\def\B{\bm{B}}
\def\W{\mathbf{W}}
\def\b{\bm{b}}
\def\X{\bm{X}}
\def\I{\bm{I}}
\def\Y{\bm{Y}}
\def\C{\bm{C}}
\def\D{\mathbf{D}}
\def\r{\bm{r}}
\def\a{\mathbf{a}}
\def\V{\mathbf{V}}
\def\S{\bm{S}}
\def\R{\bm{R}}
\def\lam{\bm{\lambda}}
\def\tt{\bm{\theta}} 
\def\Eta{\bm{\eta}}

% Definitions for mathbb format letter notion
\def\bE{\mathbb{E}}
\def\bP{\mathbb{P}}
\def\bR{\mathbb{R}}
\def\bB{\mathbb{B}}
\def\bX{\mathbb{X}}
\def\bY{\mathbb{Y}}
\def\bF{\mathbb{F}}
\def\bI{\mathbb{I}}
\def\bTheta{\mathbf{\Theta}}

% Definions for mathcal format letter notion
\def\cB{\mathcal{B}}
\def\cR{\mathcal{R}}
\def\cA{\mathcal{A}}
\def\cG{\mathcal{G}}
\def\cX{\mathcal{X}}
\def\cF{\mathcal{F}}
\def\cW{\mathcal{W}}
\def\cC{\mathcal{C}}
\def\cL{\mathcal{L}}
\def\cD{\mathcal{D}}
\def\cP{\mathcal{P}}
\def\cF{\mathcal{F}}
\def\cO{\mathcal{O}}
\def\cM{\mathcal{M}}
\def\cH{\mathcal{H}}

\def\btheta{\bm{\theta}}
\def\bbeta{\bm{\beta}}
\def\d{d}

\newcommand{\yuwei}[1]{{\color{blue}  [\text{Yuwei:} #1]}}
\section{Introduction}\label{sec:intro}
E-commerce is expanding rapidly worldwide, with online sales expected to constitute 23\% of total retail by 2027, supported by a 14.4\% annual growth \citep{tradegov2025}. This growth has made digital advertising inevitable, 
empowering tech giants like Google, Meta, Microsoft, and Amazon, which leverage automated auctions to connect advertisers with customers. Auto-bidding, where platforms handle bid placement on behalf of advertisers, has grown significantly because of its simplified interaction for the advertisers and improved performance thanks to real-time optimization \citep{aggarwal2024auto, google_ads_autobidding, facebook_business_autobidding, microsoft_ads_autobidding,amazon_dynamic_bidding}. Given its importance, it is crucial to develop effective ad bidding strategies. 

Developing effective bidding strategies requires accurately understanding advertising impacts. Psychological studies have long shown that advertising has a delayed and long-term impact on consumer beliefs and attitudes, ultimately shaping purchasing behavior over time \citep{vakratsas1999advertising, lewis2022incrementalitybiddingattribution, sakalauskas2024personalized}. Advertising effectiveness varies significantly across individuals. Observational studies reveal substantial heterogeneity based on demographics, platform usage patterns, and census data \citep{liu2019estimating, gordon2019comparison}. These insights recommend personalized e-commerce advertising, suggesting platforms should leverage customer data to target high-value users and optimize bidding strategies tailored to diverse behavioral profiles \citep{sakalauskas2024personalized}. Additionally, the impact of ads also depends on their cumulative effect: while repeated exposure can strengthen brand recognition, it may also lead to ad fatigue—highlighting a subtle trade-off that is often overlooked in ``learning to bid'' literature \citep{pechmann1988advertising,lane2000impact, you2015smartphone, bell2022coping, guo2024optimal}.

\textbf{Limitation in recent work.} However, these insights are not fully reflected in current algorithmic designs. The problem of ``learning to bid'' has been widely studied, with most prior work modeling it as a (contextual) bandit problem that assumes immediate rewards, such as click-through rates, which prioritizes short-term customer engagement \citep{pmlr-v49-weed16, de2016effectiveness, ren2017bidding, feng2018learning, feng2023improved, han2024optimal, zhang2024online}. This approach, however, neglects the delayed and cumulative effects of advertising on consumption, potentially leading to incentive misalignment \citep{deng2022fairness}. This misalignment is further exacerbated by the rise of auto-bidding systems, where platforms automatically manage bidding decisions on behalf of advertisers. While platforms typically optimize for engagement-based metrics, advertisers ultimately care about long-term revenue growth via production conversion \footnote{We defer a more detailed discussion of the ``learning-to-bid'' literature to Appendix \ref{sec:extended_discussion}}. Recent work has therefore begun to emphasize metrics like target return on ad spend as a practical alternative \citep{wang2019revenue, aggarwal2024auto} and design bidding strategies which account for delayed and cumulative effects of ads. Recently, \citet{badanidiyuru2023incrementalitybiddingreinforcementlearning} models the long-term causal impact of ad impressions using Markov Decision Process with mixed and delayed Poisson rewards. However, this approach assumes homogeneous treatment effects across users, overlooking the importance of personalization, a crucial factor in advertising effectiveness. To the best of our knowledge, no theoretical work jointly addresses all three aspects—delayed effects, cumulative impacts, and heterogeneity—in modeling ad effectiveness and designing bidding algorithms, potentially due to the modeling complexity and difficulty in estimation.

\textbf{Our contribution.} To address this gap, motivated by the initial proposal of Contextual Markov Decision Process (CMDP) \citep{hallak2015contextual} to model customer behavior during website interactions, we model auto-bidding as CMDP with delayed Poisson rewards—using context to capture personalized ad impacts and states to capture ads cumulative effects. For effective estimation, rather than fitting all model parameters simultaneously using a single giant likelihood function, we introduce a novel data-splitting strategy and develop a two-stage maximum likelihood estimator that ensures controlled estimation error in the presence of delayed impacts. Based on this efficient online estimation oracle, we design a reinforcement learning algorithm to solve the CMDP with near-optimal regret of \(\Tilde{\mathcal{O}}(dH^2\sqrt{T})\), where \(d\) is the contextual dimension, \(H\) is the number of rounds, and \(T\) is the number of customers. Finally, we perform simulation studies which validate  our theoretical findings.

\section{Modeling Personalized Ad Impact by CMDP with Delayed Rewards}
\textbf{Notation.} For a positive integer \( T \), we denote \( [T] = \{1, 2, \dots, T\} \). We use standard asymptotic notations, including \( \mathcal{O}(\cdot) \), \( \Omega(\cdot) \), and \( \Theta(\cdot) \), as well as their counterparts \( \Tilde{\mathcal{O}}(\cdot) \), \( \Tilde{\Omega}(\cdot) \), and \( \Tilde{\Theta}(\cdot) \) to suppress logarithmic factors. The symbol \( e \) represents the base of the natural logarithm. The Mahalanobis norm is defined as \( \|\x\|_{\Sigma} = \sqrt{\x^{\top} \Sigma \x} \). $\|\cdot\|_2$ represents $L_2$ norm. For vectors \( \x \) and \( \y \), we use \( \langle \x, \y \rangle \) and \( \x^{\top} \y \) interchangeably to denote their inner product.

To address the complexities of advertisement impacts on product conversions, we model online ad bidding as a Contextual Markov Decision Process (CMDP). This framework captures the three key impacts discussed in Section \ref{sec:intro} and allows transitions and rewards to depend on context $\mathbf{x}_t$, personalized information of customer $t$, supporting dynamic and personalized bidding. While incorporating $\x_t$ directly into the state is possible, it greatly enlarges the state space and complicates learning \citep{Levy_Mansour_2023}. Instead, we adopt a CMDP formulation—common in user-driven applications—that keeps the state compact and treats context as auxiliary information \citep{hallak2015contextual}, preserving both efficiency and personalization. Our analysis focuses on ad platforms that bid on behalf of advertisers. We use ``ad platform'' and ``learner'' interchangeably.  

Mathematically, a CMDP is defined as the tuple \((\mathcal{X}, \mathcal{S}, \mathcal{A}, \mathcal{M})\), where \(\mathcal{X} \subseteq \mathbb{R}^{d}\) represents the contextual feature space, \(\mathcal{S}\) denotes the state space capturing customer ad exposure history, and \(\mathcal{A}\) is the action space for ad bidding strategies. The mapping \(\mathcal{M}\) assigns each context \(\mathbf{x}\) to a Markov Decision Process (MDP), \(\mathcal{M}(\mathbf{x}) = (\mathcal{S}, \mathcal{A}, \mathcal{P}^{\mathbf{x}}, \mathcal{R}^{\mathbf{x}}, S_1, H)\), where the state transition probability \(\mathcal{P}^{\mathbf{x}}\) and reward function \(\mathcal{R}^{\mathbf{x}}\) depend on \(\mathbf{x}\). \(S_1\) is the initial state. \(H\) represents the maximum number of customer-learner interactions with details in the context of online bidding as below.  

\begin{definition}[State]\label{def:state}
The state of customer \( t \) at round \( h \), denoted as \( S^t_h = [S^t_{h,1}, S^t_{h,2}] \), encodes information about the two most recent ad exposures. Specifically, let \( G^t_{h,1} \) represent the round of the most recent ad impression, we set \( S^t_{h,1} = h - G^t_{h,1} \), which captures the time elapsed since that impression. Similarly, let \(G^t_{h,2} \) denote the round of the second-to-last ad impression, we set \( S^t_{h,2} = G^t_{h,1} - G^t_{h,2} \), representing the time interval between the two most recent impressions.
\( S^t_{h,1} \in \mathcal{H}_1 = \{\infty, 1, 2, \dots, H-1\} \) and \( S^t_{h,2} \in \mathcal{H}_2 = \{-\infty, 1, 2, \dots, H-2\} \). 
The interpretation of \( \infty \) and \( -\infty \) is deferred to Remark \ref{remark:state}. 
\end{definition} 

\begin{assumption}[Observation]\label{eq:impact_model} 
The expected product conversion rate $\mu^t_h$ at round \( h \) for customer \( t \) with context $\x_t$ follows
\begin{equation*}
\mu^t_h = \begin{cases} 
\d_{S^t_{h, 1}} \x_t^{\top}\btheta_{S^t_{h, 2}} & \text{if}\; \o^t_h = 0 \\ 
\x_t^{\top}\btheta_{S^t_{h, 1}} & \text{otherwise}.
\end{cases}    
\end{equation*}
The observed product conversion \( \y^t_h \) follows a Poisson distribution, i.e. \(\y^t_h \sim \text{Poi}(\mu^t_h)\). \(\d_{S^t_{h, 1}}\) represents the delayed advertisements impact, \( {S^t_{h, 1}} \in \mathcal{H}_1 = \{\infty, 1, 2, \dots, H-1\} \). \( \o^t_h \) indicates the bidding outcome for customer \( t \) at round \( h \). \( \o^t_h = 1 \) if the bid is won and \( \o^t_h = 0 \) otherwise.
\end{assumption}

\begin{wrapfigure}{l}{0pt}
    \centering
    \raisebox{0pt}[\dimexpr\height-0.6\baselineskip\relax]{\includegraphics[width=0.45\textwidth]{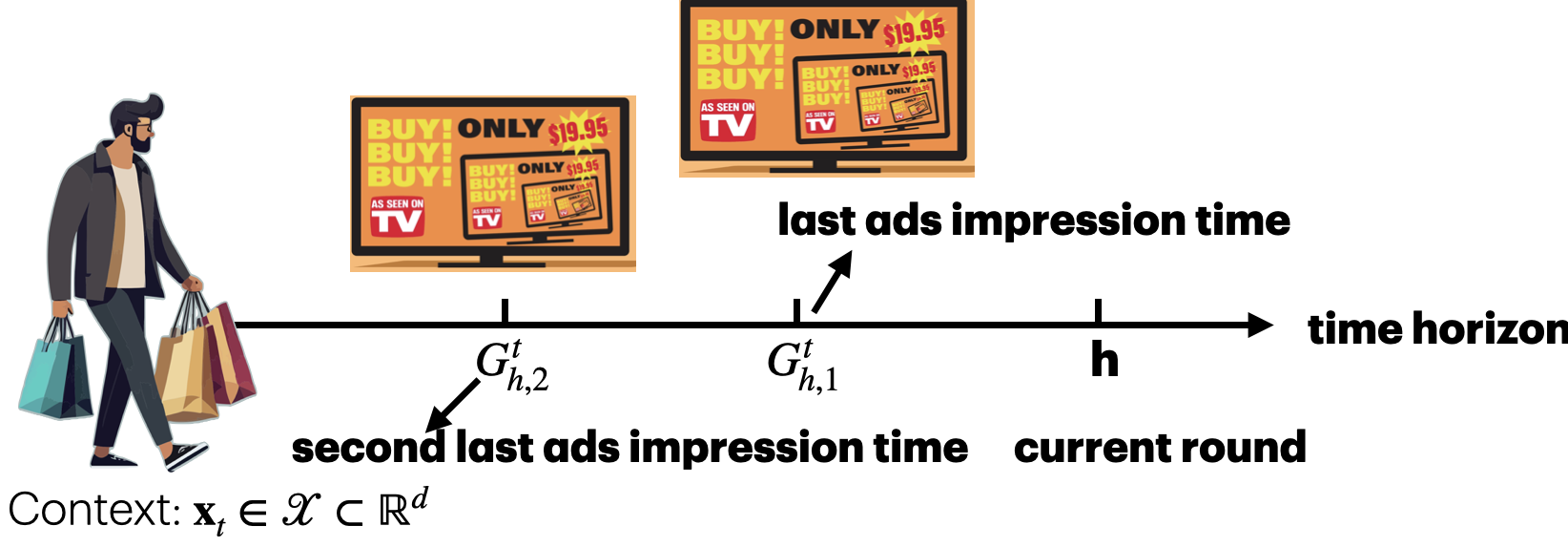}}
    \caption{Illustration}
    \label{fig:fig1}
    \vspace{-10pt}
\end{wrapfigure}

\begin{remark}\label{remark:state}
If customer $t$ has no prior ad exposure, the state is $S^t_h = [\infty, -\infty]$. Specifically, when no advertisement has been successfully displayed, product conversion $\y^t_h \sim \text{Poi}(\mu^t_h)$, where $\mu^t_h = \d_{\infty}\x_t^{\top}\btheta_{-\infty}$, Here, $\x_t^{\top}\btheta_{-\infty}$ represents the natural demand in the absence of ads, which may vary with context $\x_t$ to capture factors such as income, preferences, tastes, and seasonality \citep{manandhar2018effect}. We set $d_{\infty} = 1$ to avoid identifiability issue. If the learner wins the bid, \(\y^t_h \sim \text{Poi}(\mu^t_h) \) with \( \mu^t_h = \mathbf{x}_t^{\top}\btheta_\infty\), where \( \btheta_\infty \) captures the effect of first-time ad exposure. It is possible to generalize the linear modeling of the expected product conversion rate to a more flexible form, i.e., \( \mu^t_h = h_{S^t_{h,1}}(x_t) \). For example, \( h_{S^t_{h,1}} \) may be modeled as a state-dependent neural network, though this would come at the cost of sacrificing theoretical guarantees. We refer readers to Appendix \ref{sec:linearity_assumption} for a detailed discussion.
\end{remark}

As illustrated in Figure \ref{fig:fig1}, if no new advertisement is displayed at round $h$, the impact of the previous ad shown at round $G^t_{h,1}$ carries over, with a delayed and long-term effect governed by $d_{h - G^t_{h,1}}$, depending on the time interval $h - G^t_{h,1}$. This delayed factor allows ad effects to span multiple rounds, enabling delayed conversion peaks, as $\d_l$ is not restricted to be less than 1. This results in an expected product conversion of $d_{h - G^t_{h,1}}\x_t^{\top}\btheta_{G^t_{h,1} - G^t_{h,2}}$ (Assumption \ref{eq:impact_model}), where $\btheta_{G^t_{h,1} - G^t_{h,2}}$ captures the impact of the ad shown at $G^t_{h,1}$,  which affected by its most recent predecessor at $G^t_{h,2}$. In contrast, if a new ad is successfully displayed at round $h$, its effect overrides the previous one, but itself is influenced by the most recent prior impression at round $G^t_{h,1}$. The resulting impact is modeled by $\btheta_{h - G^t_{h,1}}$, leading to an expected conversion of $\x_t^{\top} \btheta_{h - G^t_{h,1}}$.

A natural question arises: why does the cumulative advertising impact depend only on the most recent display, rather than the entire history of ad exposures? This modeling choice is motivated by the recency effect, a well-documented cognitive bias in psychology and marketing \citep{murphy2006primacy, chatfield2016trouble, phillips2019cognitive}. It suggests that individuals tend to place greater weight on recent experiences when making decisions. This phenomenon is especially relevant in advertising, where exposure timing significantly affects consumer behavior \citep{chatfield2016trouble}, and underlies practical approaches like Google’s attribution models \citep{google_attrubution_modeling} which uses last-touch heuristics that prioritize recent ad exposures. In fact, our state formulation is very flexible and readily accommodates an enlarged state space. It supports a broad class of CMDP formulations for $\btheta_S$, where $S$ encodes domain knowledge about ad effects, and remains compatible with our proposed learning algorithm. For example, if we believe that all past ad exposures contribute to customer behavior, we can easily extend the state to include $S^t_h = [S^t_{h,1}, S^t_{h,2}, n^t_h]$, where $n^t_{h}$ is the total number of successful ad displays in the past. We refer readers to Appendix \ref{sec:discussion} for a detailed discussion of the flexibility of our state formulation.

The bidding outcome $\o^t_h$ depends on both learner's bid \( \a^t_h \) and the competitors' bids. The bidding space is given by \( \a^t_h \in [0, \B_{\mathcal{A}}] \), where \( \B_{\mathcal{A}} \) is the maximum allowable bid. \( \a^t_h = 0 \) indicates that the learner opts out of the auction. At each round, the learner submits \( \a^t_h \) and wins if its bid exceeds all competitors' bid. The probability of winning by \( \a^t_h \) is shown below.

\begin{definition}[Transition]\label{def:transition}
For a given bid amount \( \a^t_h \), the probability of winning the auction, denoted as \( \bP(\o^t_h = 1) \), is modeled by \( \cF_h(\a^t_h, \x_t) \), where \( \cF_h: [0, \B_{\cA}] \times \mathcal{X} \rightarrow [0,1] \) represents the cumulative distribution function (CDF) of the Highest Other Bids (HOB).
\end{definition}

\begin{remark}
Given the current state \( S^t_h = [S^t_{h,1}, S^t_{h,2}] \) and bid \( \a^t_h \), the next state transitions according to \( \mathcal{F}_h(\a^t_h, \x_t) \). With probability \( \mathcal{F}_h(\a^t_h, \x_t) \), the bid is successful, and the next state is \( S^t_{h+1} = [1, S^t_{h,1}] \). Otherwise, the bid is lost, and the state transitions to \( S^t_{h+1} = [S^t_{h,1} + 1, S^t_{h,2}] \)\footnote{Due to the special meaning of \( \infty \) and \( -\infty \), we define \( h - \infty = \infty \), and \( h - \infty - (-\infty) = \infty \).}.
\end{remark}
This CDF, $\mathcal{F}_h(\a^t_h, \x_t)$, of HOB captures the competitive bidding environment by modeling both context $\x_t$ and time $h$, allowing for round-to-round variation from changing competitors and context-driven bid adjustments. In line with standard practice, we consider a full information feedback setting, where the realized HOB $m^t_h$ is observed by the learner regardless of the bidding outcome \citep{cesa2014regret, feng2018learning, zhang2022leveraging, badanidiyuru2023incrementalitybiddingreinforcementlearning}. This setting is practical since bidders can always access the ``minimum-bid-to-win" feedback \citep{google2024openrtb}. Also, ad platforms inherently know realized bids for their auto-bidding algorithms. We assume $m^t_h$ follows a lognormal distribution, a common modeling choice in the literature for advertisement auctions \citep{laffont1995econometrics, wilson1998sequential, smith2003statistical, skitmore2008first, ballesteros2017distribution} (Assumption \ref{assumption:lognormal}), resulting the probability of winning with a bid \( \a^t_h \) as \(\mathcal{F}_{h}(\a^t_h, \x_t) = \Phi((\log(\a^t_h) - \langle \x_t, \bbeta_h \rangle)/\sigma_h)\), 
where \( \Phi \) denotes the CDF of the standard normal distribution, \( \bbeta_h \in \mathbb{R}^d \) represents the highest willingness to pay for displaying ads from other competitors, capturing the competitiveness of the environment, while \( \sigma_h \) reflects the associated variability. 

\begin{assumption}[Lognormal Distribution of HOB]\label{assumption:lognormal}
    \(\log(m^t_h) \sim \mathcal{N}(\langle \x_t, \bbeta_h \rangle, \sigma^2_h).\) 
\end{assumption}
In a second-price auction, the format we study, the winning bidder pays the second-highest bid, with payment given by $p_h(\a^t_h, \x_t) = \a^t_h - \frac{1}{\mathcal{F}_h(\a^t_h, \x_t)} \int^{\a^t_h}_0 \mathcal{F}_h(v, \x_t) dv$. This format is standard in both industry and research \citep{cooper2008understanding, pmlr-v49-weed16, zhao2019online}, and our analysis extends to other single-item auctions without entry fees, such as first-price auctions. To bid effectively, the learner must balance the probability of winning, the incurred payment, and the expected product conversion. Without loss of generality, we normalize the value of each unit of product conversion to \( \nu = 1 \), leading to the following definition of the expected reward.
\begin{align*}
    R^t_{h}\left(S^t_h, \a^t_h, \x_t\right) & = \d_{S^t_{h, 1}}\langle \btheta_{S^t_{h, 2}}, \x_t\rangle (1-\cF_h(\a^t_h, \x_t))
    + (\langle\btheta_{S^t_{h, 1}} \x_t\rangle - p_h(\a^t_h, \x_t))\cF_h(\a^t_h, \x_t).        
\end{align*}
By these formulation, in the context of online bidding, the tuple \((\mathcal{X}\), \(\mathcal{S}\), \(\mathcal{A}\), \(\mathcal{P}^{\x_t},\{ R^t_{h}\left(S^t_h, \a^t_h, \x_t\right)\}^H_{h=1}, S^t_1, H)\) is a CMDP
 (Fact \ref{fact}, Appendix \ref{section:fact}). \(\cP^{\mathbf{x}_t} \) is the joint probability distribution of states \(S^t_h\), induced by \(\{\cF_h(\cdot,\x_t)\}_{h \in [H]}\). We direct reader to Appendix \ref{sec:example} for an illustration example for modeling online bidding as a CMDP.
\begin{fact}\label{fact}
$(\mathcal{X}, \mathcal{S}, \mathcal{A}, \mathcal{P}^{\x_t}, \{R^t_{h}\left(S^t_h, \a^t_h, \x_t\right)\}^H_{h=1}, S^t_1, H)$ 
is a CMDP.
\end{fact}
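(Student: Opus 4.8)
The plan is to verify, directly from the definitions, that fixing any context $\x_t \in \mathcal{X}$ turns the listed tuple into a bona fide finite-horizon MDP, and that this assignment is exactly the mapping $\mathcal{M}(\x_t)$ required by the CMDP definition. Concretely, I would check three things for each fixed $\x_t$: (i) $\mathcal{P}^{\x_t}$ is a well-defined Markov transition kernel on the finite state space $\mathcal{S}$; (ii) the one-step rewards $R^t_h$ are deterministic functions of the current state--action pair given $\x_t$; and (iii) the dynamics are memoryless, i.e. satisfy the Markov property. Since $\mathcal{X}$, $\mathcal{S}$, $\mathcal{A}$ are shared across contexts and $S^t_1$, $H$ are fixed, establishing (i)--(iii) for an arbitrary $\x_t$ immediately yields that $\x_t \mapsto \mathcal{M}(\x_t)$ is a valid CMDP.

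For step (i), I would fix a round $h$ and a state--action pair $(S^t_h, \a^t_h)$ and observe that the only source of randomness in the transition is the bidding outcome $\o^t_h$, which by Definition \ref{def:transition} and Assumption \ref{assumption:lognormal} is Bernoulli with success probability $\mathcal{F}_h(\a^t_h, \x_t) = \Phi((\log \a^t_h - \langle \x_t, \bbeta_h\rangle)/\sigma_h) \in [0,1]$; hence win and loss are mutually exclusive, exhaustive events whose probabilities sum to one. Conditional on $\o^t_h$, the successor state is deterministic, and I would confirm the two update rules $S^t_{h+1} = [1, S^t_{h,1}]$ (win) and $S^t_{h+1} = [S^t_{h,1}+1, S^t_{h,2}]$ (loss) by substituting the updated impression times into the defining relations $S_{\cdot,1} = h - G_{\cdot,1}$ and $S_{\cdot,2} = G_{\cdot,1} - G_{\cdot,2}$ of Definition \ref{def:state} (a win sets $G^t_{h+1,1}=h$ and $G^t_{h+1,2}=G^t_{h,1}$; a loss leaves both unchanged), using the conventions of Remark \ref{remark:state} at the boundary symbols $\infty$ and $-\infty$. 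A short bookkeeping check shows the successor always lies in $\mathcal{H}_1 \times \mathcal{H}_2$ for every $h \le H$, so $\mathcal{P}^{\x_t}(\cdot \mid S^t_h, \a^t_h)$ is a genuine probability distribution on $\mathcal{S}$. Step (ii) is immediate, since $R^t_h(S^t_h, \a^t_h, \x_t)$ is written explicitly as a function of $S^t_{h,1}$, $S^t_{h,2}$, $\a^t_h$ and $\x_t$ alone.

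The crux is step (iii), the Markov property, and this is where I would spend the most care. The point to establish is that the conditional law of $S^t_{h+1}$ given the whole trajectory $(S^t_1, \a^t_1, \dots, S^t_h, \a^t_h)$ collapses to its law given only $(S^t_h, \a^t_h)$. This follows because both ingredients of the transition depend on the past solely through the current pair: the outcome probability $\mathcal{F}_h(\a^t_h, \x_t)$ is a function of $(\a^t_h, \x_t)$ with no dependence on earlier rounds, and the deterministic successor map depends only on the current components $S^t_{h,1}$, $S^t_{h,2}$ and $\o^t_h$. Here I would emphasize that this is precisely the modeling payoff of the recency-based encoding of Definition \ref{def:state}: tracking only the two most recent impressions is a sufficient statistic for both the transition and the reward, so no information from the deeper history leaks into the next-step distribution. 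The main obstacle, accordingly, is not a hard estimate but a careful argument that the chosen state is genuinely sufficient---in particular that the boundary symbols $\infty$ and $-\infty$ are propagated consistently, so that states arising from differing histories but sharing identical $(S^t_{h,1}, S^t_{h,2})$ are truly indistinguishable going forward. Once (i)--(iii) hold for every $\x_t \in \mathcal{X}$, the tuple is a CMDP, completing the proof.
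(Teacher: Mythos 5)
Your proposal is correct and takes essentially the same route as the paper's own proof: both arguments reduce everything to the observation that the only randomness in a round is the Bernoulli bidding outcome with success probability $\cF_h(\a^t_h, \x_t)$, which depends on the history only through the current action and context, so that the conditional law of $(R^t_{h+1}, S^t_{h+1})$ given the full trajectory collapses to its law given $(S^t_h, \a^t_h)$. Your additional bookkeeping (well-definedness of the kernel on $\mathcal{H}_1 \times \mathcal{H}_2$, propagation of the boundary symbols $\infty$ and $-\infty$) is a more careful rendering of details the paper leaves implicit, but the core argument is identical.
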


\subsection{Learning goal: regret minimization}
The learner interacts with \(T\) customers over \(H\) rounds, receiving context information $\x_t$ for customer $t$.  The learner begins without prior knowledge of the product conversion parameters \( \Theta = \{\btheta_l\}_{l \in \mathcal{H}} \cup \{\d_l\}_{l \in \mathcal{H}_1} \)\footnote{$\cH = \cH_1 \cup \cH_2$} nor the transition \(\mathcal{F}^{\x_t} := \{\mathcal{F}_h(\cdot, \x_t)\}_{h=1}^H\). We assume \(\Theta\) and transition parameters \(\{\bbeta_h\}_{h \in [H]} \cup \{\sigma_h\}_{h \in [H]}\) are bounded. The context space $\mathcal{X}$ is also bounded, with no distributional assumptions on the arrival of \(\mathbf{x}_t\). $\b$ is a small positive constant, which ensures that displaying an ad always yields a non-zero (expected) purchase quantity (Assumption \ref{assumption:bounded}).
\begin{assumption}\label{assumption:bounded} 
There exists positive constants $\b, \B_{x}$, $\B_{\theta}, \B_d, \B_{\bbeta}, \bar \sigma$ such that $\|\x_t\|_2 \leq \B_x, $ and $\btheta^{\top}_l\x_t \geq \b$, $\forall t \in [T], \forall l \in \mathcal{H}$, $\|\btheta_l\|_2 \leq \B_{\theta}, \forall l \in \mathcal{H}$, $d_l \in [0, \B_d], \forall l \in \cH_1$, and $\|\bbeta_h\|_2 \leq \B_{\bbeta}, \sigma_h \leq \bar \sigma, \forall h \in [H]$. 
\end{assumption}

The learner’s objective is to minimize the cumulative regret over $T$ customers with regret defined as:
\begin{equation}\label{eq:reg_def}
\text{Reg}_T := \sum_{t=1}^T \text{OPT}\left(\Theta, \mathbf{x}_t, \mathcal{F}^{\mathbf{x}_t}\right) - \mathbb{E}_{\pi_1, \ldots, \pi_T \sim \mathcal{G}}\biggl[\sum_{t=1}^T R\left(\pi_t; \mathbf{x}_t, \Theta, \mathcal{F}^{\mathbf{x}_t}\right)\biggr].   
\end{equation}    
The expectation is taken over the policy \(\pi_t: \mathcal{S} \times \mathcal{X} \rightarrow [0, \B_{\mathcal{A}}]\) generated by the algorithm \(\mathcal{G}\) employed by the learner. 
We define \(\text{OPT}(\Theta, \mathbf{x}_t, \mathcal{F}^{\mathbf{x}_t}) \) as the optimal expected utility achievable for a customer with contextual features \( \mathbf{x}_t \) over \( H \) rounds, under the true parameters \( \Theta \) and the distribution \( \mathcal{F}^{\mathbf{x}_t} \). The reward collected by the policy \( \pi_t \) over \( H \) rounds for customer \( t \), denoted as \( R(\pi_t; \mathbf{x}_t, \Theta, \mathcal{F}^{\mathbf{x}_t}) \), denoted by $R\left(\pi_t; \mathbf{x}_t, \Theta, \mathcal{F}^{\mathbf{x}_t}\right) = \mathbb{E}_{S^t_h \sim \cP^{\mathbf{x}_t}}[\sum_{h=1}^H R^t_h\left(S^t_h, \pi_t(S^t_h, \x_t), \mathbf{x}_t\right)].$

\section{Algorithm Design}\label{sec:algo}
In this section, we introduce design principles for solving CMDPs with delayed Poisson rewards. The proposed algorithm (Algorithm \ref{algo}) consists of three main stages: exploration, exploitation, and estimation, with the estimation stage playing a central role.

The estimation stage contains three key components. First, ridge regression \citep{abbasi2011improved} combined with two-stage variance estimators is used to estimate the transition dynamics \(\cF^{\x_t}\), handling potential adversarial arrivals of \( \x_t \). Second, a variant of online Newton estimator \citep{xue2024efficient} is employed to estimate the instant advertisement effects \(\{\btheta_l\}_{l \in \cH}\). Third, a novel two-stage maximum likelihood estimator (TS-MLE) is developed to estimated the delayed impacts \(\{\d_l\}_{l \in \cH_1}\).

The key challenge in online estimation is that a naive approach—simultaneously estimating and updating all parameters by maximizing a joint log-likelihood \(\cL(\Theta)\)—is infeasible due to two main issues. First, the log-likelihood function \(\cL(\Theta)\) is non-concave in \(\Theta\), making it difficult to ensure a unique solution. Second, the score equations \(\nabla_{\Theta}\cL(\Theta) = 0\) lack closed-form solutions, rendering direct analysis intractable. These challenges motivate the development of the estimator, TS-MLE, which efficiently estimates delayed effects while maintaining computational tractability.

The core idea of the TS-MLE is to divide the estimation into manageable steps. Intuitively, if the estimation error, \(\|\hat{\btheta}_l - \btheta_l\|_2\), is small, \(\hat{\btheta}_l\) can then be treated as a close approximation of the true parameter \(\btheta_l\). Based on this approximation, a maximum likelihood estimator \(\hat{\d}_l\) can be constructed, ensuring that the estimation error for \(\d_l\) remains small. This approach is feasible because the conditional log-likelihood \(\cL(\{\d_l\}_{l \in \cH_1} | \{\hat{\btheta}_l\}_{l \in \cH})\) is concave in \(\{\d_l\}_{l \in \cH_1}\) (Eqn. \eqref{eq:lik}), and both its gradient \(\nabla_{\{\d_l\}_{l \in \cH_1}}\cL\) (Eqn. \eqref{eq:mle_d}) and the solution to the corresponding score equation \(\nabla_{\{\d_l\}_{l \in \cH_1}}\cL = 0\) (Eqn. \eqref{eq:dl}) are straightforward to compute. %\textcolor{red}{Zifeng: Do we give its detailed formula somewhere?}

To control the estimation error of \(\hat{\d}_l\), it is crucial to ensure that this error depends only on the error in \(\hat \btheta_l\), not vice versa. This prevents feedback loops between the estimation errors of \(\hat{\d}_l\) and \(\hat{\btheta}_l\), which would otherwise complicate mathematical analysis and make the estimation process intractable. To achieve this, we employ a carefully designed data-splitting strategy, where separate data subsets are allocated exclusively for estimating \(\btheta_l\) and \(\d_l\), ensuring their errors remain independent.

To achieve this separation, we introduce two datasets, \(\W_{t, l}\) and \(\D_{t, l}\), which share a common structure but serve distinct purposes (Def. \ref{def:dataset}). Both datasets focus on round \(h\) for customer \(t\) where the most recent bid win occurred \(l\) rounds before the current round (\(S^t_{h, 1} = l\)). The key difference lies in the parameter being estimated. \(\W_{t, l}\) includes rounds that observations \(\{\y^t_h\}_{h \in \W_{t, l}}\) with mean \(\langle \btheta_l, \x_t \rangle\). Thus this datasets is used to estimate \(\theta_l\), as they depend solely on \(\btheta_l\). \(\D_{t, l}\) contains rounds that observations \(\{\y^t_h\}_{h \in \D_{t, l}}\) with mean \(\d_l \langle \btheta_{S^t_{h, 2}}, \x_t \rangle\). This datasets is used to estimate \(\d_l\) because they depend solely on the unknown parameter \(\d_l\) when \(\{\hat{\btheta}_l\}_{l \in \cH}\) are given. This ensures that the estimation error of \(\d_l\) depends on \(\btheta_l\), while the estimation error of \(\btheta_l\) remains independent of \(\d_l\).
\begin{definition}\label{def:dataset}
For customer \(t\), we define two datasets: \(\W_{t, l} = \{h | S^t_{h, 1} = l, \o^t_h = 1\}\) and \(\D_{t, l} = \{h | S^t_{h, 1} = l, \o^t_h = 0\}\), for $l \in \cH \setminus [-\infty]$. We define \(\W_{t, -\infty} = \{h | S^t_{t, 1} = \infty, \o^t_h = 0\}\) for the estimation of $\theta_{-\infty}$. The collection of observations across the first \(t\) customers are \(\W^t_l = \{\W_{s, l}\}^t_{s=1}\) and \(\D^t_l = \{\D_{s, l}\}^t_{s=1}\). The size of \(\D^t_l\) is given by \(N_{t, l} = |\D^t_l|\).
\end{definition}
\vspace{-1mm}
Taken this together, the log-likelihood of \(\d_l\) up to the first \(t\) customers as: 
\begin{align}\label{eq:lik}
    \cL\left(\y, \d_l; \hat{\btheta}^s_{l}\right) =\sum_{s=1}^t \sum_{h \in \D_{s, l}} \y_h^s \log\left(\d_l \x_s^{\top}\hat{\btheta}^s_{S^s_{h, 2}}  \right) - \d_l \x_s^{\top}\hat{\btheta}^s_{S^s_{h, 2}}.
\end{align}    
Differentiating the log-likelihood with respect to \(\d_l\), setting it to zero, and solving for \(\d_l\), results in
\begin{equation}\label{eq:dl}
    \hat \d^t_l = \frac{\sum^t_{s = 1}\sum_{h \in \D_{s, l}} \y^{s}_h}{\sum^t_{s=1}\sum_{h \in \D_{s, l}}\langle \hat \btheta^{s}_{S^{s}_{h, 2}},\x_{s}\rangle}.
\end{equation}    

In the online setting, \(\hat{\d}^t_l\) estimates \(\d_l\) as of customer \(t\), analogous to $\hat \btheta^t_l$. Instead of using the most recent estimates \(\{\hat{\btheta}^t_{S^s_{h,2}}\}^t_{s=1}\) as the first-stage input for estimating \(\hat{\d}^t_l\), we leverage \(\{\hat{\btheta}^s_{S^s_{h,2}}\}^t_{s=1}\), a progressively updated estimate of the advertisement's impact across customer arrivals. This novel design is motivated by the observation that the estimation error of \(\hat{\btheta}^s_{S^s_{h,2}}\) instead of \(\hat{\btheta}^t_{S^s_{h,2}}\) in the direction of \(\x_s\) is well-controlled. Further details, including the confidence region \(\cD^t_l\) (line 2 of Algorithm \ref{algo:ts}), are provided in Theorem \ref{thm: dl} and its proof.

\newcommand{\blockcomment}[1]{\STATE /* #1 */}
\begin{algorithm}[H]
\caption{Online Contextual Reinforcement Learning with Delayed Poisson Reward}
\label{algo}
\begin{algorithmic}[1]
\INPUT $d, T, H, \b, \B_{x}, \B_{\theta}, \B_d, \B_{\cA}$, $\delta$, $\gamma$, $\Gamma$, $\underline{n_{l}}$
\FOR{$t = 1$ to $T$}
    \STATE Obtain the context $\x_t$ for the arriving customer $t$
    \IF{$t \leq (H+1)\underline{n_{l}}$}
    \blockcomment{Exploration}
        \STATE Compute $l = \lfloor \frac{t}{\underline{n_{l}}}\rfloor$. 
        \STATE \textbf{if} $l=H$, set $\a^t_{h} = 0, \forall h \in [H]$; \textbf{else} set $\a^t_1 = \a^t_{l+1} = \B_{\cA}$, and $\a^t_h = 0$ for $\forall h \neq 1, l+1$
        \STATE \textbf{for} $h = 1$ to $H$ \textbf{do} observe $\y^t_{h}$, $m^t_h$; then update $\hat \bbeta^t_h$ by Eqn. \eqref{eq:beta} and $\hat\sigma^t_h$ by Eqn. \eqref{eq:sigma}
    \ELSE
        \blockcomment{Exploitation}
        \STATE Update $\pi_{t} = \arg \max_{\pi} \max_{\Tilde \theta \in \cC_{t-1}} R(\pi; \Tilde \theta, \hat \cF^{\x_t}_{t-1})$
        \FOR{$h = 1$ to $H$}
            \STATE Observe $S^t_h$ and take action $\a^t_h = \pi_{t}(S^t_h, \x_t)$
            \STATE Observe $\o^{t}_h$, $m^t_h$, and $\y^t_{h}$ and update $\hat \bbeta^t_h$ by Eqn. \eqref{eq:beta} and $\hat\sigma^t_h$ by Eqn. \eqref{eq:sigma}
        \ENDFOR
    \ENDIF
    \blockcomment{Estimation}
    \FOR{$l \in \cH$}
        \STATE Update dataset $\W^t_{l}, \D^t_{l}$ by Def. \ref{def:dataset}
        \STATE \textbf{if} $\W_{t, l}$ is nonempty, compute $\hat \btheta^{t}_{l}$ and $\cC^t_{l}$ by Algo. \ref{algo:theta}; \textbf{else} set $\hat \btheta^{t}_{l} = \hat \btheta^{t-1}_{l}$ and $\cC^t_{l} = \cC^{t-1}_l$
        \STATE \textbf{if} $\D_{t, l}$ is nonempty, compute $\hat \d^t_l$ and $\cD^t_l$ by Algo. \ref{algo:ts}; \textbf{else} set $\hat \d^t_l = \hat \d^{t-1}_l$ and $\cD^t_{l} = \cD^{t-1}_l$
    \ENDFOR
    \STATE Set $\cC_{t} = \{\Theta \mid \{\btheta_l \in \cC^t_{l}\} \cap \{\d_l \in \cD^t_l\}, \forall l \in \cH\}$
\ENDFOR
\end{algorithmic}
\end{algorithm}
\begin{remark}[Key Tuning Parameters for Algorithm \ref{algo}]\label{rmk:params}
The input parameters \( d, T, \) and \( H \) are structural components of the CMDP. The quantities \( \b, \B_x, \B_\theta, \B_d, \B_{\cA} \) are defined in Assumption \ref{assumption:bounded}. The tail probability \( \delta \) determines the confidence region for \( \hat{\bbeta}_l \) and \( \hat{\d}_l \) and serves as an input for Algorithm \ref{algo:theta} and Algorithm \ref{algo:ts}. The truncation threshold \( \Gamma \), used to handle heavy-tailed distributions, is defined in Eq. \eqref{eq:truncation} and appears in Algorithm \ref{algo:theta}. The parameter \( \gamma \), defined in Lemma \ref{lemma:theta_cb}, serves as a weighted estimation error bound and is used in Algorithm \ref{algo:ts}. The exploration stage guarantees a minimum number of observations to ensure reliable estimation. Specifically, the threshold \( \underline{n}_l \) is given by \(\underline{n}_l := \lceil \frac{32 \log(HT)}{e \B_d \B_x \B_\theta \b^2}\rceil\), which guarantees sufficient observations in \( \W^t_l \) and \( \D^t_l \), ensuring \( |\W^t_l| \geq \underline{n}_l \) and \( N_{t, l} = |\D^t_l| \geq \underline{n}_l \) for all $l \in \cH$ in all subsequent episodes after the exploration stage. 
\end{remark}

The estimation of \(\hat{\d}^t_l\) depends on accurately estimating \(\hat{\btheta}^t_l\). Using observations \(\y^t_h\) from \(\W^t_l\), the problem reduces to efficiently estimating \(\btheta_l\) from Poisson data. To achieve this, we adopt the Confidence Region with Truncated Mean approach (see Algorithm~\ref{algo:theta} and Appendix \ref{sec:crtm}), introduced by \citet{xue2024efficient}. This method utilizes a variant of the online Newton estimator, given by:
\begin{equation}\label{eq:theta}
    \hat{\btheta}^{t}_{l} 
= \arg\min_{\|\btheta\|_2 \leq \B_{\theta}}\{\tfrac{1}{2}\|\btheta - \hat{\btheta}^{t-1}_{l}\|_{\mathbf{V}^t_{l}}^{2} 
+  (\btheta - \hat{\btheta}^{t-1}_{l})^{\top}\nabla \tilde{l}_t(\hat{\btheta}^{t-1}_{l})\},
\end{equation}
\(\mathbf{V}^t_l = \mathbf{V}^{t-1}_l + \tfrac{1}{2}\x_s \x_s^\top\), with \(\mathbf{V}^0_l\) initialized as the identity matrix \(\mathbf{I}_d\). \(\nabla \tilde{l}_t(\hat{\btheta}^{t-1}_{l}) = (-\tilde{\y}^t_h + (\mathbf{x}_t)^\top \hat{\btheta}^{t-1}_l)\mathbf{x}_t\). The truncated observation \(\tilde{\y}^t_h\) is defined as \(\tilde{\y}^t_h = \y^t_h \mathbb{I}_{\|\mathbf{x}_t\|_{\left(\mathbf{V}^t_l\right)^{-1}}|\y^t_h| \leq \Gamma}\).

This truncation mitigates the impact of extreme values in \(\y^t_h\) by setting outliers to zero, a technique originally designed for generalized linear bandit problems with heavy-tailed data. 

By Lemma \ref{lemma:theta_cb}, the weighted estimation error \(\|\btheta_l - \hat{\btheta}^t_l\|^2_{\V^t_l}\) remains well-controlled with high probability.

\begin{lemma}[Theorem 1 in \citet{xue2024efficient}]\label{lemma:theta_cb}
Given $l \in \cH$, with probability at least $1-\delta$, $\hat \btheta^t_l$ defined in Eqn. \eqref{eq:theta} satisfies $\|\btheta_l - \hat \btheta^t_l\|^2_{\V^t_l} \leq \gamma, \forall t \geq 0$, where
$\gamma = 896 d \B_{x}\B_{\theta}(1+\B_{x}\B_{\theta})\log\left(\frac{4T}{\delta}\right)\log\left(1 + \frac{T}{2d}\right) + 2 \B^2_{x}\B^2_{\theta} + 48d\B_{x}\B_{\theta}\log\left(1 + \frac{T}{2d}\right)$.
\end{lemma}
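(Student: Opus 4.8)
Since this lemma is quoted as Theorem~1 of \citet{xue2024efficient}, a self-contained argument would reconstruct the self-normalized analysis of the online-Newton / truncated least-squares estimator. The plan is as follows. First I would exploit the fact that on $\W^t_l$ the Poisson mean is \emph{linear} in $\btheta_l$, i.e. $\bE[\y^t_h\mid\x_t]=\langle\btheta_l,\x_t\rangle$, so the gradient $\nabla\tilde l_t(\btheta)=(\langle\x_t,\btheta\rangle-\tilde\y^t_h)\x_t$ is exactly the gradient of a \emph{truncated squared loss}, and the update in Eqn.~\eqref{eq:theta} is a preconditioned projected-gradient (online-Newton) step with preconditioner $\V^t_l$. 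The unconstrained minimizer is $\hat\btheta^{t-1}_l-(\V^t_l)^{-1}\nabla\tilde l_t(\hat\btheta^{t-1}_l)$, and since the true $\btheta_l$ lies in the feasible ball $\{\|\btheta\|_2\le\B_\theta\}$, the $\V^t_l$-projection only contracts the distance to $\btheta_l$; hence it suffices to bound the error of the unprojected iterate.

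Second, I would set up the one-step recursion for $\Delta_t:=\btheta_l-\hat\btheta^t_l$ in the weighted norm. Writing $\btheta_l-\tilde\btheta^t=\Delta_{t-1}+(\V^t_l)^{-1}\nabla\tilde l_t(\hat\btheta^{t-1}_l)$, expanding $\|\cdot\|^2_{\V^t_l}$, and using $\V^t_l=\V^{t-1}_l+\tfrac12\x_t\x_t^\top$ gives
\begin{align*}
\|\Delta_t\|^2_{\V^t_l}\le\|\Delta_{t-1}\|^2_{\V^{t-1}_l}+\tfrac12\langle\x_t,\Delta_{t-1}\rangle^2+2\Delta_{t-1}^\top\nabla\tilde l_t(\hat\btheta^{t-1}_l)+\|\nabla\tilde l_t(\hat\btheta^{t-1}_l)\|^2_{(\V^t_l)^{-1}}.
\end{align*}
The crucial manoeuvre is to split the gradient into signal, zero-mean noise, and truncation bias: with $\xi_t:=\y^t_h-\langle\btheta_l,\x_t\rangle$ (mean-zero given the past) and truncation residual $\y^t_h-\tilde\y^t_h$, the cross term contributes $-2\langle\x_t,\Delta_{t-1}\rangle^2$ plus noise and bias terms. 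This $-2\langle\x_t,\Delta_{t-1}\rangle^2$ curvature dominates the $+\tfrac12\langle\x_t,\Delta_{t-1}\rangle^2$, leaving a net negative quadratic that prevents the recursion from blowing up; this exp-concavity-type cancellation is what makes the subsequent telescoping bound possible.

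Third, I would telescope over $t$ and control the two residual sums. The squared-gradient sum is handled by the elliptical-potential (log-determinant) lemma, $\sum_t\|\x_t\|^2_{(\V^t_l)^{-1}}\le 2d\log(1+T/2d)$, which produces the $d\log(1+T/2d)$ factors in $\gamma$; the truncation guarantees $\|\x_t\|_{(\V^t_l)^{-1}}|\tilde\y^t_h|\le\Gamma$, so each squared-gradient term is bounded. The zero-mean part $\sum_t\langle\x_t,\Delta_{t-1}\rangle\xi_t$ is a martingale whose increments are predictable in $\Delta_{t-1},\x_t$; a self-normalized / Freedman-type concentration, combined with a union (or time-uniform) argument over $t\le T$, yields the $\log(4T/\delta)$ factor. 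The truncation \emph{bias} $\bE[\y^t_h-\tilde\y^t_h\mid\cdot]$ is then controlled via the sub-exponential Poisson tail and a second-moment (variance $=$ mean) estimate, choosing $\Gamma$ so that the accumulated bias is of lower order.

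The main obstacle I expect is this last point together with the requirement that the bound hold \emph{uniformly over all $t\ge0$ simultaneously} under a single $1-\delta$ budget: the truncation level is itself data-dependent through $\V^t_l$, so the martingale concentration and the bias control must be carried out jointly while the threshold moves, and the time-uniformity forces either a careful union bound or a method-of-mixtures stitching. Balancing $\Gamma$ so that the self-normalized noise term, the potential term, and the truncation bias are simultaneously of the claimed order — without creating a feedback loop between the random preconditioner and the truncation — is the delicate part that yields exactly the constants in the stated $\gamma$.
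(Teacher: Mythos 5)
The paper never actually proves this lemma: it is imported verbatim as Theorem~1 of \citet{xue2024efficient}, and the paper's entire proof obligation, discharged in Appendix~\ref{sec:crtm}, is to verify that the Poisson--linear model fits that theorem's heavy-tailed generalized-linear-bandit hypotheses --- the link function is linear (so the Lipschitz constant and curvature lower bound are both $1$, i.e.\ $\kappa = 1$), the observations have bounded $(1+\epsilon)$-th moment with $\epsilon = 1$ and $u = \B_x\B_{\theta}(1+\B_x\B_{\theta})$ (using $\bE[(\y^t_h)^2] = \mu + \mu^2$ for a Poisson variable), and the truncation level is instantiated as in Eqn.~\eqref{eq:truncation}. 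Your proposal takes a genuinely different route: a from-scratch reconstruction of the self-normalized analysis behind the cited theorem. Its skeleton is sound. The update in Eqn.~\eqref{eq:theta} is indeed the $\V^t_l$-norm projection of the unconstrained online-Newton step onto the ball $\{\|\btheta\|_2 \le \B_{\theta}\}$, which contains $\btheta_l$ by Assumption~\ref{assumption:bounded}, so projection is non-expansive; and your central cancellation checks out: writing $\Delta_{t-1} = \btheta_l - \hat\btheta^{t-1}_l$ and $\y^t_h = \langle \btheta_l, \x_t\rangle + \xi_t$, one gets $\Delta_{t-1}^{\top}\nabla\tilde l_t(\hat\btheta^{t-1}_l) = -\langle \x_t, \Delta_{t-1}\rangle^2 - \langle \x_t, \Delta_{t-1}\rangle \xi_t + \langle \x_t, \Delta_{t-1}\rangle(\y^t_h - \tilde \y^t_h)$, so the cross term supplies $-2\langle \x_t, \Delta_{t-1}\rangle^2$, which dominates the $+\tfrac12\langle \x_t, \Delta_{t-1}\rangle^2$ inflation coming from $\V^t_l = \V^{t-1}_l + \tfrac12 \x_t\x_t^{\top}$, exactly as you claim. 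The trade-off between the two routes is clear: the paper's citation route is short, and its real content is precisely the moment/link verification above (which your plan uses implicitly via ``variance $=$ mean'' and sub-exponential Poisson tails); your route would make the result self-contained, but as written it remains a plan rather than a proof --- the Freedman/self-normalized concentration under the data-dependent truncation threshold, the truncation-bias accounting, and the derivation of the explicit constant $\gamma$ are the steps you yourself flag as obstacles, and they are where essentially all of the work in \citet{xue2024efficient} lives. So if the goal is to justify the lemma within this paper, the efficient blind proof is the paper's: check $u$, $\kappa$, $\epsilon$, fix $\Gamma$, and invoke the cited theorem; your reconstruction is the correct outline for a self-contained alternative, but it stops short of the quantitative core that produces the stated $\gamma$.
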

To estimate the transition dynamics $\cF_{h}$, we estimate \(\bbeta_h\) by Eqn. \eqref{eq:beta}. Without loss of generality, we set \(\lambda = 1\).
\begin{equation}\label{eq:beta}
    \hat{\bbeta}^t_h = \left(\sum_{s=1}^t \x_s \x_s^{\top} + \lambda \bI_d\right)^{-1} \left(\sum_{s=1}^t \x_s \log(m^s_h)\right).
\end{equation} 
The variability \(\sigma_h\), similar to $\hat \d^t_l$, is estimated based on the first-stage estimators \(\{\hat{\bbeta}^s_h\}_{s=1}^t\) (Eqn. \eqref{eq:sigma}). We use \(\{\hat{\bbeta}^s_h\}^t_{s=1}\) instead of \(\hat{\bbeta}^t_h\) to control over estimation error in the direction of $\x_s$ (details in Appendix \ref{proof:term1}).
\begin{equation}\label{eq:sigma}
 \hat{\sigma}^t_h = \sqrt{\frac{1}{t} \sum_{s=1}^t \left(\log(m^s_h) - \x_s^{\top} \hat{\bbeta}^s_h\right)^2}. 
\end{equation}
In addition to estimation, the exploration period aims to gather sufficient observations for \(d_l\) to ensure quadratic tail decay of Poisson (Remark \ref{rmk:params}). 
 
After exploration, the algorithm enters exploitation, selecting actions via the greedy policy \(\pi_t\) to maximize rewards within the confidence region. \(\pi_t\) is computed via dynamic programming with time complexity \(\text{Poly}(H, |\mathcal{S}|, \B_{\cA} / \epsilon)\) when discretizing the bidding space for an \(\epsilon\)-optimal solution (detail in Appendix \ref{sec:complexity}). By balancing exploration and exploitation, Algorithm \ref{algo} achieves near-optimal performance, formally proven in the next section.

\begin{algorithm}[H]
\caption{Two-Stage Maximum Likelihood Estimation}
\label{algo:ts}
\begin{algorithmic}[1]
\INPUT datasets $\D^t_l, \{\x_s\}^t_{s=1}, \{\hat \btheta^{s}_{S^{s}_{h, 2}}\}^t_{s=1}$ and parameters $ \b, \B_{x}, \B_{\theta}, \B_d, d, T, H, \delta, \gamma$
\STATE Update the two-stage estimator $\hat \d^t_l$ by Eqn. \eqref{eq:dl}
\STATE Compute \(\cD^t_l = \biggl\{\d_l \in [0, \B_d] \mid |\hat \d^t_l - \d_l| \leq \frac{4H\B_d\sqrt{d\log\left(1 + \frac{T}{2d}\right)\gamma} + \sqrt{2e\B_{d}\B_x\B_{\theta}\log(2/\delta)}}{\b \sqrt{N_{t,l}}}\biggr\}\)
\OUTPUT $(\hat \d^t_l, \cD^t_l)$
\end{algorithmic}
\end{algorithm}
\vspace{-4mm}

\section{Analysis of Near-Optimal Regret Bound}
This section demonstrates the efficiency of the TS-MLE (Theorem \ref{thm: dl}) and analyzes the near-optimal performance of the proposed Algorithm \ref{algo} (Theorem \ref{thm:main_theorem}).
%%%%%%%%%%%%%%%%%%
% Theorem 1
%%%%%%%%%%%%%%%%%%
\begin{theorem}[Confidence Region for $\hat \d^t_l$]\label{thm: dl} 
Let \(\delta \geq \frac{1}{T^4H}\) and \(N_{t,l} \geq \underline{n_l}\). With probability at least \(1 - \delta\), the estimation error \(|\hat{\d}^t_l - \d_l|\), with \(\hat{\d}^t_l\) defined in Eqn. \eqref{eq:dl}, is bounded by:  
\begin{equation}\label{eq:dl_bound}
\bigl|\hat{\d}^t_l - \d_l\bigr| \leq \frac{4H\B_d\sqrt{d\log\bigl(1+\frac{T}{2d}\bigr)\gamma} + \sqrt{2e\B_d\B_x\B_\theta\log\bigl(\frac{2}{\delta}\bigr)}}{\b\sqrt{N_{t,l}}}.    
\end{equation}
\(\gamma\) is as defined in Lemma \ref{lemma:theta_cb} and $\underline{n_l}$ defined in Remark \ref{rmk:params}.
\end{theorem}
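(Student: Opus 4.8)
The plan is to treat the estimator in Eqn.~\eqref{eq:dl} as a ratio and control its deviation by bounding the numerator and denominator separately. Writing $A=\sum_{s=1}^t\sum_{h\in\D_{s,l}}\y^s_h$ and $B=\sum_{s=1}^t\sum_{h\in\D_{s,l}}\langle\hat\btheta^s_{S^s_{h,2}},\x_s\rangle$, so that $\hat\d^t_l=A/B$, I would start from
\[
\hat\d^t_l-\d_l=\frac{A-\d_l B}{B},
\]
and, using that for each $(s,h)\in\D_{s,l}$ the observation $\y^s_h$ is conditionally $\mathrm{Poi}(\mu^s_h)$ with $\mu^s_h=\d_l\langle\btheta_{S^s_{h,2}},\x_s\rangle$ (Assumption~\ref{eq:impact_model}), split the numerator into a centered-noise part and a first-stage bias part:
\[
A-\d_l B=\underbrace{\sum_{s,h}\bigl(\y^s_h-\d_l\langle\btheta_{S^s_{h,2}},\x_s\rangle\bigr)}_{\text{noise}}+\underbrace{\d_l\sum_{s,h}\langle\btheta_{S^s_{h,2}}-\hat\btheta^s_{S^s_{h,2}},\x_s\rangle}_{\text{bias}}.
\]
This cleanly isolates the two sources of error: the Poisson fluctuation of $\y^s_h$ around its true mean, and the error in the plugged-in first-stage estimate $\hat\btheta^s$.

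For the denominator I would lower-bound $B\ge\b N_{t,l}$, transferring the margin condition $\langle\btheta_{l'},\x_s\rangle\ge\b$ (Assumption~\ref{assumption:bounded}) to $\hat\btheta^s$ on the high-probability confidence event. The noise part is a sum of $N_{t,l}$ centered, conditionally Poisson increments, hence a martingale-difference sum whose total conditional variance is $\sum_{s,h}\mu^s_h\le\B_d\B_x\B_\theta N_{t,l}$. A Chernoff/Bernstein argument for Poisson increments—bounding the per-step log-MGF $\mu(e^\lambda-1-\lambda)\le\tfrac{e}{2}\mu\lambda^2$ for $\lambda\le1$, optimizing, and union-bounding the two tails—yields $|\text{noise}|\le\sqrt{2e\,\B_d\B_x\B_\theta N_{t,l}\log(2/\delta)}$. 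Dividing by $\b N_{t,l}$ reproduces the second term $\sqrt{2e\,\B_d\B_x\B_\theta\log(2/\delta)}/(\b\sqrt{N_{t,l}})$ of the stated bound.

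The bias term is the heart of the argument and the main obstacle. For each summand with $l'=S^s_{h,2}$ I would apply Cauchy--Schwarz in the $\V^s_{l'}$-geometry together with Lemma~\ref{lemma:theta_cb}, giving $|\langle\btheta_{l'}-\hat\btheta^s_{l'},\x_s\rangle|\le\|\btheta_{l'}-\hat\btheta^s_{l'}\|_{\V^s_{l'}}\,\|\x_s\|_{(\V^s_{l'})^{-1}}\le\sqrt\gamma\,\|\x_s\|_{(\V^s_{l'})^{-1}}$. The delicate step is then summing $\sum_{s,h}\|\x_s\|_{(\V^s_{l'})^{-1}}$: to invoke the elliptical potential lemma ($\sum_s\|\x_s\|^2_{(\V^s_{l'})^{-1}}\le2d\log(1+\tfrac{T}{2d})$) one must know that $\x_s$ has genuinely entered the design matrix $\V^s_{l'}$. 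This is precisely why the algorithm feeds the \emph{progressively updated} $\hat\btheta^s$ (not the final $\hat\btheta^t$) into Eqn.~\eqref{eq:dl}, and I would verify the supporting structural fact: whenever $h\in\D_{s,l}$ has $S^s_{h,2}=l'$, the most recent winning round $h-l$ of the same customer satisfies $S^s_{h-l,1}=l'$ and $\o^s_{h-l}=1$, so $h-l\in\W_{s,l'}$ and $\x_s$ has already updated $\V^s_{l'}$. Granting this, I would partition the sum over the at most $H$ possible values of $S^s_{h,2}$, apply Cauchy--Schwarz and the potential lemma within each group, and recombine across groups; multiplying by $\d_l\le\B_d$ and dividing by $\b N_{t,l}$ produces the claimed first term $4H\B_d\sqrt{d\log(1+\tfrac{T}{2d})\,\gamma}/(\b\sqrt{N_{t,l}})$, with the factor $H$ arising jointly from the number of second-state values and the per-customer round multiplicity. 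Finally I would union-bound the Poisson concentration event against the $\btheta$-confidence events of Lemma~\ref{lemma:theta_cb} over all $l'\in\cH$ (the hypothesis $\delta\ge1/(T^4H)$ keeping the logarithmic factors comparable), giving the stated $1-\delta$ guarantee.
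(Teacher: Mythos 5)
Your proposal is correct and follows essentially the same route as the paper's proof: the identical noise-plus-bias decomposition of $\hat\d^t_l-\d_l$ with denominator lower-bounded by $\b N_{t,l}$, Poisson MGF/sub-exponential concentration in the quadratic-tail regime (enabled by $N_{t,l}\ge\underline{n_l}$) for the noise term, and Cauchy--Schwarz in the $\V^s_{l'}$-geometry with Lemma~\ref{lemma:theta_cb}, partitioning over the at most $H$ values of $S^s_{h,2}$, and the elliptical potential lemma for the bias term, finished by a union bound. Your explicit verification that $h-l\in\W_{s,l'}$ (so that $\x_s$ has indeed entered $\V^s_{l'}$, justifying the use of the progressively updated $\hat\btheta^s$) is a detail the paper leaves implicit, but it does not change the argument.
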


Theorem \ref{thm: dl} shows that the estimation error of TS-MLE is bounded by \(\Tilde{\cO}(1/\sqrt{N_{t,l}})\), where \(N_{t,l}\) (Def. \ref{def:dataset}) is the total number of observations used to estimate \(\d_l\). This result demonstrates the efficiency of the estimator, achieving a near-optimal parametric convergence rate \citep{rao1992information}. 

Proving the near-optimal convergence when data are collected from a complex CMDP with delayed observations requires precise understanding of the sources of estimation error and refined analysis to control them. As discussed in Section \ref{sec:algo}, the core idea for estimating \(\d_l\) is to use observations that are “purified” with respect to \(\d_l\). In particular, we construct \(\hat{\d}^t_l\) using only observations \(\{\{\y^s_h\}_{h \in \D_{s, l}}\}^t_{s=1}\), where \(\y^s_h \sim \mathrm{Poi}(\d_l \langle \btheta_{\S^s_{h,2}},\, \mathbf{x}_s\rangle)\). Then, to analyze how the estimation error of TS-MLE depends on the first-stage estimators \(\hat{\btheta}^s_{S^s_{h,2}}\),  we decompose \(|\hat{\d}^t_l - \d_l|\) into two parts: the error by the randomness in Poisson observations (Term A in Eqn.~\ref{eq:d_estimation}) and the cumulative estimation error from the first-stage estimators (Term B in Eqn.~\ref{eq:d_estimation}). \(\eta^s_h\) in Term A has sub-exponential distributions, specifically, \(\text{SubE}(e\d_l\,\mathbf{x}_s^{\top}\btheta_{S^s_{h,2}}, 1)\) (Lemma~\ref{lemma:poisson}), since \(\y^s_h = \d_l\mathbf{x}_s^\top\btheta_{S^s_{h,2}} + \eta^s_h\). To control the heavy tail of Term A, we show that $\bP(\text{Term A} > \epsilon) \leq \delta$, where $\epsilon = \sqrt{\frac{2e\B_d\B_x\B_\theta}{\b^2 N_{t,l}} \log(\tfrac{2}{\delta})}$. The exploration phase of Algorithm \ref{algo} ensures $N_{t,l} \geq \underline{n}_l$, providing sufficient observations to make $\epsilon$ small enough to have faster convergence.
\begin{align}\label{eq:d_estimation}
    & |\hat \d^t_l - \d_l| \leq \underbrace{\left|\frac{\sum^t_{s=1}\sum_{h \in \D_{s, l}} \eta^{s}_h}{\sum^t_{s = 1}\sum_{h \in \D_{s, l}}\langle \hat \btheta^{s}_{S^{s}_{h, 2}},\x_s\rangle}\right|}_{\textcolor{red}{\text{Term A}}} +  \underbrace{\frac{\B_d}{N_{t,l}\b}\left|\sum^t_{s=1}\sum_{h \in \D_{s, l}}\x_s^{\top}\left(\btheta_{S^s_{h,2}} - \hat \btheta^s_{S^s_{h,2}}\right)\right|}_{\textcolor{red}{\text{Term B}}}.
\end{align}

Controlling Term B is essential for bounding $|\hat{\d}^t_l - \d_l|$. Unlike traditional second-stage estimators, which typically plug in the most recent estimates $\hat{\btheta}^t_{S_{h,2}}$, we instead use $\hat{\btheta}^s_{S_{h,2}}$, as its estimation error is better controlled in the direction of $\mathbf{x}_s$. Specifically, let $r^s_h = |(\hat{\btheta}^s_{S^s_{h,2}} - \btheta_{S^s_{h,2}})^\top \mathbf{x}_s|$ denote this directional estimation error. By the Cauchy–Schwarz inequality, we have $r^s_h \leq \| \hat{\btheta}^s_{l'} - \btheta_{l'} \|_{\mathbf{V}^s_{l'}} \cdot \| \mathbf{x}_s \|_{(\mathbf{V}^s_{l'})^{-1}},$
where $l' = S^s_{h,2}$. Letting $\gamma^s_{l'} = \| \hat{\btheta}^s_{l'} - \btheta_{l'} \|^2_{\mathbf{V}^s_{l'}}$, Lemma \ref{lemma:theta_cb} guarantees that $\gamma^s_{l'} \leq \gamma$ holds with high probability for all $s$ and $l'$.

To analyze the growth the directional estimation error $r^s_h$ for each $l \in \cH$ over time, we apply recounting techniques. Define the dataset \(\F^l_{s,l'} := \{h | S^s_{h,1} = l, S^s_{h,2} = l', \o^s_h = 0\}\), which partitions \(\D_{s,l}\) by \(S^s_{h,2}\), the time interval between the two recent ads impression. This partitioning ensures \(\sum_{l' \in \cH_2} |\F^l_{s,l'}| = |\D_{s,l}|\). We then bound the total directional error as: $\sum^t_{s=1}\sum_{h \in \D_{s, l}} \lvert(\hat{\btheta}^{s}_{S^s_{h,2}} - \btheta_{S^s_{h,2}})^\top \mathbf{x}_s\rvert \leq \sqrt{N_{t, l}}\sqrt{\sum^t_{s=1}\sum^H_{l' = 1} \sum_{h \in \F^l_{s, l'}} (r^s_h)^2}$. Let \(n_{s,l'}= |\F^l_{s,l'}|\),  where \(n_{s,l'} \leq H\). The total count across all partitions satisfies \(\sum_{s=1}^t \sum_{l' \in \cH_2} n_{s,l'} = N_{t,l}\). We further bound the error as $\sqrt{N_{t, l}}\sqrt{\sum^H_{l' = 1}\gamma\sum^t_{s=1}n_{s, l'} \min(\|\x_s\|_{(\V^s_{l'})^{-1}}, 1) }$. Applying the Elliptical Potential Lemma \citep{abbasi2011improved}, we obtain $2H\sqrt{d}\sqrt{N_{t, l}\log\left(1 + \frac{T}{2d}\right)\gamma}$. Finally, by applying a union bound to account for the simultaneous occurrence of Term A and Term B, we establish Theorem \ref{thm: dl} (details in Appendix \ref{proof:thm: dl}).

%%%%%%%%%%%%%%%%%%
% Theorem 2
%%%%%%%%%%%%%%%%%%
Building on the efficiency of TS-MLE, we now show that the regret of Algorithm \ref{algo} is nearly optimal, as established in Theorem \ref{thm:main_theorem}.
\begin{theorem}[Nearly Optimal Regret]\label{thm:main_theorem}
      For any $\delta \geq \tfrac{6}{T^3}$,  with probability at least \(1 - \delta\),  \(\mathrm{Reg}_T\) incurred by Algorithm \ref{algo} is \(\cO(dH^2 \sqrt{T\log(\tfrac{TH}{\delta}})\log(1 + \tfrac{T}{2d}))\).
\end{theorem}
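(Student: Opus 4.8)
The plan is to bound the regret by decomposing it into a sum of per-customer regret terms, and for each customer bound the gap between the optimal value $\mathrm{OPT}(\Theta, \x_t, \cF^{\x_t})$ and the reward collected by the greedy optimistic policy $\pi_t$. Since Algorithm \ref{algo} uses the principle of optimism in the face of uncertainty—choosing $\pi_t = \arg\max_\pi \max_{\tilde\theta \in \cC_{t-1}} R(\pi; \tilde\theta, \hat\cF^{\x_t}_{t-1})$—the first step is to condition on the ``good event'' that all confidence regions contain the true parameters simultaneously across all customers and all $l \in \cH$. By Lemma \ref{lemma:theta_cb} (for $\btheta_l$), Theorem \ref{thm: dl} (for $\d_l$), and analogous concentration results for $\hat\bbeta^t_h$ and $\hat\sigma^t_h$ via ridge regression (Eqn. \eqref{eq:beta}) and the two-stage variance estimator (Eqn. \eqref{eq:sigma}), each holds with high probability; a union bound over $T$ customers, $H$ rounds, and $|\cH| = O(H)$ parameters yields the overall failure probability $\delta$, at the cost of logarithmic factors $\log(TH/\delta)$.

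On the good event, optimism gives $\mathrm{OPT}(\Theta, \x_t, \cF^{\x_t}) \leq \max_{\tilde\theta \in \cC_{t-1}} R(\pi_t; \tilde\theta, \hat\cF^{\x_t}_{t-1})$, so the per-customer regret is controlled by the difference between the reward of $\pi_t$ evaluated under the optimistic parameters $(\tilde\theta, \hat\cF^{\x_t}_{t-1})$ and under the true parameters $(\Theta, \cF^{\x_t})$. The second step is to expand this reward difference using the explicit reward form $R^t_h(S^t_h, \a^t_h, \x_t)$ and propagate the parameter estimation errors through the value function. Because the reward at each round depends on $\d_{S^t_{h,1}}$, $\btheta_{S^t_{h,1}}$, $\btheta_{S^t_{h,2}}$, the win probability $\cF_h(\a^t_h, \x_t) = \Phi((\log \a^t_h - \langle\x_t, \bbeta_h\rangle)/\sigma_h)$, and the second-price payment $p_h$, I would show each of these is Lipschitz in the underlying parameters (using the boundedness from Assumption \ref{assumption:bounded} and smoothness of $\Phi$), so the single-round reward gap is bounded by a weighted sum of $|\tilde\d_l - \d_l|$, $\|\tilde\btheta_l - \btheta_l\|$ (in the $\x_t$ direction), $|\langle\x_t, \hat\bbeta^{t-1}_h - \bbeta_h\rangle|$, and $|\hat\sigma^{t-1}_h - \sigma_h|$. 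Summing over the $H$ rounds and using a performance-difference/value-decomposition argument (telescoping the state-transition-induced errors across rounds) introduces the factors of $H$; the delayed structure and the dependence of rewards on both $S_{h,1}$ and $S_{h,2}$ is what forces the $H^2$ rather than a single $H$.

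The third step is to sum the per-customer directional estimation errors over $t$ and apply the Elliptical Potential Lemma \citep{abbasi2011improved}, exactly as in the proof sketch for Theorem \ref{thm: dl}: terms of the form $\sum_{t} \|\x_t\|_{(\V^{t-1}_l)^{-1}}$ accumulate to $O(\sqrt{dT\log(1 + T/(2d))})$, and combined with the uniform weighted-norm bound $\gamma = \tilde O(d)$ from Lemma \ref{lemma:theta_cb} this yields the $\sqrt{T}$ scaling and one factor of $d$. The transition-parameter errors for $\bbeta_h$ and $\sigma_h$ are handled symmetrically via the same elliptical-potential accounting on the ridge design matrix $\sum_s \x_s\x_s^\top + \lambda\I_d$. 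Collecting the $H^2$ from round-summation, the $d$ from the contextual dimension, the $\sqrt{T}$ from the potential lemma, and the logarithmic factors $\log(TH/\delta)\log(1 + T/(2d))$ gives the claimed $\cO(dH^2\sqrt{T\log(TH/\delta)}\log(1 + T/(2d)))$ bound.

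The main obstacle I expect is the second step: cleanly propagating the estimation errors through the value function of the CMDP when the reward depends on \emph{two} state coordinates tied to delayed effects, while ensuring that the error enters only through the well-controlled $\x_t$-directional norm $\|\tilde\btheta_l - \btheta_l\|_{\V^{t-1}_l}\|\x_t\|_{(\V^{t-1}_l)^{-1}}$ rather than the uncontrolled full $L_2$ norm. This requires carefully tracking which confidence-region radius applies to each term and using the recounting/partitioning idea (partitioning visits by the pair $(S_{h,1}, S_{h,2})$) so that the elliptical potential lemma can be applied per-coordinate; the coupling between the reward-model errors ($\d_l, \btheta_l$) and the transition errors ($\bbeta_h, \sigma_h$)—which both feed into the optimized policy $\pi_t$ and into the occupancy measure over states—is the most delicate bookkeeping.
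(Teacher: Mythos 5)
Your overall architecture---a good event via union bound, optimism, Lipschitz propagation of parameter errors through the value function, and elliptical-potential accounting---is essentially the paper's proof, which organizes the same plan as the four-term decomposition in Eqn.~\eqref{eq:reg}, handled by Lemma~\ref{lemma: term 1} (transition error), Lemma~\ref{lemma: term 2} (optimism/confidence-region containment), Lemma~\ref{lemma: term 3} (parameter-estimation error), and a symmetric argument for Term~(iv). However, there is one genuine gap, and it sits in the step your argument leans on first: the optimism inequality.

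You claim that on the good event, $\mathrm{OPT}(\Theta, \x_t, \cF^{\x_t}) \leq \max_{\tilde\theta \in \cC_{t-1}} R(\pi_t; \tilde\theta, \hat\cF^{\x_t}_{t-1})$. This is false as stated: the confidence region $\cC_{t-1}$ covers only the reward parameters $\Theta = \{\btheta_l\}_{l\in\cH} \cup \{\d_l\}_{l\in\cH_1}$, while the left-hand side is evaluated under the \emph{true} transition $\cF^{\x_t}$ and the right-hand side under the \emph{estimated} transition $\hat\cF^{\x_t}_{t-1}$. Containment $\Theta \in \cC_{t-1}$ only yields $\mathrm{OPT}(\Theta, \x_t, \hat\cF^{\x_t}_{t-1}) \leq \max_{\tilde\theta \in \cC_{t-1}} R(\pi_t; \tilde\theta, \hat\cF^{\x_t}_{t-1})$, i.e., optimism when \emph{both} sides are computed under the same estimated transition; no confidence region over $\Theta$ can compensate for $\hat\cF^{\x_t}_{t-1}$ undervaluing the environment. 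The missing piece is the gap $\mathrm{OPT}(\Theta, \x_t, \cF^{\x_t}) - \mathrm{OPT}(\Theta, \x_t, \hat\cF^{\x_t}_{t-1})$, which must be bounded by a simulation-lemma argument applied along the \emph{optimal} policy $\pi^*_t$; this is exactly Term~(i) of the paper, proved in Lemma~\ref{lemma: term 1}, and it accumulates to $\cO(dH^2\sqrt{T})$ up to logarithms---the same order as the final bound, so it cannot be waved off as a lower-order technicality. Your step~2 propagates the transition errors in $\bbeta_h$ and $\sigma_h$ through the value function only for the \emph{executed} policy $\pi_t$ (the paper's Term~(iv)); you must run the identical argument once more for $\pi^*_t$ before invoking optimism, inserting $\mathrm{OPT}(\Theta, \x_t, \hat\cF^{\x_t}_{t-1})$ as the intermediate quantity. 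Since the simulation lemma holds uniformly over fixed policies, this fix uses machinery you already have---but without it the first link of your chain of inequalities is broken.
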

Theorem \ref{thm:main_theorem} shows that, with high probability, Algorithm \ref{algo} achieves a regret bound of \(\Tilde{\cO}(\sqrt{T})\). Given the \(\Omega(\sqrt{T})\) lower bound for CMDPs with even known transitions \citep{Levy_Mansour_2023}, this confirms that Algorithm \ref{algo} is nearly optimal in \(T\). As discussed earlier, the key challenge in designing an efficient reinforcement learning algorithm for CMDPs is developing an online estimation oracle that handles long-term Poisson rewards under potentially adversarial arrivals of contextual \(\x_t\), without relying on distributional assumptions. This improves upon prior work \citep{modi2020no, levy2022learningefficientlyfunctionapproximation, Levy_Mansour_2023, pmlr-v202-levy23a}, which is limited to instantaneous rewards and cannot capture long-term effects in CMDPs, or addresses long-term rewards but fails to account for the heterogeneous effects of \(\x_t\) \citep{badanidiyuru2023incrementalitybiddingreinforcementlearning}. Even with TS-MLE, establishing a high-probability regret upper bound for Algorithm \ref{algo} is nontrivial and requires careful analysis. We provide a proof sketch below, with detailed arguments in Appendix \ref{proof:main_thm}.

The proof begins by decomposing \(\text{Reg}_T\) into four terms. In Eqn. \eqref{eq:reg}, $\text{Term (i)}$ equals $\sum^T_{t = \tau}\text{OPT}(\Theta, \x_t, \cF^{\x_t}) - \sum^T_{t = \tau}\text{OPT}(\Theta, \x_t, \hat \cF^{\x_t}_{t-1})$, and $\text{Term (iv)} = \bE(\sum^T_{t = \tau} R(\pi_t; \x_t, \Theta, \hat \cF^{\x_t}_{t-1}) - R(\pi_t; \x_t, \Theta, \cF^{\x_t}))$, capturing the cumulative regret due to transition error, arising from the discrepancy between \(\cF^{\x_t}\) and \(\hat{\cF}^{\x_t}_{t-1}\)\footnote{\text{We have $\tau = H\underline{n_l} + 1$ and regret incurred in the exploration phase is $\Theta(\log T)$.}}.  $\text{Term (iii)}$ is $\sum^T_{t = \tau}\text{OPT}(\cC_{t-1}, \x_t, \hat \cF^{\x_t}_{t-1}) - \bE(\sum^T_{t = \tau} R(\pi_t; \x_t, \Theta, \hat \cF^{\x_t}_{t-1}))$, which accounts for decision error resulting from imprecise estimates of the model parameters \(\Theta = \{\btheta_l\}_{l \in \cH} \cup \{\d_l\}_{l \in \cH_1}\). Thus, accurate estimators of \(\cF^{\x_t}\) and \(\Theta\) lead to low cumulative regret, as shown jointly by Lemma~\ref{lemma: term 1} and \ref{lemma: term 3}. 
% $\text{Term (iv)} = \bE(\sum^T_{t = \tau} R(\pi_t; \x_t, \Theta, \hat \cF^{\x_t}_{t-1}) - R(\pi_t; \x_t, \Theta, \cF^{\x_t}))$ . Broadly, Terms (i) and (iv) in Eqn. \eqref{eq:reg} capture the cumulative regret due to transition error, arising from the discrepancy between \(\cF^{\x_t}\) and \(\hat{\cF}^{\x_t}\). Term (iii) accounts for decision error resulting from imprecise estimates of the model parameters \(\Theta = \{\btheta_l\}_{l \in \cH} \cup \{\d_l\}_{l \in \cH_1}\)\footnote{\text{We have $\tau = H\underline{n_l} + 1$ and regret incurred in the exploration phase is $\Theta(\log T)$.}}. 
% \begin{tiny}
% \begin{align}\label{eq:reg}
%     \begin{split}
%         & \text{Reg}_T \leq \Theta(\log T) + \underbrace{\sum^T_{t = \tau}\text{OPT}\left(\Theta, \x_t, \cF^{\x_t}\right) - \sum^T_{t = \tau}\text{OPT}\left(\Theta, \x_t, \hat \cF^{\x_t}_{t-1}\right)}_{\textcolor{red}{\text{Term (i)}}}+ \underbrace{\sum^T_{t = \tau}\text{OPT}\left(\Theta, \x_t, \hat \cF^{\x_t}_{t-1}\right) - \sum^T_{t = \tau}\text{OPT}\left(\cC_{t-1}, \x_t, \hat \cF^{\x_t}_{t-1}\right)}_{\textcolor{red}{\text{Term (ii)}}}\\
%         + & \underbrace{\sum^T_{t = \tau}\text{OPT}\left(\cC_{t-1}, \x_t, \hat \cF^{\x_t}_{t-1}\right) - \bE\left(\sum^T_{t = \tau} R\left(\pi_t; \x_t, \Theta, \hat \cF^{\x_t}_{t-1}\right)\right)}_{\textcolor{red}{\text{Term (iii)}}}  + \underbrace{\bE\left(\sum^T_{t = \tau} R\left(\pi_t; \x_t, \Theta, \hat \cF^{\x_t}_{t-1}\right) - R\left(\pi_t; \x_t, \Theta, \cF^{\x_t}\right)\right)}_{\textcolor{red}{\text{Term (iv)}}}.
%     \end{split}
% \end{align}      
% \end{tiny}
\begin{equation}\label{eq:reg}
    \text{Reg}_T \leq \Theta(\log T) + \text{Term (i)} + \text{Term (ii)} + \text{Term (iii)} + \text{Term (iv)} 
\end{equation}

\begin{lemma}[Term (i) Upper Bound]\label{lemma: term 1}
For \(\delta \geq 1/T^4\), with probability at least \(1 - \delta\), Term (i) in Eqn.~\eqref{eq:reg} is bounded above by 
\(\cO(dH^2\sqrt{T}\sqrt{\log((1+T)/\delta)\log(1 + T/d)})\).
\end{lemma}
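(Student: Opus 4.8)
The plan is to bound Term (i) customer-by-customer and then aggregate over $t$. Fix $t \geq \tau$ and let $\pi^*_t$ be the optimal policy for the true model $(\Theta, \x_t, \cF^{\x_t})$. Because the greedy policy computed under $\hat\cF^{\x_t}_{t-1}$ is by construction optimal for that transition, $\text{OPT}(\Theta, \x_t, \hat\cF^{\x_t}_{t-1}) \geq R(\pi^*_t; \x_t, \Theta, \hat\cF^{\x_t}_{t-1})$, while $\text{OPT}(\Theta, \x_t, \cF^{\x_t}) = R(\pi^*_t; \x_t, \Theta, \cF^{\x_t})$. Subtracting, the per-customer contribution is at most $R(\pi^*_t; \x_t, \Theta, \cF^{\x_t}) - R(\pi^*_t; \x_t, \Theta, \hat\cF^{\x_t}_{t-1})$, the value gap of a \emph{single fixed} policy across the two transition models. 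This removes the difficulty of comparing two distinct optimal policies and reduces everything to a model-perturbation estimate.

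Next I would apply a simulation (performance-difference) lemma to expand this gap as a telescoping sum over the $H$ rounds. Since each round's transition is a Bernoulli with success probability $\cF_h(\a,\x_t)$, and the reward $R^t_h$ depends on $\cF_h$ both directly and through the second-price payment $p_h$ (which itself contains $\cF_h$), each round contributes (a) the winning-probability error $|\cF_h(\a,\x_t) - \hat\cF_h(\a,\x_t)|$ multiplied by the gap between the continuation values of the ``win'' and ``lose'' successor states, which is $\cO(H)$ since per-round rewards are bounded under Assumption \ref{assumption:bounded}; plus (b) the reward-model error, controlled by $|\cF_h - \hat\cF_h|$ and the payment error $|p_h - \hat p_h|$. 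Hence the per-customer value gap is at most $\cO(H)\sum_{h=1}^H$ of the pointwise model errors evaluated along the trajectory of $\pi^*_t$.

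Each pointwise error is then reduced to the estimation error of the transition parameters. Because $\cF_h(\a,\x_t) = \Phi((\log\a - \langle\x_t,\bbeta_h\rangle)/\sigma_h)$ and $\Phi$ is $1/\sqrt{2\pi}$-Lipschitz, $|\cF_h - \hat\cF_h|$ is controlled by the directional deviation $|\langle\x_t, \bbeta_h - \hat\bbeta^{t-1}_h\rangle|$ together with $|\sigma_h - \hat\sigma^{t-1}_h|$ (dividing safely by $\sigma_h$ using the boundedness of the variance estimates), and the payment error is handled by the same Lipschitz estimate integrated over the bidding interval. For the directional term I would write $|\langle\x_t, \bbeta_h - \hat\bbeta^{t-1}_h\rangle| \leq \|\x_t\|_{(\V^{t-1}_h)^{-1}}\|\bbeta_h - \hat\bbeta^{t-1}_h\|_{\V^{t-1}_h}$ and invoke the self-normalized ridge confidence bound of \citet{abbasi2011improved} to get $\|\bbeta_h - \hat\bbeta^{t-1}_h\|_{\V^{t-1}_h} = \cO(\sqrt{d\log(T/\delta)})$, uniformly over $h$ after a union bound over $h\in[H]$. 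Summing $\|\x_t\|_{(\V^{t-1}_h)^{-1}}$ over $t$ via Cauchy--Schwarz and the Elliptical Potential Lemma yields $\sqrt{T}\cdot\sqrt{d\log(1+T/d)}$; the variance-estimation error $|\sigma_h - \hat\sigma^{t-1}_h|$, which uses the progressively-updated $\{\hat\bbeta^s_h\}$ exactly as in the $\hat\d^t_l$ analysis, is controlled in the same direction. Multiplying the $\cO(H)$ value gap, the factor $H$ from summing over rounds, the confidence width $\sqrt{d\log(T/\delta)}$, and the elliptical-potential factor $\sqrt{Td\log(1+T/d)}$ gives $\cO(dH^2\sqrt{T}\sqrt{\log((1+T)/\delta)\log(1+T/d)})$.

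\textbf{Main obstacle.} I expect the hardest part to be controlling the payment error $|p_h - \hat p_h|$ uniformly, since $p_h(\a,\x_t)=\a - \frac{1}{\cF_h(\a,\x_t)}\int_0^\a \cF_h(v,\x_t)\,dv$ contains the reciprocal $1/\cF_h$, which is ill-behaved when the winning probability is small; this requires either restricting to the bids actually chosen by $\pi^*_t$ (where $\cF_h$ is bounded away from zero) or a careful integral estimate that does not blow up. A secondary subtlety is that the first-stage estimate used at customer $t$ is the predictable $\hat\bbeta^{t-1}_h$ (and $\hat\sigma^{t-1}_h$) evaluated against the fresh context $\x_t$; keeping the union bound tight over $h\in[H]$ while propagating the variance-estimation error without creating a feedback loop between $\hat\bbeta$ and $\hat\sigma$ is precisely where the analysis reuses the two-stage machinery behind Theorem \ref{thm: dl}.
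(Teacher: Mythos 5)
Your proposal follows the paper's route essentially step for step: the same reduction of the OPT difference to the value gap of the single fixed policy $\pi^*_t$ (via $\text{OPT}(\Theta,\x_t,\hat\cF^{\x_t}_{t-1}) \ge R(\pi^*_t;\x_t,\Theta,\hat\cF^{\x_t}_{t-1})$), the same simulation-lemma expansion giving an $\cO(H)$ continuation-value factor times a sum of $H$ pointwise model errors, the same Lipschitz reduction of $|\cF_h-\hat\cF^{t-1}_h|$ to the directional error $|\langle\x_t,\bbeta_h-\hat\bbeta^{t-1}_h\rangle|$ plus a variance-estimation error, and the same combination of the self-normalized ridge confidence set, the elliptical potential lemma, and a union bound over $h$. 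So in outline this is the paper's proof.

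The one genuine gap is exactly the step you flag as the main obstacle, and your first proposed fix for it would not work. You cannot restrict attention to bids at which $\cF_h$ is bounded away from zero: nothing prevents the optimal policy from bidding with arbitrarily small winning probability (including opting out with $\a^t_h=0$), so no such lower bound holds along the trajectory of $\pi^*_t$. The resolution is that a bound on $|p_h-\hat p_h|$ is never needed, because the payment enters the expected reward only through the product $p_h(\a,\x_t)\,\cF_h(\a,\x_t)$, in which the troublesome reciprocal cancels: from $p_h(\a,\x_t)=\a-\frac{1}{\cF_h(\a,\x_t)}\int_0^{\a}\cF_h(v,\x_t)\,dv$ one gets $p_h\cF_h=\a\,\cF_h(\a,\x_t)-\int_0^{\a}\cF_h(v,\x_t)\,dv$, hence $|\hat p_h\hat\cF^{t-1}_h-p_h\cF_h|\le \a\,|\hat\cF^{t-1}_h-\cF_h|+\int_0^{\a}|\hat\cF^{t-1}_h(v,\x_t)-\cF_h(v,\x_t)|\,dv\le 2\B_{\cA}\sup_b|\hat\cF^{t-1}_h(b,\x_t)-\cF_h(b,\x_t)|$. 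This is your second alternative ("a careful integral estimate"), but it is a one-line identity rather than a delicate argument, and it is how the paper obtains the reward-error constant $((\B_d+1)\B_x\B_\theta+\B_{\cA})\epsilon_t$. A secondary omission: the variance error $|\sigma_h^2-(\hat\sigma^{t-1}_h)^2|$ contains an empirical average of centered $\chi^2_1$ (sub-exponential) terms, and the hypothesis $\delta\ge 1/T^4$ in the lemma is needed precisely to keep that concentration bound in its quadratic-decay regime after the union bound; your sketch never accounts for where this condition is used.
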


Lemma~\ref{lemma: term 1} provides a high-probability upper bound for Term (i). The proof firstly uses the simulation lemma \citep{kearns2002near} to bound Term (i) by the cumulative estimation error in the transition dynamics, \(\sum_{t=1}^T \sup_b \sup_h |\hat \cF^{t-1}_h(b, \x_t) - \cF_h(b, \x_t)|\), which depends on \( |\hat \bbeta^{t}_h - \bbeta_h| \) and \( |(\hat \sigma^t_h)^2 - \sigma^2_h| \). A high-probability bound is then derived for these estimations errors, with \( |(\hat \sigma^t_h)^2 - \sigma^2_h| \) shown to follow a sub-exponential distribution. To ensure quadratic tail decay, we set \(\delta \geq 1/T^4\). A similar analysis applies to Term (iv), which is bounded by \(\cO(dH^2\sqrt{T}\sqrt{\log((1+T)/\delta)\log(1 + T/d)})\) with probability at least \(1 - 1/T^4\). Details are in Appendix \ref{proof:term1}.

\begin{lemma}[Term (iii) Upper Bound]\label{lemma: term 3}
With probability at least \(1 - 2\delta\), where \(\delta \ge \tfrac{1}{T^3}\), Term (iii) in Eqn.~\eqref{eq:reg} is bounded by 
\(\mathcal{O}(H^2\sqrt{dT\log(4TH/\delta)}\log(1 + T/2d))\).
\end{lemma}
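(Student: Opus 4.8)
\textbf{Proof plan for Lemma \ref{lemma: term 3}.}
The plan is to treat Term (iii) as the classical \emph{optimism gap} and reduce it to a sum of confidence-region widths. Let $\tilde\Theta_t=(\{\tilde\btheta_l\},\{\tilde\d_l\})$ together with $\pi_t$ be the maximizers defining $\text{OPT}(\cC_{t-1},\x_t,\hat\cF^{\x_t}_{t-1})=R(\pi_t;\tilde\Theta_t,\hat\cF^{\x_t}_{t-1})$, so that Term (iii) equals $\sum_{t=\tau}^T\big[R(\pi_t;\tilde\Theta_t,\hat\cF^{\x_t}_{t-1})-R(\pi_t;\Theta,\hat\cF^{\x_t}_{t-1})\big]$; note both rewards use the \emph{same} estimated transition, so the induced state law $\hat\cP^{\x_t}$ is identical and only the reward parameters differ. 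First I would show the true parameters lie in the confidence set: union-bounding Lemma \ref{lemma:theta_cb} over $l\in\cH$ gives $\btheta_l\in\cC^{t}_l$ for all $t,l$, and union-bounding Theorem \ref{thm: dl} over $l\in\cH_1$ gives $\d_l\in\cD^{t}_l$ for all $t,l$; these two events hold jointly with probability at least $1-2\delta$ (after rescaling the per-$l$ tail level, which is the source of the $2\delta$ and of the $\log(4TH/\delta)$ factor). On this event $\Theta\in\cC_{t-1}$, so both $\tilde\Theta_t$ and $\Theta$ belong to $\cC_{t-1}$ and every parameter difference is controlled by the diameter of the corresponding confidence region.

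Next I would expand the per-round difference using the explicit reward $R^t_h$. Since the payment $p_h(\a^t_h,\x_t)$ does not depend on $\Theta$, it cancels, leaving a ``$\btheta$-direct'' piece $\langle\btheta_{l}-\tilde\btheta_{l},\x_t\rangle\,\hat\cF_h$ with $l=S^t_{h,1}$ and a ``product'' piece $\big(\d_{l}\langle\btheta_{l'},\x_t\rangle-\tilde\d_{l}\langle\tilde\btheta_{l'},\x_t\rangle\big)(1-\hat\cF_h)$ with $l'=S^t_{h,2}$. Adding and subtracting $\tilde\d_l\langle\btheta_{l'},\x_t\rangle$ splits the product piece into a $(\d_l-\tilde\d_l)$ part and a $(\btheta_{l'}-\tilde\btheta_{l'})$ part. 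Writing $\mathrm{rad}_d(t-1,l)$ for the right-hand side of \eqref{eq:dl_bound}, and using $\hat\cF_h,(1-\hat\cF_h)\in[0,1]$, Assumption \ref{assumption:bounded}, Cauchy--Schwarz in the form $|\langle\btheta_l-\tilde\btheta_l,\x_t\rangle|\le 2\sqrt{\gamma}\,\|\x_t\|_{(\V^{t-1}_l)^{-1}}$ (the $\cC^{t-1}_l$-diameter from Lemma \ref{lemma:theta_cb}) and $|\d_l-\tilde\d_l|\le 2\,\mathrm{rad}_d(t-1,l)$ (the $\cD^{t-1}_l$-diameter from Theorem \ref{thm: dl}), the per-round error at state $[l,l']$ is bounded by $C\big(\sqrt{\gamma}\,\|\x_t\|_{(\V^{t-1}_l)^{-1}}+\B_d\sqrt{\gamma}\,\|\x_t\|_{(\V^{t-1}_{l'})^{-1}}+\B_\theta\B_x\,\mathrm{rad}_d(t-1,l)\big)$.

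It then remains to sum these three contributions over the $H$ rounds (in expectation over $\hat\cP^{\x_t}$) and over $t=\tau,\dots,T$. For the two $\btheta$-width contributions I would, for each fixed state index, apply Cauchy--Schwarz across customers and then the Elliptical Potential Lemma \citep{abbasi2011improved}, which bounds $\sum_t\min(\|\x_t\|^2_{(\V^{t-1}_l)^{-1}},1)\le 2d\log(1+T/d)$; together with $\sqrt{\gamma}=\tilde{\mathcal O}(\sqrt{d})$ this yields the $\sqrt{dT\log(4TH/\delta)}\,\log(1+T/2d)$ shape. For the $\mathrm{rad}_d$ contribution I would use $\mathrm{rad}_d(t-1,l)=\tilde{\mathcal O}\big(H/\sqrt{N_{t-1,l}}\big)$ from Theorem \ref{thm: dl}; the exploration phase guarantees $N_{t,l}\ge\underline{n}_l$ and $N_{t,l}$ grows with $t$, so $\sum_t 1/\sqrt{N_{t-1,l}}=\tilde{\mathcal O}(\sqrt{T})$, and the explicit factor $H$ inside $\mathrm{rad}_d$ combined with the summation over the $H$ rounds produces the overall $H^2$. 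To organize the per-state sums I would reuse the recounting/partition device $\F^{l}_{s,l'}=\{h\mid S^s_{h,1}=l,\,S^s_{h,2}=l',\,\o^s_h=0\}$ from the proof of Theorem \ref{thm: dl}, so that the counts over $l'\in\cH_2$ add back up to $N_{t,l}$ and no spurious factor of $|\cH|$ is introduced.

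The main obstacle is this last step: Term (iii) is an expectation over the \emph{imagined} trajectory generated by $\hat\cP^{\x_t}$, whereas the widths $\|\x_t\|_{(\V^{t-1}_l)^{-1}}$ and the counts $N_{t-1,l}$ are built from the \emph{realized} trajectory under the true transition, and the Elliptical Potential Lemma telescopes only when indexed by the real updates of $\V_l$ (which occur when customer $t$ actually wins in state $l$). Bridging these two --- bounding the expected per-state visitation by the horizon $H$ and passing from the imagined expectation to the data-dependent EPL and $\sum_t 1/\sqrt{N_{t-1,l}}$ sums without inflating the $H$ or $d$ dependence --- is the delicate accounting; the remaining steps are routine given Lemma \ref{lemma:theta_cb} and Theorem \ref{thm: dl}.
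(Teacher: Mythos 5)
Your plan is correct and follows essentially the same route as the paper's proof: the same optimism-gap reduction via $\tilde\Theta_{t-1}$, the same three-way split into a direct $\btheta$ term, a product term handled by add-and-subtract, and a $\d$ term, bounded respectively by the $\cC^{t-1}_l$ diameters (Lemma \ref{lemma:theta_cb} plus Cauchy--Schwarz and the elliptical potential lemma) and the $\cD^{t-1}_l$ radius (Theorem \ref{thm: dl} with $\sum_t 1/\sqrt{N_{t-1,l}} = \tilde{\cO}(\sqrt{T})$), with the same $1-2\delta$ union-bound bookkeeping. The ``obstacle'' you flag at the end is resolved in the paper exactly as you suggest: since the widths $\sqrt{\gamma^{t-1}_l}\,\|\x_t\|_{(\V^{t-1}_l)^{-1}}$ and the counts do not depend on customer $t$'s imagined trajectory, the expectation over $\hat{\cP}^{\x_t}$ enters only through the per-state visitation numbers $n^l_t \leq H$, after which all remaining sums are data-indexed and the elliptical potential argument applies directly.
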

% \begin{align}
% \begin{split}
%     & \text{Term (iii)} \leq \sum^T_{t=\tau}\bE(\sum^H_{h=1}|\langle \Tilde{\btheta}^{t-1}_{S^t_{h, 1}} - \hat{\btheta}^{t-1}_{S^t_{h, 1}} + \hat{\btheta}^{t-1}_{S^t_{h, 1}} - \btheta_{S^t_{h, 1}}, \x_t\rangle|) +\\ &\sum^T_{t=\tau}\bE(\sum^H_{h=1}|\langle \Tilde{\btheta}^{t-1}_{S^t_{h, 2}} - \hat{\btheta}^{t-1}_{S^t_{h, 2}} + \hat{\btheta}^{t-1}_{S^t_{h, 2}}-\btheta_{S^t_{h, 2}}, \x_t\rangle|) + \sum^T_{t=\tau}\bE(\sum^H_{h=1}|\Tilde{\d}^{t-1}_{S^t_{h, 1}} - \hat{\d}^{t-1}_{S^t_{h, 1}} + \hat{\d}^{t-1}_{S^t_{h, 1}}-\d_{S^t_{h, 1}}|)
% \end{split}
% \end{align}
Lemma~\ref{lemma: term 3} provides a high-probability bound for Term~(iii), which can be decomposed into two parts: the cumulative estimation error of \(\hat{\btheta}^t_l\), which is $\sum^T_{t=\tau}\bE(\sum^H_{h=1}|\langle \Tilde{\btheta}^{t-1}_{S^t_{h, 1}} - \hat{\btheta}^{t-1}_{S^t_{h, 1}} + \hat{\btheta}^{t-1}_{S^t_{h, 1}} - \btheta_{S^t_{h, 1}}, \x_t\rangle|)$ $+\sum^T_{t=\tau}\bE(\sum^H_{h=1}|\langle \Tilde{\btheta}^{t-1}_{S^t_{h, 2}} - \hat{\btheta}^{t-1}_{S^t_{h, 2}} + \hat{\btheta}^{t-1}_{S^t_{h, 2}}-\btheta_{S^t_{h, 2}}, \x_t\rangle|)$ and the cumulative estimation error of \(\hat{\d}_l\), specifically, $\sum^T_{t=\tau}\bE(\sum^H_{h=1}|\Tilde{\d}^{t-1}_{S^t_{h, 1}} - \hat{\d}^{t-1}_{S^t_{h, 1}} + \hat{\d}^{t-1}_{S^t_{h, 1}}-\d_{S^t_{h, 1}}|)$. Here, \(\tilde{\Theta}_{t-1} := \arg \max_{\Theta \in \cC_{t-1}} R(\pi_t; \x_t, \Theta, \hat \cF^{\x_t}_{t-1}).\) Applying Lemma \ref{lemma:theta_cb} and Theorem \ref{thm: dl}, together with a union bound, yields the high-probability bound for Term (iii) (Appendix \ref{proof:term3}).

Meanwhile, in Eqn.~\eqref{eq:reg}, $\text{Term (ii)} = \sum^T_{t = \tau}\text{OPT}(\Theta, \x_t, \hat \cF^{\x_t}_{t-1}) - \sum^T_{t = \tau}\text{OPT}(\cC_{t-1}, \x_t, \hat \cF^{\x_t}_{t-1})$, measures the gap in regret between a chosen point and the optimal point within a feasible set. Lemma \ref{lemma: term 2} proves \(\Theta \in \cC_{t}\) for all \(t \geq \tau\) with high probability, ensuring that Term~(ii) can be negative with high probability.
Combining these results, we conclude that, with probability at least \(1 - \tfrac{6}{T^3}\), the overall regret \(\mathrm{Reg}_T\) is \(\cO(dH^2 \sqrt{T\log(\tfrac{TH}{\delta}})\log(1 + \tfrac{T}{2d}))\).

\begin{lemma}[Confidence Region of $\Theta$]\label{lemma: term 2}
With probability at least \(1 - \delta\) with $\delta \geq \frac{2}{T^3}$, $\Theta \in \cC_t, \forall t \geq \tau$.
\end{lemma}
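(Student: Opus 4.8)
The plan is to unfold the definition of the confidence set $\cC_t$ from line 21 of Algorithm \ref{algo} and reduce the event $\{\Theta \in \cC_t \text{ for all } t \ge \tau\}$ to a conjunction of per-parameter membership events: $\{\btheta_l \in \cC^t_l\}$ for $l \in \cH$ and $\{\d_l \in \cD^t_l\}$ for $l \in \cH_1$. Each such event is exactly the high-probability guarantee already established — Lemma \ref{lemma:theta_cb} for the instant effects and Theorem \ref{thm: dl} for the delayed effects — so the argument is essentially a single union bound in which the overall failure budget $\delta$ is allocated between the two families. No new concentration inequality is needed; the work is entirely in the bookkeeping, which is delicate because the two guarantees have different temporal structure and because Theorem \ref{thm: dl} is only admissible above a prescribed confidence threshold.

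First I would dispatch the $\btheta_l$. For each fixed $l \in \cH$, Lemma \ref{lemma:theta_cb} is already \emph{time-uniform}: with probability at least $1-\delta_1$, $\|\btheta_l - \hat\btheta^t_l\|^2_{\V^t_l} \le \gamma$ simultaneously for all $t \ge 0$, i.e. $\btheta_l \in \cC^t_l$ for every $t$. Hence no union over $t$ is required on this side, and it suffices to union over the index set, whose cardinality satisfies $|\cH| \le |\cH_1| + |\cH_2| \le 2H$. Running these regions at per-parameter level $\delta_1 = \delta/(4H)$ keeps the instant-effect failure at $|\cH|\,\delta_1 \le \delta/2$; since Lemma \ref{lemma:theta_cb} imposes no lower bound on its confidence level, shrinking $\delta_1$ only inflates $\gamma$ logarithmically and is harmless (it merely enlarges the $\sqrt{\gamma}$ term that later feeds the $\d_l$ radius).

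Next I would handle the $\d_l$. In contrast, Theorem \ref{thm: dl} is a per-$(t,l)$ statement whose radius is \emph{exactly} the quantity defining $\cD^t_l$ in Algorithm \ref{algo:ts}, and its hypotheses require $N_{t,l} \ge \underline{n_l}$ together with a confidence level at least $1/(T^4 H)$. The sample-size hypothesis holds for every $t \ge \tau$ and every $l \in \cH_1$ by the exploration-phase guarantee of Remark \ref{rmk:params}, so each $\cD^t_l$ is a genuinely refreshed region rather than a carried-over one. Because the guarantee is not time-uniform, I would union over both $l \in \cH_1$ ($H$ indices) and $t \in \{\tau,\dots,T\}$ (at most $T$ values), i.e. over at most $HT$ events, setting the per-event level $\delta_0 = \delta/(2HT)$ so that the delayed-effect failure is at most $HT\,\delta_0 = \delta/2$.

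Combining the two families by a union bound then gives $\bP(\exists\, t \ge \tau:\ \Theta \notin \cC_t) \le \delta/2 + \delta/2 = \delta$, which is the claim. The main obstacle — and the precise reason the statement demands $\delta \ge \tfrac{2}{T^3}$ rather than an arbitrary level — is the admissibility check for Theorem \ref{thm: dl}: after splitting the budget across the $\Theta(HT)$ events of the delayed-effect union, the surviving per-event level $\delta_0 = \delta/(2HT)$ must stay above the theorem's threshold $1/(T^4 H)$, and $\delta_0 \ge 1/(T^4 H)$ holds exactly when $\delta \ge \tfrac{2}{T^3}$. Verifying this chain of constraints, and confirming that the confidence radii $\cC^t_l$ and $\cD^t_l$ are instantiated at the rescaled levels $\delta_1$ and $\delta_0$, is the crux; everything probabilistic is inherited from Lemma \ref{lemma:theta_cb} and Theorem \ref{thm: dl}.
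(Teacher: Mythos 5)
Your proposal is correct and follows essentially the same route as the paper's own proof: decompose $\{\Theta \in \cC_t,\ \forall t \ge \tau\}$ into the $\{\btheta_l \in \cC^t_l\}$ family (handled by the time-uniform Lemma \ref{lemma:theta_cb}, union over $l \in \cH$ only) and the $\{\d_l \in \cD^t_l\}$ family (handled by the per-$(t,l)$ Theorem \ref{thm: dl}, union over both $l \in \cH_1$ and $t$), then combine by a union bound, with the constraint $\delta \ge \tfrac{2}{T^3}$ arising precisely from the requirement that the per-event level for the delayed-effect family stay above Theorem \ref{thm: dl}'s admissibility threshold $\tfrac{1}{T^4 H}$. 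If anything, your explicit $\delta/2$-plus-$\delta/2$ budget allocation is tidier than the paper's own bookkeeping, which applies both guarantees at level $\delta$ and leaves the final rescaling implicit.
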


\section{Experiments}
We conducted experiments, which leads to two key findings: first, our algorithm achieves near-optimal regret scaling of \( \sqrt{T} \), and second it consistently outperforms several strong baselines, including \textit{aggressive bidding}, \textit{passive bidding}, and \textit{random bidding} strategies, with definition shown below.

\textbf{Environment Setup.} We simulate a second-price auction with horizon \( H = 3 \) (a realistic setting since advertisers typically show ads only a few times per user), context dimension \(d = 2\), and \( T = 20{,}000 \) sequential customers. The first 2400 rounds are used for exploration, and the remaining rounds for exploitation.
Context vectors \( \x_t \in \mathbb{R}^2 \), as well as parameters \( d_l, \bbeta_h, \sigma_h \), are sampled elementwise from \( |\mathcal{N}(0,1)| + 0.1 \) to ensure positivity, while \( \btheta_l \sim 5|\mathcal{N}(0,1)| + 0.1 \). Under this configuration, each ad impression yields an average instantaneous reward roughly five times its cost (i.e., the highest other bid). The reward also decays rapidly over time, making \textit{aggressive bidding}—always winning the impression—close to optimal and therefore a competitive baseline.

\textbf{Estimators.} We estimate four sets of parameters: \( \btheta_l \) via the online Newton method (Algorithm \ref{algo:theta}) with truncation threshold \( \Gamma = 100{,}000 \), bound \( \B_\theta = 10 \), zero initialization, and \( V_0 = I \). We estimate delay impact \( d_l \) via the two-stage MLE (Eq. \eqref{eq:dl}) using \( \D_{t,l} \), \(\hat{\btheta} \), and \( \x_t \); \( \bbeta \) by ridge regression (Eq. \eqref{eq:beta}, \( \lambda = 1.0 \)); and \( \sigma \) via empirical variance (Eq. \eqref{eq:sigma}).

\textbf{Connection Between Bid, Outcome, and Reward.} In our setting, the bidder's action is the bid amount, but due to the second-price auction format, the reward depends only on the outcome–whether the bid exceeds the HOB-not the bid itself. Winning results in paying the HOB; losing incurs no cost. This decoupling allows optimal rewards to be computed from outcomes rather than bids. We leverage this to evaluate both oracle and policy-based rewards. With full knowledge of true parameters, we enumerate all \( 2^H = 8 \) possible win/loss outcomes and define the oracle reward as the maximum achievable value. With estimated parameters, we compute expected rewards for all outcome sequences, select the best, and evaluate it in simulation. Regret is then the cumulative difference between oracle and realized rewards, reflecting both estimation and decision errors.

% Simulation Workflow and Regret Computation

% 1. **Initialize Environment:**
%    Sample true parameters \( (\theta_l, d_l, \beta_h, \sigma_h) \) and initialize estimates \( (\hat{\theta}_l, \hat{d}_l, \hat{\beta}_h, \hat{\sigma}_h) \).
% 2. **Simulate Customer Arrivals:**
%    Run \( T = 20{,}000 \) customers across two phases—**exploration** (first 2,400) and **exploitation** (remaining 17,600).
% 3. **Exploration Phase:**
%    Uniformly sample the \( 2^H = 8 \) bidding outcomes (300 rounds each). For each:

%    * Sample context \( x_t \); simulate reward and HOB.
%    * Compute oracle reward; store observations for estimation.
%    * Update parameters using **data splitting**.
% 4. **Exploitation Phase:**
%    For each round, observe \( x_t \); evaluate all 8 outcomes using current estimates; select and simulate the best sequence; update parameters.
% 5. **Regret Computation:**
%    Compute cumulative regret as \( \mathrm{Regret}(t) = \mathrm{cumsum}(r^*_t - r_t) \), representing total performance loss due to imperfect learning.

\textbf{Baseline Policies and Regret Comparison.} We compare our algorithm against three fixed policies: \textit{Aggressive bidding}: Always bids above HOB \( (\o_t = [1,1,1]) \). \textit{Random bidding}: Samples \(\o_t\) uniformly from all 8 outcomes. \textit{Passive bidding}: Rarely bids or wins \((\o_t = [0,0,0]) \). All baselines are evaluated using the same oracle-based regret metric. Note that \textit{aggressive bidding}, while competitive, still incurs \( O(T) \) regret since it is non-adaptive.

We ran all algorithms over 20 independent trials. Table \ref{tab:regret_comparison} reports the mean cumulative regret ($\pm$ 0.5 standard deviation) and the fitted regret order (via log–log regression). Our algorithm achieves an estimated regret order of 0.37, while all baselines exhibit linear regret, validating our theoretical results and demonstrating the substantial advantage of adaptive learning in ad bidding.
\vspace{-3mm}
\begin{table}[htbp]
\centering
\setlength{\tabcolsep}{4pt} 
\caption{Average cumulative regret ($\pm$ 0.5 standard deviation) and associated fitted regret order ($T^{\alpha}$).}
\label{tab:regret_comparison}
\begin{tabular}{p{0.65cm}p{2.7cm}p{2.7cm}p{3.1cm}p{3.6cm}}
\toprule
\textbf{$t$} & \textbf{Algorithm 1} & \textbf{Aggressive Bid} & \textbf{Random Bid} & \textbf{Passive Bid} \\
\midrule
500     & 1770 [1174, 2379]  & 228 [121, 336]    & 1920 [1791, 2049]    & 4630 [3391, 5869] \\
5000    & 8465 [5585, 11345] & 2373 [1243, 3502] & 19285 [17873, 20697] & 46179 [33850, 58508] \\
10000   & 8484 [5606, 11363] & 4741 [2477, 7005] & 38399 [35573, 41226] & 92468 [67735, 117201] \\
15000   & 8505 [5628, 11382] & 7142 [3724, 10561]& 57651 [53452, 61849] & 138652 [101576, 175729] \\
20000   & \textbf{8525 [5649, 11401]} & 9568 [4991, 14146]& 76945 [71390, 82500] & 184873 [135414, 234332] \\
\midrule
\textbf{$T^{\alpha}$} & \textbf{0.37} & \textbf{1.0} & \textbf{1.0} & \textbf{1.0} \\
\bottomrule
\end{tabular}
\end{table}
\vspace{-5mm}
\section{Discussion and Limitation}
\vspace{-1mm}
In this paper, we model ad bidding as a CMDP, presenting a unified framework to address delayed impacts, cumulative effects, and individual personalization. We designed a near-optimal algorithm using a novel two-stage maximum likelihood estimator to effectively handle delayed rewards. However, several important directions remain open for future research. One promising extension involves incorporating budget constraints, a critical practical consideration in real-world advertising. Developing constrained optimization algorithms for CMDP with delayed rewards is highly non-trivial and will be explored in future work.Although our work validates the theoretical insights through experiments, implementing and evaluating the proposed algorithms in real-world settings would provide valuable evidence of its practical applicability. In particular, such experiments could help assess two modeling choices: the adequacy of modeling the expected product conversion rate using linear functions, and the robustness of focusing only on very recent ad exposures due to recency bias. However, given the lack of publicly available online advertising and bidding datasets, we leave this empirical validation to future work as an important step toward bridging the gap between theory and practice.

% \section*{References}
\bibliographystyle{unsrtnat}
\bibliography{refs}

\begin{thebibliography}{57}
\providecommand{\natexlab}[1]{#1}
\providecommand{\url}[1]{\texttt{#1}}
\expandafter\ifx\csname urlstyle\endcsname\relax
  \providecommand{\doi}[1]{doi: #1}\else
  \providecommand{\doi}{doi: \begingroup \urlstyle{rm}\Url}\fi

\bibitem[{ITA}(2025)]{tradegov2025}
{ITA}.
\newblock {E-commerce Sales Size and Forecast}, 2025.
\newblock URL \url{https://www.trade.gov/ecommerce-sales-size-forecast}.
\newblock Accessed: 2025-01-25.

\bibitem[Aggarwal et~al.(2024)Aggarwal, Badanidiyuru, Balseiro, Bhawalkar, Deng, Feng, Goel, Liaw, Lu, Mahdian, et~al.]{aggarwal2024auto}
Gagan Aggarwal, Ashwinkumar Badanidiyuru, Santiago~R Balseiro, Kshipra Bhawalkar, Yuan Deng, Zhe Feng, Gagan Goel, Christopher Liaw, Haihao Lu, Mohammad Mahdian, et~al.
\newblock Auto-bidding and auctions in online advertising: A survey.
\newblock \emph{ACM SIGecom Exchanges}, 22\penalty0 (1):\penalty0 159--183, 2024.

\bibitem[{Google Ads Support}(2025)]{google_ads_autobidding}
{Google Ads Support}.
\newblock {About Automated Bidding in Google Ads}, 2025.
\newblock URL \url{https://support.google.com/google-ads/answer/2979071?hl=en}.
\newblock Accessed: 2025-01-25.

\bibitem[{Facebook Ads}(2025)]{facebook_business_autobidding}
{Facebook Ads}.
\newblock {About Automated Bidding on Facebook Ads}, 2025.
\newblock URL \url{https://www.facebook.com/business/help/1619591734742116?id=2196356200683573}.
\newblock Accessed: 2025-01-25.

\bibitem[{Microsoft Ads}(2025)]{microsoft_ads_autobidding}
{Microsoft Ads}.
\newblock {Automated Bidding Tools for Performance Optimization}, 2025.
\newblock URL \url{https://about.ads.microsoft.com/en/tools/performance/automated-bidding}.
\newblock Accessed: 2025-01-25.

\bibitem[{Amazon Ads}(2025)]{amazon_dynamic_bidding}
{Amazon Ads}.
\newblock {Dynamic Bidding for Sponsored Products Guide}, 2025.
\newblock URL \url{https://advertising.amazon.com/library/guides/dynamic-bidding-sponsored-products}.
\newblock Accessed: 2025-01-25.

\bibitem[Vakratsas and Ambler(1999)]{vakratsas1999advertising}
Demetrios Vakratsas and Tim Ambler.
\newblock How advertising works: what do we really know?
\newblock \emph{Journal of marketing}, 63\penalty0 (1):\penalty0 26--43, 1999.

\bibitem[Lewis and Wong(2022)]{lewis2022incrementalitybiddingattribution}
Randall Lewis and Jeffrey Wong.
\newblock Incrementality bidding and attribution, 2022.
\newblock URL \url{https://arxiv.org/abs/2208.12809}.

\bibitem[Sakalauskas and Kriksciuniene(2024)]{sakalauskas2024personalized}
Virgilijus Sakalauskas and Dalia Kriksciuniene.
\newblock Personalized advertising in e-commerce: Using clickstream data to target high-value customers.
\newblock \emph{Algorithms}, 17\penalty0 (1):\penalty0 27, 2024.

\bibitem[Liu et~al.(2019)Liu, Li, Cao, Qiu, and Chen]{liu2019estimating}
Hao Liu, Yunze Li, Qinyu Cao, Guang Qiu, and Jiming Chen.
\newblock Estimating individual advertising effect in e-commerce.
\newblock \emph{arXiv preprint arXiv:1903.04149}, 2019.

\bibitem[Gordon et~al.(2019)Gordon, Zettelmeyer, Bhargava, and Chapsky]{gordon2019comparison}
Brett~R Gordon, Florian Zettelmeyer, Neha Bhargava, and Dan Chapsky.
\newblock A comparison of approaches to advertising measurement: Evidence from big field experiments at facebook.
\newblock \emph{Marketing Science}, 38\penalty0 (2):\penalty0 193--225, 2019.

\bibitem[Pechmann and Stewart(1988)]{pechmann1988advertising}
Cornelia Pechmann and David~W Stewart.
\newblock Advertising repetition: A critical review of wearin and wearout.
\newblock \emph{Current issues and research in advertising}, 11\penalty0 (1-2):\penalty0 285--329, 1988.

\bibitem[Lane(2000)]{lane2000impact}
Vicki~R Lane.
\newblock The impact of ad repetition and ad content on consumer perceptions of incongruent extensions.
\newblock \emph{Journal of Marketing}, 64\penalty0 (2):\penalty0 80--91, 2000.

\bibitem[You et~al.(2015)You, Kim, and Cha]{you2015smartphone}
Soeun You, Taeha Kim, and Hoon~S Cha.
\newblock The smartphone user's dilemma among personalization, privacy, and advertisement fatigue: An empirical examination of personalized smartphone advertisement.
\newblock \emph{Information Systems Review}, 17\penalty0 (2):\penalty0 77--100, 2015.

\bibitem[Bell et~al.(2022)Bell, Mieth, and Buchner]{bell2022coping}
Raoul Bell, Laura Mieth, and Axel Buchner.
\newblock Coping with high advertising exposure: a source-monitoring perspective.
\newblock \emph{Cognitive Research: Principles and Implications}, 7\penalty0 (1):\penalty0 82, 2022.

\bibitem[Guo and Jiang(2024)]{guo2024optimal}
Rui Guo and Zhengrui Jiang.
\newblock Optimal dynamic advertising policy considering consumer ad fatigue.
\newblock \emph{Decision Support Systems}, 187:\penalty0 114323, 2024.

\bibitem[Weed et~al.(2016)Weed, Perchet, and Rigollet]{pmlr-v49-weed16}
Jonathan Weed, Vianney Perchet, and Philippe Rigollet.
\newblock Online learning in repeated auctions.
\newblock In Vitaly Feldman, Alexander Rakhlin, and Ohad Shamir, editors, \emph{29th Annual Conference on Learning Theory}, volume~49 of \emph{Proceedings of Machine Learning Research}, pages 1562--1583, Columbia University, New York, New York, USA, 23--26 Jun 2016. PMLR.
\newblock URL \url{https://proceedings.mlr.press/v49/weed16.html}.

\bibitem[De~Haan et~al.(2016)De~Haan, Wiesel, and Pauwels]{de2016effectiveness}
Evert De~Haan, Thorsten Wiesel, and Koen Pauwels.
\newblock The effectiveness of different forms of online advertising for purchase conversion in a multiple-channel attribution framework.
\newblock \emph{International journal of research in marketing}, 33\penalty0 (3):\penalty0 491--507, 2016.

\bibitem[Ren et~al.(2017)Ren, Zhang, Chang, Rong, Yu, and Wang]{ren2017bidding}
Kan Ren, Weinan Zhang, Ke~Chang, Yifei Rong, Yong Yu, and Jun Wang.
\newblock Bidding machine: Learning to bid for directly optimizing profits in display advertising.
\newblock \emph{IEEE Transactions on Knowledge and Data Engineering}, 30\penalty0 (4):\penalty0 645--659, 2017.

\bibitem[Feng et~al.(2018)Feng, Podimata, and Syrgkanis]{feng2018learning}
Zhe Feng, Chara Podimata, and Vasilis Syrgkanis.
\newblock Learning to bid without knowing your value.
\newblock In \emph{Proceedings of the 2018 ACM Conference on Economics and Computation}, pages 505--522, 2018.

\bibitem[Feng et~al.(2023)Feng, Liaw, and Zhou]{feng2023improved}
Zhe Feng, Christopher Liaw, and Zixin Zhou.
\newblock Improved online learning algorithms for ctr prediction in ad auctions.
\newblock In \emph{International Conference on Machine Learning}, pages 9921--9937. PMLR, 2023.

\bibitem[Han et~al.(2024)Han, Weissman, and Zhou]{han2024optimal}
Yanjun Han, Tsachy Weissman, and Zhengyuan Zhou.
\newblock Optimal no-regret learning in repeated first-price auctions.
\newblock \emph{Operations Research}, 2024.

\bibitem[Zhang and Luo(2024)]{zhang2024online}
Mengxiao Zhang and Haipeng Luo.
\newblock Online learning in contextual second-price pay-per-click auctions.
\newblock In \emph{International Conference on Artificial Intelligence and Statistics}, pages 2395--2403. PMLR, 2024.

\bibitem[Deng et~al.(2022)Deng, Golrezaei, Jaillet, Liang, and Mirrokni]{deng2022fairness}
Yuan Deng, Negin Golrezaei, Patrick Jaillet, Jason Cheuk~Nam Liang, and Vahab Mirrokni.
\newblock Fairness in the autobidding world with machine-learned advice.
\newblock \emph{arXiv preprint arXiv:2209.04748}, 4, 2022.

\bibitem[Wang et~al.(2019)Wang, Yang, Yu, Zhou, Liu, and Song]{wang2019revenue}
Tengyun Wang, Haizhi Yang, Han Yu, Wenjun Zhou, Yang Liu, and Hengjie Song.
\newblock A revenue-maximizing bidding strategy for demand-side platforms.
\newblock \emph{IEEE Access}, 7:\penalty0 68692--68706, 2019.

\bibitem[Badanidiyuru et~al.(2023)Badanidiyuru, Feng, Li, and Xu]{badanidiyuru2023incrementalitybiddingreinforcementlearning}
Ashwinkumar Badanidiyuru, Zhe Feng, Tianxi Li, and Haifeng Xu.
\newblock Incrementality bidding via reinforcement learning under mixed and delayed rewards, 2023.
\newblock URL \url{https://arxiv.org/abs/2206.01293}.

\bibitem[Hallak et~al.(2015)Hallak, Di~Castro, and Mannor]{hallak2015contextual}
Assaf Hallak, Dotan Di~Castro, and Shie Mannor.
\newblock Contextual markov decision processes.
\newblock \emph{arXiv preprint arXiv:1502.02259}, 2015.

\bibitem[Levy and Mansour(2023)]{Levy_Mansour_2023}
Orin Levy and Yishay Mansour.
\newblock Optimism in face of a context:regret guarantees for stochastic contextual mdp.
\newblock \emph{Proceedings of the AAAI Conference on Artificial Intelligence}, 37\penalty0 (7):\penalty0 8510--8517, Jun. 2023.
\newblock \doi{10.1609/aaai.v37i7.26025}.
\newblock URL \url{https://ojs.aaai.org/index.php/AAAI/article/view/26025}.

\bibitem[Manandhar(2018)]{manandhar2018effect}
Binita Manandhar.
\newblock Effect of advertisement in consumer behavior.
\newblock \emph{Management Dynamics}, 21\penalty0 (1):\penalty0 46--54, 2018.

\bibitem[Murphy et~al.(2006)Murphy, Hofacker, and Mizerski]{murphy2006primacy}
Jamie Murphy, Charles Hofacker, and Richard Mizerski.
\newblock Primacy and recency effects on clicking behavior.
\newblock \emph{Journal of computer-mediated communication}, 11\penalty0 (2):\penalty0 522--535, 2006.

\bibitem[Chatfield(2016)]{chatfield2016trouble}
T~Chatfield.
\newblock The trouble with big data? it’s called the ‘recency bias’.
\newblock \emph{BBC. Retrieved April}, 28:\penalty0 2023, 2016.

\bibitem[Phillips-Wren et~al.(2019)Phillips-Wren, Power, and Mora]{phillips2019cognitive}
Gloria Phillips-Wren, Daniel~J Power, and Manuel Mora.
\newblock Cognitive bias, decision styles, and risk attitudes in decision making and dss, 2019.

\bibitem[{Google Ads Attribution}(2025)]{google_attrubution_modeling}
{Google Ads Attribution}.
\newblock {Guide To Google Ads Attribution Models in 2025 – Is Data-Driven Attribution The Future?}, 2025.
\newblock URL \url{https://www.datafeedwatch.com/blog/google-ads-attribution-models}.
\newblock Accessed: 2025-05-11.

\bibitem[Cesa-Bianchi et~al.(2014)Cesa-Bianchi, Gentile, and Mansour]{cesa2014regret}
Nicolo Cesa-Bianchi, Claudio Gentile, and Yishay Mansour.
\newblock Regret minimization for reserve prices in second-price auctions.
\newblock \emph{IEEE Transactions on Information Theory}, 61\penalty0 (1):\penalty0 549--564, 2014.

\bibitem[Zhang et~al.(2022)Zhang, Han, Zhou, Flores, and Weissman]{zhang2022leveraging}
Wei Zhang, Yanjun Han, Zhengyuan Zhou, Aaron Flores, and Tsachy Weissman.
\newblock Leveraging the hints: Adaptive bidding in repeated first-price auctions.
\newblock \emph{Advances in Neural Information Processing Systems}, 35:\penalty0 21329--21341, 2022.

\bibitem[{Google Developers}(2024)]{google2024openrtb}
{Google Developers}.
\newblock Authorized buyers real-time bidding (rtb) protocol guide, 2024.
\newblock URL \url{https://developers.google.com/authorized-buyers/rtb/openrtb-guide}.
\newblock Accessed: 2024-01-30.

\bibitem[Laffont et~al.(1995)Laffont, Ossard, and Vuong]{laffont1995econometrics}
Jean-Jacques Laffont, Herve Ossard, and Quang Vuong.
\newblock Econometrics of first-price auctions.
\newblock \emph{Econometrica: Journal of the Econometric Society}, pages 953--980, 1995.

\bibitem[Wilson(1998)]{wilson1998sequential}
Robert Wilson.
\newblock Sequential equilibria of asymmetric ascending auctions: The case of log-normal distributions.
\newblock \emph{Economic Theory}, 12:\penalty0 433--440, 1998.

\bibitem[Smith et~al.(2003)Smith, Farmer, Gillemot, and Krishnamurthy]{smith2003statistical}
Eric Smith, J~Doyne Farmer, L{\'a}szl{\'o} Gillemot, and Supriya Krishnamurthy.
\newblock Statistical theory of the continuous double auction.
\newblock \emph{Quantitative finance}, 3\penalty0 (6):\penalty0 481, 2003.

\bibitem[Skitmore(2008)]{skitmore2008first}
Martin Skitmore.
\newblock First and second price independent values sealed bid procurement auctions: some scalar equilibrium results.
\newblock \emph{Construction Management and Economics}, 26\penalty0 (8):\penalty0 787--803, 2008.

\bibitem[Ballesteros-P{\'e}rez and Skitmore(2017)]{ballesteros2017distribution}
Pablo Ballesteros-P{\'e}rez and Martin Skitmore.
\newblock On the distribution of bids for construction contract auctions.
\newblock \emph{Construction Management and Economics}, 35\penalty0 (3):\penalty0 106--121, 2017.

\bibitem[Cooper and Fang(2008)]{cooper2008understanding}
David~J Cooper and Hanming Fang.
\newblock Understanding overbidding in second price auctions: An experimental study.
\newblock \emph{The Economic Journal}, 118\penalty0 (532):\penalty0 1572--1595, 2008.

\bibitem[Zhao and Chen(2019)]{zhao2019online}
Haoyu Zhao and Wei Chen.
\newblock Online second price auction with semi-bandit feedback under the non-stationary setting.
\newblock \emph{arXiv preprint arXiv:1911.05949}, 2019.

\bibitem[Abbasi-Yadkori et~al.(2011)Abbasi-Yadkori, P{\'a}l, and Szepesv{\'a}ri]{abbasi2011improved}
Yasin Abbasi-Yadkori, D{\'a}vid P{\'a}l, and Csaba Szepesv{\'a}ri.
\newblock Improved algorithms for linear stochastic bandits.
\newblock \emph{Advances in neural information processing systems}, 24, 2011.

\bibitem[Xue et~al.(2024)Xue, Wang, Wan, Yi, and Zhang]{xue2024efficient}
Bo~Xue, Yimu Wang, Yuanyu Wan, Jinfeng Yi, and Lijun Zhang.
\newblock Efficient algorithms for generalized linear bandits with heavy-tailed rewards.
\newblock \emph{Advances in Neural Information Processing Systems}, 36, 2024.

\bibitem[Rao(1992)]{rao1992information}
C~Radhakrishna Rao.
\newblock Information and the accuracy attainable in the estimation of statistical parameters.
\newblock In \emph{Breakthroughs in Statistics: Foundations and basic theory}, pages 235--247. Springer, 1992.

\bibitem[Modi and Tewari(2020)]{modi2020no}
Aditya Modi and Ambuj Tewari.
\newblock No-regret exploration in contextual reinforcement learning.
\newblock In \emph{Conference on Uncertainty in Artificial Intelligence}, pages 829--838. PMLR, 2020.

\bibitem[Levy and Mansour(2022)]{levy2022learningefficientlyfunctionapproximation}
Orin Levy and Yishay Mansour.
\newblock Learning efficiently function approximation for contextual mdp, 2022.
\newblock URL \url{https://arxiv.org/abs/2203.00995}.

\bibitem[Levy et~al.(2023)Levy, Cohen, Cassel, and Mansour]{pmlr-v202-levy23a}
Orin Levy, Alon Cohen, Asaf Cassel, and Yishay Mansour.
\newblock Efficient rate optimal regret for adversarial contextual {MDP}s using online function approximation.
\newblock In Andreas Krause, Emma Brunskill, Kyunghyun Cho, Barbara Engelhardt, Sivan Sabato, and Jonathan Scarlett, editors, \emph{Proceedings of the 40th International Conference on Machine Learning}, volume 202 of \emph{Proceedings of Machine Learning Research}, pages 19287--19314. PMLR, 23--29 Jul 2023.
\newblock URL \url{https://proceedings.mlr.press/v202/levy23a.html}.

\bibitem[Kearns and Singh(2002)]{kearns2002near}
Michael Kearns and Satinder Singh.
\newblock Near-optimal reinforcement learning in polynomial time.
\newblock \emph{Machine learning}, 49:\penalty0 209--232, 2002.

\bibitem[Badanidiyuru et~al.(2021)Badanidiyuru, Feng, and Guruganesh]{badanidiyuru2021learning}
Ashwinkumar Badanidiyuru, Zhe Feng, and Guru Guruganesh.
\newblock Learning to bid in contextual first price auctions, 2021.

\bibitem[Jia et~al.(2022)Jia, Zhang, Zhou, Gu, and Wang]{jia2022learning}
Yiling Jia, Weitong Zhang, Dongruo Zhou, Quanquan Gu, and Hongning Wang.
\newblock Learning neural contextual bandits through perturbed rewards.
\newblock \emph{arXiv preprint arXiv:2201.09910}, 2022.

\bibitem[Jin et~al.(2020)Jin, Yang, Wang, and Jordan]{jin2020provably}
Chi Jin, Zhuoran Yang, Zhaoran Wang, and Michael~I Jordan.
\newblock Provably efficient reinforcement learning with linear function approximation.
\newblock In \emph{Conference on learning theory}, pages 2137--2143. PMLR, 2020.

\bibitem[He et~al.(2022)He, Zhou, Zhang, and Gu]{he2022nearly}
Jiafan He, Dongruo Zhou, Tong Zhang, and Quanquan Gu.
\newblock Nearly optimal algorithms for linear contextual bandits with adversarial corruptions.
\newblock \emph{Advances in neural information processing systems}, 35:\penalty0 34614--34625, 2022.

\bibitem[Agarwal et~al.(2019)Agarwal, Jiang, Kakade, and Sun]{agarwal2019reinforcement}
Alekh Agarwal, Nan Jiang, Sham~M Kakade, and Wen Sun.
\newblock Reinforcement learning: Theory and algorithms.
\newblock \emph{CS Dept., UW Seattle, Seattle, WA, USA, Tech. Rep}, 32:\penalty0 96, 2019.

\bibitem[Wainwright(2019)]{wainwright2019high}
Martin~J Wainwright.
\newblock \emph{High-dimensional statistics: A non-asymptotic viewpoint}, volume~48.
\newblock Cambridge university press, 2019.

\bibitem[Hao et~al.(2021)Hao, Lattimore, Szepesv{\'a}ri, and Wang]{hao2021online}
Botao Hao, Tor Lattimore, Csaba Szepesv{\'a}ri, and Mengdi Wang.
\newblock Online sparse reinforcement learning.
\newblock In \emph{International Conference on Artificial Intelligence and Statistics}, pages 316--324. PMLR, 2021.

\end{thebibliography}

%%%%%%%%%%%%%%%%%%%%%%%%%%%%%%%%%%%%%%%%%%%%%%%%%%%%%%%%%%%%
\newpage
\appendix

\begin{center}
    \textbf{\Large Appendix to \textit{``Learning Personalized Ad Impact via Contextual Reinforcement Learning under Delayed Rewards''}}
\end{center}
\section{Additional Discussion}
\subsection{Related Work}\label{sec:extended_discussion}
Motivated by online advertising auctions, \citet{pmlr-v49-weed16} study repeated Vickrey auctions where goods with unknown values ($v_t$) are sold sequentially. Bidders receive (potentially noisy) feedback about a good’s value only after purchasing it—analogous to observing product conversions after displaying an ad. They formulate this problem as online learning with bandit feedback and propose a minimax-optimal bidding strategy for bidding ($b_t$). However, a key limitation of their model is the neglect of long-term advertising effects: winning a bid at time $t$ may influence future observations and outcomes, such as conversions at time $t+1$. Building on this line of work, \citet{feng2018learning} examines learning to bid without knowing one’s value in a similar bandit setting and develops the WIN-EXP algorithm, which achieves sublinear regret ($O(\sqrt{T|O|\log(B)})$). Yet, like the earlier study, it assumes that the effect of winning a bid is instantaneous, ignoring possible intertemporal dependencies.

In a related direction, \citet{han2024optimal} investigate online learning in repeated first-price auctions, where a bidder observes only the winning bid after each auction and must adaptively learn to maximize cumulative payoff. Facing censored feedback—since winning bidders cannot observe their competitors’ bids—they design the first algorithm achieving near-optimal ($O(\sqrt{T})$) regret by exploiting structural properties of first-price auctions, including their feedback and payoff functions. They formulate the problem as partially ordered contextual bandits, incorporating graph feedback across actions, cross-learning across contexts, and a partial order over private values. Nonetheless, their framework similarly assumes independence between the value process ($v_t$) and past bidding outcomes, thereby excluding long-term effects of winning bids.

Extending to contextual settings, \citet{zhang2024online} study online learning in contextual pay-per-click auctions. At each of $T$ rounds, the learner observes contextual information and a set of candidate ads, estimates their click-through rates (CTRs), and conducts a second-price pay-per-click auction. The objective is to minimize regret relative to an oracle that makes perfect CTR predictions. They show that a $\sqrt{T}$-regret rate is achievable (albeit with computational inefficiency) and provably optimal, as the problem reduces to the classical multi-armed bandit setting.

While these studies advance our understanding of strategic bidding under uncertainty, they share a common limitation: none account for the reinforcement or long-term impact of successful ad displays. This omission contrasts with empirical evidence such as \citet{de2016effectiveness}, who demonstrate that online advertisements exert persistent effects on purchase conversion in a multi-channel attribution framework—though their dataset is not publicly available.

More recently, \citet{badanidiyuru2021learning} address this gap by modeling the long-term causal impact of ad impressions using a Markov Decision Process (MDP) with mixed and delayed Poisson rewards. However, their approach assumes homogeneous treatment effects across users, overlooking the crucial role of personalization in determining advertising effectiveness. To address these limitations, our work jointly models the long-term, reinforcing, and personalized effects of advertisements, hoping to bridge the gap between sequential decision-making and individualized ad impact estimation.

\subsection{On the Potential to Extend the Linearity Assumption}\label{sec:linearity_assumption}
Briefly recap of our model (Assumption \ref{eq:impact_model}), if we won the bid $\o^t_h = 1$, the average product conversion rate $\mu^t_h = h_{S^t_{h, 1}}(\x_t)$, where $\x_t$ is the received feature of customer, and $S^t_{h, 1}$ is the state which captures the time elapsed since most recent ads impression, $h_{S^t_{h, 1}}$ is the state dependent neural network which we want to estimate. If we lose the bid $\o^t_h = 0$, the average product conversion rate $\mu^t_h = d_{S^t_{h,1}}h_{S^t_{h, 2}}(\x_t)$, where $d_{S^t_{h,1}}$ is the delayed impact, and $S^t_{h, 2}$ is the time interval between the two most recent impressions (Def \ref{def:state}).

To incorporate the estimation of the state-dependent neural network $h_l$ (analogous to $\btheta_l$) into our algorithm, we could do follows. Everything in Algorithm \ref{algo} remains unchanged, i.e. the exploration, exploitation, data-splitting strategy, and estimation, exception the following modifications.
\begin{itemize}
    \item First, we replace the estimation of $\btheta_l$ (Eqn. \eqref{eq:theta}) by estimating the neural network $h_l$ by the following procedure (similar to Algorithm \ref{algo} in \citet{jia2022learning}). 
    \begin{itemize}
        \item  We approximate $h(x)$ by $f(x, \theta) = \sqrt{m}W_L\phi(W_{L-1}\phi(\ldots \phi(W_1x)))$, where $m$ is the network width, and $\phi(x) = ReLU(x)$, $L$ is the neural network depth.
        \item We initialize $\theta_0 = (\text{vec}(W_1) \ldots \text{vec}(W_L))$ by random sample entry from $\mathcal{N}(0, 4/m)$ and do online updating by gradient descent with perturbed reward using observed product conversion $\y^t_h$ by follows
        \begin{itemize}
            \item Generated observation perturbation $\gamma^t_{s \in [t]} \sim \text{Poi} (\lambda)$, then generate binomial random variable $\bI_{s \in [t]}$ to add or subtract the noise from the observation
            $$\min_{\theta}L(\theta) = \sum^t_{s=1}(f(\x_t, \theta) - (\y^s_h + \bI_{s}\gamma^t_s))^2/2 + m\beta\|\theta - \theta_0\|^2/2$$
        \end{itemize}
    \item Then based on this first stage estimates of $h$, plugging in to estimate delayed impact factor $d$ by $\frac{\sum^t_{s=1}\sum_{h \in \D_{s, l}} \y^s_h }{\sum^t_{s=1}\sum_{h \in \D_{s, l}}f_{S^s_{h,2}}(\x_s, \hat \btheta)}$ (analogous to Eqn. \eqref{eq:dl})
    \end{itemize}
    \item Since we do not have theoretical guarantees for $h$ estimation (no confidence interval), we cannot do upper confidence bound (Line 10 of our Algorithm \ref{algo}). Instead, we might act greedily based on our point estimate. Even though Algorithm in \citet{jia2022learning} has theoretical guarantees, the key difference is that they assume noise of their observation is sub-Gaussian, while in our case, it is not realistic to assume that the observation $\y^t_h$ has sub-Gaussian noise, since product conversion has been known to have heavy tail for a long period of time. It is still a challenging open problem of developing theoretical-guaranteed neural contextual bandits with heavy tail observations.
\end{itemize}

However, our proposed framework leads to near-optimal results if we can parameterize $h_l(\x_t) = \btheta^{\top}_l \phi(\x_t)$, where $\phi$ can be the neural net based embedding or other known feature mapping, then our algorithm remains near optimal, satisfying $\sqrt{T}$ regret bound in this case. This construction is very common in academia and industry as we mentioned from foundational studies such as \citet{jin2020provably} to the recent ones \citep{he2022nearly}. In addition, in real world, companies might construct these feature mapping using complicated methods but retain the overall linear structure for the ease of scalability and interpretability. For example, the expected conversion rate could be a linear function of the expected webpage stay time and the predicted total spending but the expected webpage stay time and the predicted total spending are deep neural networks of the observed customer feature $\x_t$.

To summarize, our proposed framework leads to near-optimal results if there exists an online estimation oracle with theoretical guarantee for the estimation of $h$ and the estimation oracle employed in our paper based on our knowledge is state-of-the-art. Our algorithm is possible to extend to a neural network if we drop the theoretical concerns. Developing theoretical guarantee for neural bandits under heavy tail observation could be a promising future direction.

\subsection{Flexibility in State Space Modeling}\label{sec:discussion}
If we believe that all past ad exposures contribute to customer behavior, we can enrich the state variable to $S_h^t = [S_{h,1}^t, S_{h,2}^t, n_h^t]$, where $n_h^t$ denotes the total number of ads the user has seen prior to round $h$. Under this formulation, the expected conversion from showing an ad at round $h$ is given by $\langle \btheta_{S_{h,1}^t, n_h^t}, \x_t \rangle$, with other modeling assumptions unchanged. In other words, the effect of the current ad depends not only on the time since the last impression, $h-G^t_{h,1}$ (as shown in Figure \ref{fig:fig1}), but also on the cumulative number of ad exposures the user has experienced so far.

Accordingly, Assumption \ref{eq:impact_model} as follows:

\begin{enumerate}
    \item When $\o_h^t = 0$, no ad is shown at round $h$, and the effect of recent ads at $G^t_{h,1}$ carries over, given by $\langle \btheta_{S_{h,2}^t, (n_h^t - 1) \vee 0}, \x_t \rangle \, d_{S_{h,1}^t}$, where $a \vee b$ denotes $\max(a, b)$.
    \item When $\o_h^t = 1$, an ad is shown at round $h$, and the impact of the current ad is modeled as $\langle \btheta_{S_{h,1}^t, n_h^t}, \x_t \rangle$, influenced by all the past ads exposure history.
\end{enumerate}

We emphasize that, our core algorithmic design, including the data-splitting strategy, the proposed two-stage estimator, and the exploration-exploitation algorithm, readily extends to this new model with the enriched state representation. In addition, our theoretical results continue to hold under this new model, with a new optimal regret bound of order \(\widetilde O(dH^3\sqrt{T})\), in comparison to \(\widetilde O(dH^2\sqrt{T})\) under the previous model. We provide supporting evidence below.

Now the enriched states is $S^t_h = [S_{h,1}^t, S_{h,2}^t,n_h^t]$. In terms of state transition, if $\o^t_h = 1$, $S^t_{h+1} = [1, S^t_{h,1},n_h^t + 1]$; else, $S^t_{h+1} = [S^t_{h,1} + 1, S^t_{h,2}, n_h^t]$. $\Theta$ now becomes $\{\btheta_{l, n}\}_{{l, n} \in \mathcal{H}} \cup \{\d_l\}_{l \in \mathcal{H}_1}$ with $\cH = \{(l, n): l \in \cH_1 \cup \cH_2, n < l\}$, \(\mathcal{H}_1 = \{\infty, 1, 2, \dots, H-1\} \) and \( \mathcal{H}_2 = \{-\infty, 1, 2, \dots, H-2\} \). 

For data spiting strategy in Def \ref{def:dataset}, $\D_{t, l}$ does not change and $\W_{t, l}$ will extend to $\W_{t, l, n} = \{h | S^t_{h, 1} = l, n^t_h = n, \o^t_h = 1\}, \forall l \in [\infty, 1, 2, \ldots, H], \W_{t, -\infty, n} = \{h | S^t_{h, 1} = -\infty, n^t_h = n, \o^t_h = 0\}$. Also, $\W^t_l$ will extend to $\W^t_{l, n} = \{\W_{s, l, n}\}^t_{s = 1}$. The estimation of $\btheta_{l, n}$ using observation in $\W^t_{l, n}$ still follows online newton estimator with Eqn. \eqref{eq:theta}. The TS-MLE now becomes
\begin{equation*}
    \hat \d^t_l = \frac{\sum^t_{s = 1}\sum_{h \in \D_{s, l}} \y^{s}_h}{\sum^t_{s=1}\sum_{h \in \D_{s, l}}\langle \hat \btheta^{s}_{S^{s}_{h, 2}, n^s_h-1 \vee 0},\x_s\rangle}.
\end{equation*}
Plugging in these new estimators in to the regret decomposition (Eqn. (\eqref{eq:reg}), Term (i) and Term (iv) does not change since new states did not affect transition error. For the estimation error, Term (iii), we just need an additional for loop to account the effect of $n^t_h$, which makes the new regret upper bound now $\Tilde{\mathcal{O}}(dH^3\sqrt{T})$. 

The above example suggests that our state formulation is very flexible and readily accommodates an enlarged state space. It supports a broad class of CMDP formulations for $\btheta_S$, where $S$ encodes domain knowledge about ad effects, and remains compatible with our proposed learning algorithm.

\subsection{Illustrative Example}\label{sec:example}
\begin{figure}[H]
    \centering
    \includegraphics[width=\linewidth]{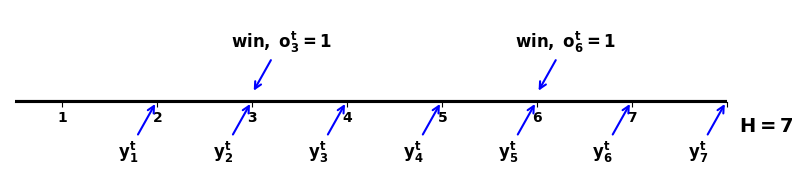}
    \vspace{-8mm}
    \caption{Illustration of Learner-Customer Interaction in CMDP}
    \label{fig:illustration}
\end{figure}
%\textcolor{red}{Why the figure has 8 points and is misaligned? I feel it should have only 7 points. Are you trying to make it into an interval [1,2], [2,3] type of thing? I do not think that is necessary. I think each point represents a single round is good enough.}

Figure \ref{fig:illustration} presents a simple example of a customer, with contextual $\x_t$ and no prior ad exposure, interacting with the learner \( \cL \) over \( H = 7 \) rounds. At the start of each round, \( \cL \) observes the state \( S^t_h \), determines the bid amount \( \a^t_h = \pi_t(S^t_h, \x_t) \) based on policy \( \pi_t \), observes HOB $m^t_h$, and receives the bidding outcome \( \o^t_h \) along with the reward \( R^t_h(S^t_h, \a^t_h, \x_t) \). At the end of each round, \( \cL \) observes the total product conversion \( \y^t_h \) over the current round. As shown in Figure \ref{fig:illustration}, the customer's first ad impression occurs at \( h = 3 \), resulting in \( S^t_1 = S^t_2 = S^t_3 = [-\infty, \infty] \) and \( \mu^t_1 = \mu^t_2 = \langle \btheta_{-\infty}, \x_t \rangle \) ($\mu^t_h$ defined in Def.\ref{eq:impact_model}). The first ads exposure updates the expected conversion to \( \mu^t_3 = \langle \btheta_{\infty}, \x_t \rangle \), capturing the immediate impact of ad exposure. The second ad impression occurs at \( h = 6 \). Between \( h = 4 \) and \( h = 6 \), the state evolves as \( S^t_4 = [1, \infty], S^t_5 = [2, \infty], S^t_6 = [3, \infty] \), with expected conversions \( \mu^t_4 = d_1 \langle \btheta_{\infty}, \x_t \rangle \) and \( \mu^t_5 = d_2 \langle \btheta_{\infty}, \x_t \rangle \), reflecting the delayed and long-term impact of the ad displayed at \( h = 3 \). \( \mu^t_6 = \langle \btheta_3, \x_t \rangle \), capturing the effect of repeated ad exposure. Finally, at \( h = 7 \), \( S^t_7 = [1, 3] \) and \( \mu^t_7 = d_1 \langle \btheta_3, \x_t \rangle \), demonstrating the long-term effects of the second ad impression.
%\textcolor{red}{Zifeng2: Maybe an illustrative picture with all the timeline and notations of a single consumer would be very helpful.}

\subsection{Computational Complexity}\label{sec:complexity}
For each episode of Algorithm \ref{algo}, the computational complexities for updating \(\hat \btheta^t_l\), \(\hat \bbeta^t_l\), and \(\hat \d^t_l\) are \(\mathcal{O}(d^2)\) \citep{xue2024efficient}, \(\mathcal{O}(d^2)\), and \(\mathcal{O}(d)\), respectively, for each \(l \in \mathcal{H}\). Notably, for a fixed policy \(\pi\) and transition \(\mathcal{F}^{\x_t}\), \(R(\pi, \Theta, \mathcal{F}^{\x_t})\) is nondecreasing in \(\d_l\) and \(\btheta_l\), simplifying the computation of \(\max_{\Tilde{\Theta} \in \mathcal{C}_{t-1}} R(\pi; \Tilde{\Theta}, \mathcal{F}^{\x_t}_{t-1})\). The corresponding maximizer are \(\Tilde{\d}^t_l = \hat{\d}^t_l + \frac{4H\B_d\sqrt{d\log\left(1 + \frac{T}{2d}\right)\gamma} + \sqrt{2e\B_{d}\B_x\B_{\theta}\log(2/\delta)}}{\b \sqrt{N_{t,l}}}\) and \(\Tilde{\btheta}^t_l = \hat \btheta^t_l + \frac{\sqrt{\gamma} (\V^t_l)^{-1} \x_t}{\|\x_t\|_{(\V^t_l)^{-1}}}\), obtained via constrained linear optimization, where \(\hat \btheta^t_l\) is from Eqn. \eqref{eq:theta} and \(\gamma\) is from Lemma \ref{lemma:theta_cb}. The greedy policy \(\pi\) is then computed via dynamic programming with time complexity \(\text{Poly}(H, |\mathcal{S}|, \B_{\cA} / \epsilon)\) when discretizing the bidding space for an \(\epsilon\)-optimal solution \citep{agarwal2019reinforcement}.

\section{Details of Omitted Algorithm}\label{sec:crtm}
In this section, we detail the algorithm used to estimate \(\btheta_l\) for \(l \in \cH\). Originally introduced by \citet{xue2024efficient}, the Confidence Region with Truncated Mean (CRTM) algorithm (Algorithm \ref{algo:theta}) serves as an efficient estimator for generalized bandits with heavy-tailed rewards. In their framework, the stochastic observations \(\y_t\) follow a generalized linear model:

\[
\bP(\y_t | \x_t) = \exp\left(\frac{\y_t \x^{\top}_t \btheta_l - m(\x^{\top}_t\btheta_l)}{g(\tau)} + h(\y_t, \tau)\right),
\]

where \(\btheta_l\) represents the inherent parameters, \(\tau > 0\) is a known scale parameter, and \(g(\cdot)\) and \(h(\cdot)\) are normalizers. The conditional expectation of \(\y_t\) is given by:

\[
\bE(\y_t | \x_t) = m'(\x^{\top}_t\btheta_l),
\]

where \(m'(\cdot)\) is the link function, denoted as \(\mu(\cdot) = m'(\cdot)\). Thus, \(\y_t\) can be expressed as:

\[
\y_t = \mu(\x^{\top}_t\btheta_l) + \eta_t,
\]

where \(\eta_t\) is a random noise term satisfying \(\bE(\eta_t | \mathcal{G}_{t-1}) = 0\). Here, \(\mathcal{G}_{t-1} = \{\x_1, \y_1, \ldots, \x_{t-1}, \y_{t-1}, \x_t\}\) denotes the \(\sigma\)-filtration up to time \(t-1\). CRTM assumes that the link function \(\mu(\cdot)\) is \(L\)-Lipschitz, continuously differentiable, and has a first derivative \(\mu'(\cdot)\) bounded below by a positive constant \(\kappa\), i.e., \(\mu'(z) \geq \kappa\). Additionally, CRTM assumes that the observations \(\y_t\) have bounded moments. Specifically, there exist positive constants \(u\) and \(0 < \epsilon \leq 1\) such that:

\[
\bE(|\y_t|^{1 + \epsilon} \mid \mathcal{G}_{t-1}) \leq u.
\]

In our setting, where \(\y_t\) follows a Poisson distribution, we have \(\epsilon = 1\), \(u = \B_x\B_{\theta}(1 + \B_x\B_{\theta})\), \(\mu(\cdot)\) as a linear function, and $\kappa = 1$. Based on the specific parameters in our setting, the CRTM algorithm (Algorithm \ref{algo:theta}) is adapted as follows:

\begin{algorithm}[H]
\caption{Confidence Region with Truncated Mean \citep{xue2024efficient}}
\begin{algorithmic}[1]\label{algo:theta}
\REQUIRE $\W_{t,l}, \x_t, \V^{t-1}_{l}, \hat \btheta^{t-1}_l, \Gamma, \gamma$
\FOR{$\y^t_h \in \W_{t,l}$}
    \STATE Truncate the observed payoff: $\Tilde{\y}^t_h = \y^t_h \mathbb{I}_{\|\x_t\|_{(\V^t_l)^{-1}} |\y^t_h| \leq \Gamma}$
    \STATE Compute the gradient: $\nabla \Tilde{l}_t(\hat{\btheta}^{t-1}_{l}) = (-\Tilde{\y}^t_h + \x^{\top}_t\hat \btheta^{t-1}_l)\x_t$
    \STATE Update $\V^{t}_l = \V^{t-1}_{l} + \frac{1}{2}\x_t\x^{\top}_t$
    \STATE Update the estimator: 
    \[
    \hat \btheta^{t}_{l} = \arg\min_{\btheta \in \mathbb{R}^d} \frac{\|\btheta - \hat \btheta^{t-1}_{l}\|_{\V^t_{l}}^2}{2} + \langle \btheta - \hat \btheta^{t-1}_{l}, \nabla \Tilde{l}_t(\hat \btheta^{t-1}_{l})\rangle
    \]
    \STATE Construct the confidence region: 
    \[
    \cC^t_{l} = \left\{\btheta_l \in \mathbb{R}^d \mid \|\btheta_l - \hat \btheta^{t}_{l}\|^2_{\V^t_{l}} \leq \gamma \right\}.
    \]
\ENDFOR
\STATE \textbf{return} $(\hat \btheta^{t}_{l}, \cC^t_{l})$
\end{algorithmic}
\end{algorithm}

In particular, the truncation threshold $\Gamma$ is defined by Eqn. \eqref{eq:truncation}.
\begin{equation}\label{eq:truncation}
    \Gamma = 2\sqrt{\B_x\B_{\theta}(1 + \B_x\B_{\theta})\log\left(\frac{4T}{\delta}\right)d\log\left(1 + \frac{T}{2d}\right)}
\end{equation}

Moreover, the bound for the weighted estimation error, \(\|\btheta_l - \hat \btheta^{t}_{l}\|^2_{\V^t_{l}}\), denoted by \(\gamma\), is given by:
\[
\gamma = 896 d \B_{x}\B_{\theta}(1+\B_{x}\B_{\theta})\log\left(\frac{4T}{\delta}\right)\log\left(1 + \frac{T}{2d}\right) + 2 \B^2_{x}\B^2_{\theta} + 48d\B_{x}\B_{\theta}\log\left(1 + \frac{T}{2d}\right).
\]

Running Algorithm \ref{algo:theta} leads to the following lemma.
\begin{relemma}[Lemma \ref{lemma:theta_cb} restated]
    Given $l \in \cH$, with probability at least $1-\delta$, $\hat \btheta^t_l$ defined in Eqn. \eqref{eq:theta} satisfies $\|\btheta_l - \hat \btheta^t_l\|^2_{\V^t_l} \leq \gamma, \forall t \geq 0$.
\end{relemma}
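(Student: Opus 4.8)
The plan is to follow the online-to-confidence-set argument of \citet{xue2024efficient} (their Theorem~1), specialized to our Poisson observations with identity link, for which $\mu(z)=z$, $\kappa=1$, $\epsilon=1$, and $u=\B_x\B_\theta(1+\B_x\B_\theta)$. The structural simplification that makes this tractable is that, with a linear link, the per-round truncated surrogate $l_s(\btheta)=\tfrac12(\x_s^\top\btheta-\tilde{\y}^s_h)^2$ is \emph{exactly} quadratic with Hessian $\x_s\x_s^\top$; this is precisely why the update in Eqn.~\eqref{eq:theta} preconditions with $\V^s_l=\V^{s-1}_l+\tfrac12\x_s\x_s^\top$, and it lets me convert the online Newton step into a clean telescoping recursion with no Taylor remainder to control.

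First I would establish the one-step inequality for the update. Writing $\g_s=\nabla\tilde{l}_s(\hat\btheta^{s-1}_l)$, the unconstrained minimizer is $\hat\btheta^{s}_l=\hat\btheta^{s-1}_l-(\V^s_l)^{-1}\g_s$; the $\V^s_l$-projection onto $\{\|\btheta\|_2\le\B_\theta\}$ only contracts the $\V^s_l$-distance to the feasible point $\btheta_l$, so it can be ignored. Combining the standard mirror-descent identity with the exact quadratic expansion of $l_s$ and telescoping over $s\le t$ (the increment $\tfrac12\x_s\x_s^\top$ of $\V^s_l$ is absorbed by the curvature of $l_s$) yields
\[
\tfrac12\|\btheta_l-\hat\btheta^t_l\|^2_{\V^t_l}
\le \tfrac12\|\btheta_l-\hat\btheta^0_l\|^2_{\I_d}
+\tfrac12\sum_{s=1}^{t}\|\g_s\|^2_{(\V^s_l)^{-1}}
+\sum_{s=1}^{t}\bigl(l_s(\btheta_l)-l_s(\hat\btheta^{s-1}_l)\bigr).
\]
The initialization term is a bounded constant (contributing the additive $\B_x^2\B_\theta^2$ piece of $\gamma$). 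For the comparator term I would use the exact identity
\[
l_s(\btheta_l)-l_s(\hat\btheta^{s-1}_l)
=-\tfrac12\|\btheta_l-\hat\btheta^{s-1}_l\|^2_{\x_s\x_s^\top}
-\bigl(\x_s^\top(\btheta_l-\hat\btheta^{s-1}_l)\bigr)\,\tilde{\eta}_s,
\qquad \tilde{\eta}_s:=\tilde{\y}^s_h-\x_s^\top\btheta_l,
\]
and drop the nonpositive curvature term, leaving only a predictable-direction weighted noise sum.

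Two sums then remain to be controlled. For the gradient potential $\sum_s\|\g_s\|^2_{(\V^s_l)^{-1}}$, the truncation rule $\|\x_s\|_{(\V^s_l)^{-1}}|\tilde{\y}^s_h|\le\Gamma$ together with $|\x_s^\top\hat\btheta^{s-1}_l|\le\B_x\B_\theta$ makes each $\|\g_s\|_{(\V^s_l)^{-1}}$ bounded, so the Elliptical Potential / log-determinant Lemma \citep{abbasi2011improved} gives $\sum_s\|\x_s\|^2_{(\V^s_l)^{-1}}\le 2d\log(1+\tfrac{T}{2d})$ and hence the $d\log(1+\tfrac{T}{2d})$ factors in $\gamma$. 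For the noise sum, I would split $\tilde{\eta}_s$ into a martingale-difference part $\tilde{\y}^s_h-\bE[\tilde{\y}^s_h\mid\cG_{s-1}]$ and a truncation bias $\bE[\tilde{\y}^s_h\mid\cG_{s-1}]-\x_s^\top\btheta_l$. Because truncation bounds the increments while the $(1+\epsilon)$-moment bound $\bE[|\y^s_h|^2\mid\cG_{s-1}]\le u$ bounds their conditional variance, a Freedman/self-normalized martingale inequality controls the first part; the bias is at most $\|\x_s\|_{(\V^s_l)^{-1}}u/\Gamma$, which the choice of $\Gamma$ in Eqn.~\eqref{eq:truncation} renders negligible. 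Since the weights $\x_s^\top(\btheta_l-\hat\btheta^{s-1}_l)$ scale with the quantity being bounded, this produces a self-bounding inequality in $\|\btheta_l-\hat\btheta^t_l\|^2_{\V^t_l}$ that I would resolve to obtain the stated $\gamma$; a union bound over $s\le t\le T$ (the source of the $\log(4T/\delta)$ factor) upgrades the estimate to hold simultaneously for all $t$ with probability $1-\delta$.

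The main obstacle is the heavy tail of the Poisson targets: a self-normalized concentration applied to the raw residuals would only give a variance-inflated, suboptimal radius. Truncation fixes this but introduces bias, and the crux of the argument is the bias-variance balance encoded in the threshold $\Gamma$ of Eqn.~\eqref{eq:truncation}—large enough that the truncation bias is lower order, small enough that the truncated increments admit a Bernstein-type tail—carried out uniformly in $t$ through the predictable-weight self-bounding step. Matching the exact constants in $\gamma$ is then bookkeeping.
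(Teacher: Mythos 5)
Your proposal and the paper part ways at the very first step: the paper does not prove this lemma at all. It is imported verbatim as Theorem~1 of \citet{xue2024efficient}; the paper's entire treatment (Appendix~\ref{sec:crtm}) consists of checking that the Poisson observation model is an instance of the CRTM framework—a generalized linear model with identity link ($\mu(z)=z$, hence $L=\kappa=1$) and heavy-tailed rewards with bounded $(1+\epsilon)$-moment for $\epsilon=1$ and $u=\B_x\B_\theta(1+\B_x\B_\theta)$, since $\bE[(\y^t_h)^2]=\mu^t_h+(\mu^t_h)^2$—and then instantiating the threshold $\Gamma$ of Eqn.~\eqref{eq:truncation} and the radius $\gamma$. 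Your attempt to re-derive the theorem from scratch is therefore a genuinely different, and more ambitious, route. Much of your skeleton is sound and is the kind of argument that underlies the cited result: the one-step inequality (projection in the $\V^s_l$-norm onto a convex set containing $\btheta_l$ is non-expansive, and the surrogate loss is exactly quadratic for the identity link), the telescoping identity, the truncation-bias bound $\bigl|\bE[\tilde{\y}^s_h\mid\cG_{s-1}]-\x_s^\top\btheta_l\bigr|\le u\|\x_s\|_{(\V^s_l)^{-1}}/\Gamma$, and the instantiation of $u$ all check out.

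The genuine gap is in your treatment of the gradient potential $\sum_{s\le t}\|\g_s\|^2_{(\V^s_l)^{-1}}$. You argue that truncation plus $|\x_s^\top\hat\btheta^{s-1}_l|\le\B_x\B_\theta$ makes each $\|\g_s\|_{(\V^s_l)^{-1}}$ bounded, and that the elliptical potential lemma then delivers the $d\log(1+\tfrac{T}{2d})$ factor. But the truncation rule constrains the \emph{product} $\|\x_s\|_{(\V^s_l)^{-1}}|\tilde{\y}^s_h|\le\Gamma$, not $|\tilde{\y}^s_h|$ itself; when $\|\x_s\|_{(\V^s_l)^{-1}}$ is small, $|\tilde{\y}^s_h|$ may be as large as $\Gamma/\|\x_s\|_{(\V^s_l)^{-1}}$. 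Consequently
\[
\|\g_s\|_{(\V^s_l)^{-1}}=\bigl|\tilde{\y}^s_h-\x_s^\top\hat\btheta^{s-1}_l\bigr|\,\|\x_s\|_{(\V^s_l)^{-1}}\;\le\;\Gamma+\B_x\B_\theta\,\|\x_s\|_{(\V^s_l)^{-1}},
\]
and summing the squares produces a term of order $T\Gamma^2$, linear in $T$, which destroys the claimed radius: the elliptical potential lemma can only absorb the part of each summand that carries a factor of $\|\x_s\|^2_{(\V^s_l)^{-1}}$. The repair is to decompose the residual $\tilde{\y}^s_h-\x_s^\top\hat\btheta^{s-1}_l$ into the truncated noise $\tilde{\y}^s_h-\x_s^\top\btheta_l$ plus the estimation error $\x_s^\top(\btheta_l-\hat\btheta^{s-1}_l)$; the latter is bounded by $2\B_x\B_\theta$ so its contribution is handled by the elliptical potential, while the former must be controlled through the conditional moment bound $\bE[(\tilde{\y}^s_h)^2\mid\cG_{s-1}]\le u$ together with a Freedman-type martingale inequality applied to $\sum_s(\tilde{\y}^s_h-\x_s^\top\btheta_l)^2\|\x_s\|^2_{(\V^s_l)^{-1}}$—that is, exactly the bias--variance machinery you reserve for the comparator noise sum must also be deployed on the gradient potential. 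Since this is precisely where the heavy-tail difficulty lives, it cannot be relegated to constant bookkeeping; with that step supplied (or by simply invoking Theorem~1 of \citet{xue2024efficient}, as the paper does), the rest of your outline goes through.
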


\section{Proof of Theorem \ref{thm:main_theorem}}\label{proof:main_thm}
\begin{retheorem}[Theorem \ref{thm:main_theorem} restated]
     % With probability at least \(1 - \tfrac{4}{T^3}\), the overall regret \(\mathrm{Reg}_T\) incurred by Algorithm \ref{algo} is \(\widetilde{\cO}\bigl(H^2 d \sqrt{T}\bigr)\).
      For any $\delta \geq \tfrac{6}{T^3}$,  with probability at least \(1 - \delta\),  \(\mathrm{Reg}_T\) incurred by Algorithm \ref{algo} is \(\cO(dH^2\sqrt{T\log(\tfrac{TH}{\delta}})\log(1 + \tfrac{T}{2d}))\).
\end{retheorem}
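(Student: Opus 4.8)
The plan is to control $\mathrm{Reg}_T$ by an optimism-driven decomposition, first peeling off the exploration phase and then splitting the exploitation regret into the four terms of Eqn.~\eqref{eq:reg}. Since $\tau = H\underline{n_l}+1$ with $\underline{n_l} = \Theta(\log(HT))$, and every single-round reward is bounded under Assumption~\ref{assumption:bounded}, the rounds $t<\tau$ spent exploring contribute only $\Theta(\log T)$ to the total regret; I would account for these separately and focus on $t\geq\tau$.

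For $t\geq\tau$ I would telescope between the true optimum $\text{OPT}(\Theta,\x_t,\cF^{\x_t})$ and the realized reward $\bE R(\pi_t;\x_t,\Theta,\cF^{\x_t})$ by inserting three intermediate values, in this order: (1) replace the true transition $\cF^{\x_t}$ by its estimate $\hat\cF^{\x_t}_{t-1}$ inside the optimum, producing Term (i); (2) replace $\text{OPT}(\Theta,\cdot)$ by the optimistic value $\text{OPT}(\cC_{t-1},\cdot)$ maximized over the confidence set, producing Term (ii); (3) pass from this optimistic value to the reward $R(\pi_t;\Theta,\hat\cF^{\x_t}_{t-1})$ actually collected by the greedy policy under the estimated transition, producing Term (iii); and (4) swap $\hat\cF^{\x_t}_{t-1}$ back to $\cF^{\x_t}$, producing Term (iv). The sum telescopes exactly to the per-round regret, yielding Eqn.~\eqref{eq:reg}.

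The pivotal step is that Term (ii) is non-positive. Because the exploitation step of Algorithm~\ref{algo} defines $\pi_t=\arg\max_\pi\max_{\tilde\theta\in\cC_{t-1}}R(\pi;\tilde\theta,\hat\cF^{\x_t}_{t-1})$, whenever $\Theta\in\cC_{t-1}$ we have $\text{OPT}(\cC_{t-1},\x_t,\hat\cF^{\x_t}_{t-1})\geq\max_\pi R(\pi;\Theta,\hat\cF^{\x_t}_{t-1})=\text{OPT}(\Theta,\x_t,\hat\cF^{\x_t}_{t-1})$, so Term (ii) $\leq 0$. Lemma~\ref{lemma: term 2} supplies exactly this containment for all $t\geq\tau$ with high probability. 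I would then invoke Lemma~\ref{lemma: term 1} to bound Terms (i) and (iv), both of which are cumulative transition-estimation errors handled through the simulation lemma together with the high-probability control of $\|\hat\bbeta^t_h-\bbeta_h\|$ and $|(\hat\sigma^t_h)^2-\sigma^2_h|$, and Lemma~\ref{lemma: term 3} to bound Term (iii) using the $\btheta$- and $d$-confidence bounds of Lemma~\ref{lemma:theta_cb} and Theorem~\ref{thm: dl}.

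Finally I would carry out the probability bookkeeping and rate aggregation. Setting the internal parameters of the invoked results to their smallest admissible values gives failure probabilities $1/T^4$ for each of Terms (i) and (iv), $2/T^3$ for Term (iii), and $2/T^3$ for the containment event of Lemma~\ref{lemma: term 2}; a union bound sums these to $4/T^3+2/T^4\leq 6/T^3$, which is the origin of the constraint $\delta\geq 6/T^3$. On the resulting good event, Terms (i)/(iv) contribute the linear $d$-dependence while Term (iii) contributes the $\log(1+T/2d)$ factor, so a common upper bound is $\cO(dH^2\sqrt{T\log(TH/\delta)}\log(1+T/2d))$, as claimed. I expect the main obstacle to lie not in the decomposition itself, which is routine once the insertion order forces optimism, but in coordinating the high-probability events, in particular ensuring that the containment event making Term (ii) vanish holds simultaneously with the estimation-error bounds of Lemmas~\ref{lemma: term 1} and \ref{lemma: term 3}, and in verifying that the aggregated rate genuinely dominates each term's individual bound.
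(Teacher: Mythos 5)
Your proposal is correct and follows essentially the same route as the paper's own proof: the identical four-term decomposition of Eqn.~\eqref{eq:reg} after peeling off the $\Theta(\log T)$ exploration regret, Term~(ii) made non-positive via the containment $\Theta \in \cC_{t-1}$ from Lemma~\ref{lemma: term 2}, Terms~(i)/(iv) handled by Lemma~\ref{lemma: term 1} and Term~(iii) by Lemma~\ref{lemma: term 3}, with the same union-bound accounting ($2/T^4 + 2/T^3 + 2/T^3 \leq 6/T^3$) yielding the stated failure probability and rate.
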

\begin{proof}
In this section, we present the detailed proof of Theorem \ref{thm:main_theorem}. We begin by decomposing \(\text{Reg}_T\) (defined in Eqn. \eqref{eq:reg_def}) into four terms—Term (i), Term (ii), Term (iii), and Term (iv)—as shown in Eqn. \eqref{eq:reg}. $\Theta(\log T)$ comes from regret incurred in the exploration phase with $\tau = H\underline{n_l} + 1$.
\begin{align*}%\label{eq:reg}
    \begin{split}
        \text{Reg}_T = &\sum^T_{t = 1}\text{OPT}\left(\Theta, \x_t, \cF^{\x_t}\right) - \bE_{\pi_1, \pi_2, \ldots, \pi_{T}\sim \mathcal{G}}\left(\sum^T_{t = 1} R\left(\pi_t; \x_t, \Theta, \cF^{\x_t}\right)\right)\\
         \leq & \Theta(\log T) + \underbrace{\sum^T_{t = \tau}\text{OPT}\left(\Theta, \x_t, \cF^{\x_t}\right) - \sum^T_{t = \tau}\text{OPT}\left(\Theta, \x_t, \hat \cF^{\x_t}_{t-1}\right)}_{\textcolor{red}{\text{Term (i)}}} + \\
         & \underbrace{\sum^T_{t = \tau}\text{OPT}\left(\Theta, \x_t, \hat \cF^{\x_t}_{t-1}\right) - \sum^T_{t = \tau}\text{OPT}\left(\cC_{t-1}, \x_t, \hat \cF^{\x_t}_{t-1}\right)}_{\textcolor{red}{\text{Term (ii)}}}+\\
        & \underbrace{\sum^T_{t = \tau}\text{OPT}\left(\cC_{t-1}, \x_t, \hat \cF^{\x_t}_{t-1}\right) - \bE_{\pi_1, \pi_2, \ldots, \pi_T \sim \mathcal{G}}\left(\sum^T_{t = \tau} R\left(\pi_t; \x_t, \Theta, \hat \cF^{\x_t}_{t-1}\right)\right)}_{\textcolor{red}{\text{Term (iii)}}} + \\
        & \underbrace{\bE_{\pi_1, \pi_2, \ldots, \pi_T \sim \mathcal{G}}\left(\sum^T_{t = \tau} R\left(\pi_t; \x_t, \Theta, \hat \cF^{\x_t}_{t-1}\right)\right) - \bE_{\pi_1, \pi_2, \ldots, \pi_T \sim \mathcal{G}}\left(\sum^T_{t = \tau} R\left(\pi_t; \x_t, \Theta, \cF^{\x_t}\right)\right)}_{\textcolor{red}{\text{Term (iv)}}}
    \end{split}
\end{align*}
At a high level, Terms~(i) and (iv) account for the cumulative regret from transition error, i.e., the discrepancy between \(\cF^{\x_t}\) and \(\hat{\cF}^{\x_t}\). Term~(iii) represents the decision error arising from imprecise estimates of the model parameters \(\Theta = \{\btheta_l\}_{l \in \cH} \cup \{\d_l\}_{l \in [H]}\). Thus, more accurate estimators of \(\cF^{\x_t}\) and \(\Theta\) yield lower cumulative regret. Meanwhile, Term~(ii) measures the gap in regret between a chosen point and the optimal point within a feasible set. Our goal is to show \(\Theta \in \cC_{t}\) for all \(t \geq \tau\) (after the exploration phase) with high probability, ensuring that Term~(ii) can be negative with high probability. This is proved in Lemma \ref{lemma: term 2}. By Lemma~\ref{lemma: term 1}, with probability at least \(1 - \tfrac{1}{T^4}\), Term~(i) in Eqn.~\eqref{eq:reg} is bounded by \(\widetilde{\cO}\bigl(d H^2 \sqrt{T}\bigr)\). A similar argument implies that Term~(iv) is also bounded by \(\widetilde{\cO}\bigl(d H^2 \sqrt{T}\bigr)\) with probability at least \(1 - \tfrac{1}{T^4}\). Further, Lemma~\ref{lemma: term 3} shows that, with probability at least \(1 - \tfrac{2}{T^3}\), Term~(iii) is bounded by \(\widetilde{\cO}\bigl(H^2 \sqrt{dT}\bigr)\). Combining these results, we conclude that, with probability at least \(1 - \tfrac{6}{T^3}\), the overall regret \(\mathrm{Reg}_T\) is \(\widetilde{\cO}\bigl(d H^2\sqrt{T}\bigr)\).
\end{proof}

\subsection{Proof for Theorem \ref{thm: dl}}\label{proof:thm: dl}
\begin{retheorem}[Theorem \ref{thm: dl} restated] 
Let \(\delta \geq \frac{1}{T^4H}\) and \(N_{t,l} \geq \underline{n_l}\). With probability at least \(1 - \delta\), the estimation error \(\bigl|\hat{\d}^t_l - \d_l\bigr|\), with \(\hat{\d}^t_l\) defined in Eqn. \eqref{eq:dl}, is bounded by:  
\begin{equation*}
\bigl|\hat{\d}^t_l - \d_l\bigr| \leq \frac{4H\B_d\sqrt{d\log\bigl(1+\frac{T}{2d}\bigr)\gamma} + \sqrt{2e\B_d\B_x\B_\theta\log\bigl(\frac{2}{\delta}\bigr)}}{\b \sqrt{N_{t,l}}}.    
\end{equation*}
\(\gamma\) is as defined in Lemma \ref{lemma:theta_cb} and $\underline{n_l}$ defined in Remark \ref{rmk:params}.
\end{retheorem}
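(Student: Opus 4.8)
The plan is to start from the closed form of the estimator in Eqn.~\eqref{eq:dl} and turn the difference $\hat\d^t_l-\d_l$ into the two-term bound already displayed in Eqn.~\eqref{eq:d_estimation}. Writing $\y^s_h=\d_l\,\x_s^\top\btheta_{S^s_{h,2}}+\eta^s_h$ with mean-zero Poisson noise $\eta^s_h$, and letting $D:=\sum_{s=1}^t\sum_{h\in\D_{s,l}}\langle\hat\btheta^s_{S^s_{h,2}},\x_s\rangle$ be the denominator of $\hat\d^t_l$, a direct manipulation gives $\hat\d^t_l-\d_l=D^{-1}\sum\eta^s_h+\d_l\,D^{-1}\sum\x_s^\top(\btheta_{S^s_{h,2}}-\hat\btheta^s_{S^s_{h,2}})$. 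Applying the triangle inequality, keeping the first quotient intact and bounding the second via $\d_l\le\B_d$ together with the denominator lower bound $D\ge\b N_{t,l}$ (valid since each $\langle\hat\btheta^s_{\cdot},\x_s\rangle\ge\b$ once the $\btheta$-confidence region is active after exploration), reproduces Term~A and Term~B exactly. The two remaining tasks are then independent high-probability bounds on each term, combined by a union bound.

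For Term~A, I would invoke the sub-exponential characterization of the Poisson noise (Lemma~\ref{lemma:poisson}), treating $\{\eta^s_h\}$ as a martingale-difference sequence with parameters $\mathrm{SubE}(e\d_l\,\x_s^\top\btheta_{S^s_{h,2}},1)$, and apply a Bernstein-type tail inequality for sums of sub-exponentials. The total variance proxy is $\sum e\d_l\,\x_s^\top\btheta_{S^s_{h,2}}\le e\B_d\B_x\B_\theta\,N_{t,l}$ and the scale parameter is $1$, so choosing the deviation level $u=\sqrt{2e\B_d\B_x\B_\theta N_{t,l}\log(2/\delta)}$ yields $\bP(|\sum\eta^s_h|>u)\le\delta$. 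The exploration phase enters precisely here: the requirement $N_{t,l}\ge\underline{n_l}$ keeps $u$ below the variance proxy so the quadratic branch of Bernstein's inequality dominates the linear branch. Dividing by $\b N_{t,l}$ reproduces the summand $\sqrt{2e\B_d\B_x\B_\theta\log(2/\delta)}/(\b\sqrt{N_{t,l}})$.

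For Term~B, I would follow the recounting argument. Setting $r^s_h=|\x_s^\top(\btheta_{l'}-\hat\btheta^s_{l'})|$ with $l'=S^s_{h,2}$, Cauchy–Schwarz gives $r^s_h\le\|\hat\btheta^s_{l'}-\btheta_{l'}\|_{\V^s_{l'}}\,\|\x_s\|_{(\V^s_{l'})^{-1}}\le\sqrt\gamma\,\|\x_s\|_{(\V^s_{l'})^{-1}}$ on the event of Lemma~\ref{lemma:theta_cb}. Partitioning $\D_{s,l}$ into $\F^l_{s,l'}$ by the value $l'$, a second Cauchy–Schwarz across the $N_{t,l}$ summands turns $\sum r^s_h$ into $\sqrt{N_{t,l}}\sqrt{\sum_{l'}\gamma\sum_s n_{s,l'}\min(\|\x_s\|^2_{(\V^s_{l'})^{-1}},1)}$, and the Elliptical Potential Lemma~\citep{abbasi2011improved} bounds each inner $s$-sum by $\cO(d\log(1+T/(2d)))$. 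Absorbing the multiplicity $n_{s,l'}\le H$ and the number $|\cH_2|=\cO(H)$ of partitions gives $\cO\!\big(H\sqrt{d\,N_{t,l}\log(1+T/(2d))\,\gamma}\big)$; multiplying by $\B_d/(\b N_{t,l})$ produces the first summand of the bound.

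I expect the main obstacle to be justifying that the Elliptical Potential Lemma genuinely applies to the contexts in $\F^l_{s,l'}$: the lemma needs the vectors summed against $(\V^s_{l'})^{-1}$ to be a subsequence of those that actually built $\V^s_{l'}$, yet $\V^s_{l'}$ is updated from the \emph{winning}-round dataset $\W^s_{l'}$ whereas $\F^l_{s,l'}$ collects \emph{losing} rounds. The reconciling fact—which also explains why one uses $\hat\btheta^s$ rather than $\hat\btheta^t$—is structural: a losing round $h$ with $(S^s_{h,1},S^s_{h,2})=(l,l')$ forces a prior winning round at $h-l$ whose state first coordinate equals $l'$, so that round lies in $\W_{s,l'}$ and its context $\x_s$ is already incorporated into $\V^s_{l'}$. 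Establishing this correspondence carefully, and applying the potential lemma once per customer (with the $n_{s,l'}\le H$ factor) rather than once per round, is the delicate step. I would then close with a union bound, intersecting the Term~A event (probability $\ge1-\delta/2$) with the Lemma~\ref{lemma:theta_cb} events over all $l'\in\cH_2$ (probability $\ge1-\delta/2$ after rescaling $\delta$), to obtain the stated guarantee.
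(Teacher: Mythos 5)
Your proposal follows essentially the same route as the paper's own proof: the identical Term A/Term B decomposition with the denominator lower bound $\b N_{t,l}$, sub-exponential (Bernstein-type) concentration for Term A with the quadratic branch enforced by $N_{t,l}\ge\underline{n_l}$, the $\F^l_{s,l'}$ recounting plus Cauchy--Schwarz plus elliptical-potential argument for Term B, and a final union bound. The one point of difference is in your favor: you explicitly justify why the elliptical potential lemma applies to losing-round contexts (each $h\in\F^l_{s,l'}$ forces a prior winning round in $\W_{s,l'}$, so $\x_s$ genuinely enters $\V_{l'}$), a correspondence the paper's proof uses silently.
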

\begin{proof}
The key idea in estimating \(\d_l\) is to isolate observations that are “purified” with respect to \(\d_l\). Define 
\[
\D_{t, l} 
\;:=\; \{\,h\;\mid\;\S^t_{h,1} = l,\;\o^t_h = 0\},
\]
which collects the episodes for user \(t\) where: The second-to-last bid winning is \(l\) episodes before the most recent bid winning (\(\S^t_{h,1} = l\)); The current bid is lost (\(\o^t_h = 0\)), meaning no new advertisement is shown. Under these conditions, the product-conversion observations \(\{\y^t_h\}_{h \in \D_{t, l}}\) satisfy 
\[
\y^t_h \;\sim\; \mathrm{Poi}\bigl(\d_l \,\langle \btheta_{\S^t_{h,2}},\, \mathbf{x}_t\rangle\bigr).
\]
Because these observations are purified within the same \(l\) that defines \(\d_l\), they provide reliable information for estimating the long-term advertising impact parameter \(\d_l\). To estimate the long-term advertising impact parameter \(\d_l\), we adopt a two-stage procedure. First, we construct the estimates \(\hat{\btheta}^s_{S^s_{h,2}}\) for all \(s \in [t]\). Then, we substitute these estimates into the negative log-likelihood for \(\d_l\) (see Eqn.~\eqref{eq:loglik}), yielding our two-stage estimator for \(\d_l\).  
\begin{equation}\label{eq:loglik}
    \cL(\y; \hat \btheta) = \sum^t_{s = 1}\sum_{h \in \D_{s, l}} \d_{l} \langle\hat \btheta^{s}_{S^{s}_{h, 2}}, \x_{s}\rangle - \y^{s}_h\log\left(\d_{l}\langle\hat \btheta^{s}_{S^{s}_{h, 2}}, \x_{s}\rangle\right).
\end{equation}
Taking the derivative of the negative log-likelihood with respect to \(\d_l\) yields:
    \begin{equation}\label{eq:mle_d}
        \nabla_{\d_l} \cL(\y; \hat \btheta) = \sum^t_{s=1}\sum_{h  \in \D_{s, l}}\langle \hat \btheta^{s}_{S^{s}_{h, 2}}, \x_{s}\rangle - \frac{\y^{s}_h}{\d_l} = 0
    \end{equation}
This expression guides the next step of solving for \(\d_l\). By solving Eqn.~\eqref{eq:mle_d}, we obtain the estimator \(\hat{\d}^t_l\) for \(\d_l\) as follows:
\begin{equation*}
    \hat \d^t_l = \frac{\sum^t_{s = 1}\sum_{h \in \D_{s, l}} \y^{s}_h}{\sum^t_{s=1}\sum_{h \in \D_{s, l}}\langle \hat \btheta^{s}_{S^{s}_{h, 2}},\x_{s}\rangle}.
\end{equation*}
Here, \(\hat{\btheta}^s_{S^s_{h,2}}\) denotes the updated estimate of the advertisement's impact for user \(s\). Next, we bound the estimation error of \(\hat{\d}^t_l\). We express \(\y^s_h\) as 
\[
\y^s_h = \d_l\,\mathbf{x}_s^\top\btheta_{S^s_{h,2}} + \eta^s_h,
\]
where \(\eta^s_h\) is sub-exponential with parameters \(\bigl(\nu^2,\alpha\bigr) = \Bigl(e\d_l\,\mathbf{x}_s^\top\btheta_{S^s_{h,2}},1\Bigr)\) (see Lemma~\ref{lemma:poisson}). Based on this formulation, the estimation error \(\bigl|\hat{\d}^t_l - \d_l\bigr|\) can be decomposed as follows.

\begin{align}\label{eq:d_est}
\begin{split}
    |\hat \d^t_l - \d_l| & = \left|\frac{\sum^t_{s=1}\sum_{h \in \D_{s, l}} \y^{s}_h}{\sum^t_{s = 1}\sum_{h \in \D_{s, l}}\langle \hat \btheta^{s}_{S^{s}_{h, 2}},\x_{s}\rangle} - d_l\right| \\
    & = \left| \frac{\sum^t_{s=1}\sum_{h \in \D_{s, l}}\left(\d_l\x^{\top}_s\btheta_{S^s_{h,2}} + \eta^s_h\right)}{\sum^t_{s = 1}\sum_{h \in \D_{s, l}}\langle \hat \btheta^{s}_{S^{s}_{h, 2}},\x_{s}\rangle} - \d_l \right|\\
    & = \left|\frac{\sum^t_{s=1}\sum_{h \in \D_{s, l}} \eta^{s}_h}{\sum^t_{s = 1}\sum_{h \in \D_{s, l}}\langle \hat \btheta^{s}_{S^{s}_{h, 2}},\x_{s}\rangle} + \d_l + \frac{\d_l\sum^t_{s=1}\sum_{h \in \D_{s, l}}\x^{\top}_s\left(\btheta_{S^s_{h,2}} - \hat \btheta^s_{S^s_{h,2}}\right)} {\sum^t_{s = 1}\sum_{h \in \D_{s, l}}\langle \hat \btheta^{s}_{S^{s}_{h, 2}},\x_{s}\rangle}-\d_l\right|\\
    & \leq \left|\frac{\sum^t_{s=1}\sum_{h \in \D_{s, l}} \eta^{s}_h}{\sum^t_{s = 1}\sum_{h \in \D_{s, l}}\langle \hat \btheta^{s}_{S^{s}_{h, 2}},\x_{s}\rangle}\right| + \d_l \left|\frac{\sum^t_{s=1}\sum_{h \in \D_{s, l}}\x^{\top}_s\left(\btheta_{S^s_{h,2}} - \hat \btheta^s_{S^s_{h,2}}\right)}{\sum^t_{s = 1}\sum_{h \in \D_{s, l}}\langle \hat \btheta^{s}_{S^{s}_{h, 2}},\x_{s}\rangle}\right|\\
    & \leq \underbrace{\left|\frac{\sum^t_{s=1}\sum_{h \in \D_{s, l}} \eta^{s}_h}{\sum^t_{s = 1}\sum_{h \in \D_{s, l}}\langle \hat \btheta^{s}_{S^{s}_{h, 2}},\x_{s}\rangle}\right|}_{\textcolor{red}{\text{Term (i)}}} + \underbrace{\frac{\B_d}{N_{t,l}\b}\left|\sum^t_{s=1}\sum_{h \in \D_{s, l}}\x^{\top}_s\left(\btheta_{S^s_{h,2}} - \hat \btheta_{S^s_{h,2}}\right)\right|}_{\textcolor{red}{\text{Term (ii)}}}
\end{split}
\end{align}
To upper bound the estimation error (see Eqn. \eqref{eq:d_est}), it suffices to bound Term (i) and Term (ii) in Eqn. \eqref{eq:d_est} respectively. We can use sub-exponential concentration bound for Term (i), shown as follows.

\begin{align}\label{eq:concentration}
\begin{split}
     \bP\left(\left|\frac{\sum^t_{s=1}\sum_{h \in \D_{s, l}} \Eta^{s}_h}{\sum^t_{s=1}\sum_{h \in \D_{s,l}} \langle \hat \btheta_{S^{s}_{h, 2}},\x_{s}\rangle}\right| > \epsilon \right) & \leq \bP\left(\left|\frac{\sum^t_{s=1}\sum_{h \in \D_{s, l}} \Eta^{s}_h}{N_{t, l}\b}\right| > \epsilon \right), \Eta^s_h \sim \text{SubE}\left(e\d_l\x^{\top}_s\btheta_{S^s_{h,2}}, 1\right)\\
    & \leq \bP\left(\frac{1}{\b}\left|\frac{1}{N_{t, l}}\sum^{N_{t,l}}_{i = 1} X_{i}\right| > \epsilon \right), X_i \sim \text{SubE}(e\B_d\B_{x}\B_{\theta}, 1)\\
    & \leq \delta.
\end{split}
\end{align}
We derive Eqn.~\eqref{eq:concentration} by applying the tail bounds for sub-exponential random variables (Lemmas~\ref{lemma:sub_p}, \ref{lemma: scale_p}, and \ref{lemma: sum_p}) and setting 
\[
\epsilon \;=\; \sqrt{\frac{2\,e\,\B_d\,\B_{x}\,\B_{\theta}}{\b^2\,N_{t,l}}
\;\log\!\bigl(\tfrac{2}{\delta}\bigr)}.
\]
We also use the accelerated convergence rate for sub-exponential variables under the conditions 
\(N_{t, l} \;>\; \frac{32\,\log(HT)}{e\,\B_d\,\B_{x}\,\B_{\theta}\,\b^2}
\quad\text{and}\quad
\delta \;\ge\; \frac{1}{2HT^4},\) which ensure \(0 \,\leq\, \epsilon \,\leq\, e\,\B_d\,\B_{x}\,\B_{\theta}\).
Therefore, with probability at least $1 - \delta$, we have $\text{Term (i)} \leq \sqrt{\frac{2\,e\,\B_d\,\B_{x}\,\B_{\theta}}{\b^2\,N_{t,l}}
\;\log\!\bigl(\tfrac{2}{\delta}\bigr)}.$
% \textcolor{red}{where \(\Eta^s_h\) are centered at 0 and \(\Eta^s_h \sim \text{SubE}\left(\sqrt{2d_l\B_{x}\B_{\theta}}, 1\right)\)(see Lemma \ref{lemma:poisson})}.
% where \(\Eta^s_h\) are centered at 0 and \(\Eta^s_h \sim \text{SubE}\left(\sqrt{2d_l\mu^s_h}, 1\right)\) (see Lemma \ref{lemma:poisson}), where \(\mu^s_h = \langle \theta_{S^{s}_{h, 2}},\x_{s}\rangle\)
% \begin{equation*}
%     \left|\frac{\sum^t_{s=1}\sum_{h \in \D_{s, l}} \y^{s}_h}{\sum^t_{s=1}\sum_{h \in \D_{s,l}} \langle \theta_{S^{s}_{h, 2}},\x_{s}\rangle} - d_l\right| \leq \sqrt{\frac{8d_l\B_x\B_{\theta}\log(\frac{2}{\delta})}{N_{t,l}}} + \frac{4\B_{y}\log(\frac{2}{\delta})}{3N_{t,l}}.
% \end{equation*}
Then it remains to bound Term (ii) in Eqn. \eqref{eq:d_est}. Define the dataset \(\F^l_{s,l'} := \{\,h \mid S^s_{h,1} = l,\, S^s_{h,2} = l',\, \o^s_h = 0\}\). Observe that 
\[
\sum_{l'=1}^H \lvert\F^l_{s,l'}\rvert 
\;=\; 
\lvert \D_{s,l}\rvert,
\]
meaning \(\F^l_{s,l'}\) partitions \(\D_{s,l}\) according to \(S^s_{h,2}\), the second-to-last winning state. Let \(n_{s,l'} := \lvert\F^l_{s,l'}\rvert\). Then
\[
\sum_{s=1}^t \sum_{l'=1}^H n_{s,l'} 
\;=\; 
N_{t,l}.
\]
Since \(\lVert \theta_l\rVert_2 \le \B_{\theta}\) for all \(l\), and \(\lVert \mathbf{x}_s\rVert_2 \le \B_x\) for each \(s\in [T]\), define
\[
r^s_h 
\;=\; 
\Bigl\lvert\bigl(\hat{\btheta}^{s}_{S^s_{h,2}} - \btheta_{S^s_{h,2}}\bigr)^\top \mathbf{x}_s\Bigr\rvert 
\;\le\; 
\,\lVert \hat{\btheta}^{s}_{S^s_{h,2}} - \btheta_{S^s_{h,2}}\rVert_{\mathbf{V}^s_{l'}} 
\;\lVert \mathbf{x}_s\rVert_{(\mathbf{V}^s_{l'})^{-1}},
\]
where \(l' = S^s_{h,2}\). For convenience, let \(\sqrt{\gamma^s_{l'}} 
\;=\; 
\lVert \hat{\btheta}^{s}_{S^s_{h,2}} - \btheta_{S^s_{h,2}}\rVert_{\mathbf{V}^s_{l'}}.\)
Then, with probability at least \(1 - \delta\),  we have

\begin{align}
    \begin{split}
        \sum^t_{s=1}\sum_{h \in \D_{s, l}}\left|\left(\hat \btheta^{s}_{S^{s}_{h, 2}} - \btheta_{S^{s}_{h, 2}}\right)^{\top}\x_{s}\right| =&  \sum^t_{s=1}\sum_{h \in \D_{s, l}} r^s_h\\
        \leq & \sqrt{N_{t, l}}\sqrt{\sum^t_{s=1}\sum_{h \in \D_{s, l}} (r^s_h)^2}\\
        = & \sqrt{N_{t, l}}\sqrt{\sum^t_{s=1}\sum^H_{l' = 1} \sum_{h \in \F^l_{s, l'}} (r^s_h)^2}\\
        = & \sqrt{N_{t, l}}\sqrt{\sum^t_{s=1}\sum^H_{l' = 1}n_{s, l'} \gamma\min\left(\|\x_s\|_{(\V^s_{l'})^{-1}}, 1\right) }\\
        = & \sqrt{N_{t, l}}\sqrt{\sum^H_{l' = 1}\sum^t_{s=1}n_{s, l'} \gamma\min\left(\|\x_s\|_{(\V^s_{l'})^{-1}}, 1\right) }\\
        \leq & \sqrt{N_{t, l}}\sqrt{\sum^H_{l' = 1}\gamma\sum^t_{s=1}n_{s, l'} \min\left(\|\x_s\|_{(\V^s_{l'})^{-1}}, 1\right) }\\
        \leq & \sqrt{N_{t, l}H}\sqrt{\sum^H_{l' = 1}\gamma\sum^t_{s=1}\min\left(\|\x_s\|_{(\V^s_{l'})^{-1}}, 1\right)}\\
        \leq & H\sqrt{N_{t, l}}\sqrt{\gamma\max_{l' \in [H]}\sum^t_{s=1}\min\left(\|\x_s\|_{(\V^s_{l'})^{-1}}, 1\right)}\\
        \leq & 2H\sqrt{d}\sqrt{N_{t, l}\log\left(1 + \frac{T}{2d}\right)\gamma}.
    \end{split}
\end{align}
Therefore, with probability at least $1 - \delta$, $\delta \geq \frac{1}{HT^4}$, we have 
\begin{equation*}
    |\hat \d^t_l - \d_l| \leq \frac{4H\B_d\sqrt{d\log\left(1 + \frac{T}{2d}\right)\gamma} + \sqrt{2e\B_{d}\B_x\B_{\theta}\log(2/\delta)}}{\b}\frac{1}{\sqrt{N_{t,l}}}.
\end{equation*}
\end{proof}

% \(\Sigma_t = \Sigma_{t-1} + \x_{t}\x^{\top}_{t}\), \(\Sigma_0 = \lambda \bI_d\)

% \textcolor{red}{Write in details}

% \begin{equation*}
%     \sum^T_{t=1} \epsilon_t \leq 2\sqrt{Td\log(\lambda + TL/d)}(\sqrt{\lambda}S + R\sqrt{\log(1/\delta)S + d\log(1+TL/(\lambda d))})
% \end{equation*}

% \begin{equation*}
%     \Tilde{\cO}(H^2d\sqrt{T})
% \end{equation*}

% \[\cC^{\beta_h}_{t} = \left\{\beta_h \in \bR^d: \|\hat \beta^{t}_{h} - \beta^*_h\|_{\bar V_t} \leq R\sqrt{d\log\left(\frac{1 + tL^2/\lambda}{\delta}\right)} + \sqrt{\lambda}S\right\}\]
% \end{proof}

\subsection{Proof of Lemma \ref{lemma: term 1}}\label{proof:term1}
\begin{relemma}[Lemma \ref{lemma: term 1} restated]
% There exists a constant \(C\) such that, for \(\delta \geq 1/T^4\), with probability at least \(1 - \delta\), Term (i) in Eqn.~\eqref{eq:reg} is bounded above by 
% \(CdH^2\sqrt{T}\sqrt{\log\biggl(\frac{1 + T}{\lambda \,\delta}\biggr)\log\biggl(1 + \frac{T}{\lambda d}\biggr)}.\)
For \(\delta \geq 1/T^4\), with probability at least \(1 - \delta\), Term (i) in Eqn.~\eqref{eq:reg} is bounded above by 
\(\cO(dH^2\sqrt{T}\sqrt{\log((1+T)/\delta)\log(1 + T/d)})\).
\end{relemma}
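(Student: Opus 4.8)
The plan is to convert the value-level gap in Term (i) into a per-round discrepancy of the transition CDFs, then into estimation errors for the mean parameters $\bbeta_h$ and the variance parameters $\sigma_h$, and finally to sum these over customers using self-normalized concentration and the elliptical potential lemma. Throughout I treat the reward parameters $\Theta$ as fixed, since the two objects compared inside each summand of Term (i)—the MDP $\cM(\x_t)$ under $\cF^{\x_t}$ versus under $\hat\cF^{\x_t}_{t-1}$—share the same $\Theta$ and differ only in their transitions.

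First I would invoke the simulation lemma \citep{kearns2002near}. Because the two MDPs share rewards, each summand $|\text{OPT}(\Theta,\x_t,\cF^{\x_t})-\text{OPT}(\Theta,\x_t,\hat\cF^{\x_t}_{t-1})|$ is controlled solely by the transition mismatch; using that rewards, and hence values, are $\cO(H)$-bounded under Assumption \ref{assumption:bounded}, the simulation lemma produces a factor $\cO(H^2)$ (one $H$ from the horizon sum, one from the value scale) in front of the worst-case one-step mismatch, so that $\text{Term (i)}\le \cO(H^2)\sum_{t=\tau}^{T}\sup_{b}\sup_{h}\bigl|\hat\cF^{t-1}_h(b,\x_t)-\cF_h(b,\x_t)\bigr|$.

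Second I would reduce the CDF mismatch to parameter errors. Writing $\cF_h(b,\x)=\Phi\bigl((\log b-\langle\x,\bbeta_h\rangle)/\sigma_h\bigr)$ (Assumption \ref{assumption:lognormal}) and using that $\Phi$ is $\tfrac{1}{\sqrt{2\pi}}$-Lipschitz, the inner supremum is a Kolmogorov distance between two normal CDFs differing in both mean and variance; the supremum over the unbounded bid $b$ is tamed by the saturation of $\Phi$, yielding $\sup_b|\hat\cF^{t-1}_h-\cF_h|\le C_1|\langle\x_t,\hat\bbeta^{t-1}_h-\bbeta_h\rangle|+C_2|(\hat\sigma^{t-1}_h)^2-\sigma_h^2|$ with $C_1,C_2$ depending on $\bar\sigma$ and a positive lower bound on the noise scale. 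For the mean term, since $\log(m^t_h)\mid\x_t$ is Gaussian, $\hat\bbeta^{t-1}_h$ from Eqn.~\eqref{eq:beta} has sub-Gaussian noise, and the self-normalized martingale bound \citep{abbasi2011improved} gives $|\langle\x_t,\hat\bbeta^{t-1}_h-\bbeta_h\rangle|\le\beta_{t-1}\|\x_t\|_{\V_{t-1}^{-1}}$ with $\beta_{t-1}=\cO(\sqrt{d\log((1+T)/\delta)})$ uniformly in $t,h$ after a union bound, where $\V_{t-1}$ is the ridge design matrix for $\bbeta_h$. For the variance term, the squared Gaussian residuals are sub-exponential, so a Bernstein-type bound controls $|(\hat\sigma^{t-1}_h)^2-\sigma_h^2|$; the running-estimate design $\{\hat\bbeta^s_h\}_s$ (rather than $\hat\bbeta^{t}_h$) in Eqn.~\eqref{eq:sigma} is exactly what keeps the first-stage contribution $\x_s^\top(\hat\bbeta^s_h-\bbeta_h)$ controllable in the direction of $\x_s$ and hence summable by elliptical potential.

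Finally I would sum over $t$ and union over $h$. For the mean part, Cauchy--Schwarz and the elliptical potential lemma \citep{abbasi2011improved} give $\sum_{t=\tau}^{T}\|\x_t\|_{\V_{t-1}^{-1}}\le\sqrt{T}\cdot\cO(\sqrt{d\log(1+T/d)})$, so this part contributes $\cO(H^2)\cdot\cO(\sqrt{d\log((1+T)/\delta)})\cdot\sqrt{T}\cdot\cO(\sqrt{d\log(1+T/d)})=\cO(dH^2\sqrt{T}\sqrt{\log((1+T)/\delta)\log(1+T/d)})$, matching the claim; the variance part is of lower order (its intrinsic fluctuation sums to $\cO(H^2\sqrt{T\log(1/\delta)})$ and its first-stage contribution is only polylogarithmic in $T$), and the union over $h\in[H]$ is absorbed into the logarithmic factors, with the analysis of Term (iv) being identical. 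I expect the main obstacle to be the variance-estimation step: one must justify the $\delta\ge 1/T^4$ threshold, which is what places $|(\hat\sigma^{t-1}_h)^2-\sigma_h^2|$ in the quadratic (sub-Gaussian-like) tail regime of the sub-exponential bound so that the per-customer deviation scales like $\cO(\sqrt{\log(1/\delta)/t})$ rather than $\cO(\log(1/\delta)/t)$, and one must carefully disentangle the two-stage coupling created by substituting $\{\hat\bbeta^s_h\}_s$ into $\hat\sigma^{t-1}_h$.
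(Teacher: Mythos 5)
Your proposal follows essentially the same route as the paper's proof: reduce Term (i) via the simulation lemma of \citet{kearns2002near} to the cumulative sup-distance $\sum_{t}\sup_b\sup_h\bigl|\hat\cF^{t-1}_h(b,\x_t)-\cF_h(b,\x_t)\bigr|$; convert that distance, via the Lipschitzness/Taylor expansion of $\Phi$, into the mean error $\bigl|\x_t^{\top}(\hat\bbeta^{t-1}_h-\bbeta_h)\bigr|$ and the variance error $\bigl|(\hat\sigma^{t-1}_h)^2-\sigma_h^2\bigr|$; control the mean part by the self-normalized bound of \citet{abbasi2011improved} plus the elliptical potential lemma; control the variance part by sub-exponential (chi-squared) concentration, with $\delta\ge 1/T^4$ invoked exactly as in the paper to stay in the quadratic tail regime; and finish with union bounds over $h$ and $t$. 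Your identification of why the running estimates $\{\hat\bbeta^s_h\}_s$ (rather than $\hat\bbeta^t_h$) appear in Eqn.~\eqref{eq:sigma} also matches the paper's reasoning.

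One premise in your first step is wrong, although it does not break the final bound. You assert that the two MDPs being compared ``share rewards,'' so that the OPT gap is controlled solely by the transition mismatch. In this model the expected one-step reward is $R^t_h = \d_{S^t_{h,1}}\langle\btheta_{S^t_{h,2}},\x_t\rangle\bigl(1-\cF_h(\a^t_h,\x_t)\bigr) + \bigl(\langle\btheta_{S^t_{h,1}},\x_t\rangle - p_h(\a^t_h,\x_t)\bigr)\cF_h(\a^t_h,\x_t)$, which depends on the CDF through both the win probability and the second-price payment $p_h$; hence $\text{OPT}(\Theta,\x_t,\hat\cF^{\x_t}_{t-1})$ uses $\hat\cF$ in the reward as well as in the transition, and the simulation lemma must be invoked with both a transition-mismatch parameter $\epsilon_1$ and a reward-mismatch parameter $\epsilon_2$. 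The paper handles this explicitly (Eqn.~\eqref{eqn:reward}) by bounding the payment discrepancy via $\bigl|\int_0^{b}\bigl(\cF_h(v,\x_t)-\hat\cF^{t-1}_h(v,\x_t)\bigr)\,dv\bigr|\le \B_{\cA}\sup_b\bigl|\hat\cF^{t-1}_h-\cF_h\bigr|$, giving $\epsilon_2\le\bigl((\B_d+1)\B_{x}\B_{\theta}+\B_{\cA}\bigr)\epsilon_t$. Since this contributes $H\epsilon_2=\cO(H)\epsilon_t$, dominated by the $\cO(H^2)\epsilon_t$ transition term, your claimed inequality $\text{Term (i)}\le\cO(H^2)\sum_t\epsilon_t$ survives—but a complete proof must include this step rather than assume it away. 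A second, minor imprecision: in the variance decomposition, the cross term $\tfrac{2}{t-1}\bigl|\sum_s(\log m^s_h-\x_s^{\top}\bbeta_h)\,\x_s^{\top}(\bbeta_h-\hat\bbeta^s_h)\bigr|$ (the paper's Term B) accumulates to the same $\cO(d\sqrt{T})$ order as the mean part, not merely polylogarithmically; only the squared first-stage term (the paper's Term C) is polylogarithmic. This again leaves the final order unchanged but should be stated correctly.
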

\begin{proof}
In essence, the proof of Lemma \ref{lemma: term 1} relies on two main steps. First, we show that Term (i) can be bounded by the cumulative estimation error in transition, expressed as 
\[
\sum_{t=1}^T \sup_b \sup_h \Bigl|\hat \cF^{t-1}_h(b, \x_t) - \cF_h(b, \x_t)\Bigr|.
\]
Second, we demonstrate that, when \(\hat \cF^{t-1}_h(b, \x_t)\) is estimated using Algorithm~\ref{algo}, this cumulative estimation error remains small. Combined, these two steps yield the desired upper bound on Term (i).

\noindent \textbf{Step 1: Upper bound Term (i) by the cumulative estimation error in transition.}

Term (i) captures the regret induced by transition error, as shown in Eqn.~\eqref{term1}. The final inequality in Eqn.~\eqref{term1} holds because 
\(R(\pi^*_t; \Theta, \hat \cF^{\x_t}_{t-1}) \;-\; \text{OPT}\!\bigl(\Theta, \x_t, \hat \cF^{\x_t}_{t-1}\bigr) \;\leq\; 0,\)
due to the definition of \(\text{OPT}\!\bigl(\Theta, \x_t, \hat \cF^{\x_t}_{t-1}\bigr)\). We then apply the Simulation Lemma (Lemma~\ref{lemma: simulation_lemma}) to bound
\(R(\pi^*_t; \Theta, \cF^{\x_t}) \;-\; R(\pi^*_t; \Theta, \hat \cF^{\x_t}_{t-1}).\)
    \begin{align}\label{term1}
    \begin{split}
        \text{Term (i)} & = \sum^T_{t = \tau}\text{OPT}\left(\Theta, \x_t, \cF^{\x_t}\right) - \sum^T_{t = \tau}\text{OPT}\left(\Theta, \x_t, \hat \cF^{\x_t}_{t-1}\right)\\\
                    & = \sum^T_{t=\tau}R(\pi^*_t; \Theta, \cF^{\x_t}) - R(\pi^*_t; \Theta, \hat \cF^{\x_t}_{t-1}) + R(\pi^*_t; \Theta, \hat \cF^{\x_t}_{t-1}) -  \text{OPT}\left(\Theta, \x_t, \hat \cF^{\x_t}_{t-1}\right)\\
                    & \leq \sum^T_{t=\tau}R(\pi^*_t; \Theta, \cF^{\x_t}) - R(\pi^*_t; \Theta, \hat \cF^{\x_t}_{t-1}).
    \end{split}
\end{align}

\begin{lemma}[Simulation Lemma \citep{kearns2002near}]\label{lemma: simulation_lemma}
For all \(s \in \mathcal{S}\) and \(a \in \mathcal{A}\), if
\(\sum_{s' \in \mathcal{S}} \Bigl|\bP(s' \mid s, a) \;-\; \bP'(s' \mid s, a)\Bigr| \;\leq\; \epsilon_1
\hspace{1mm} \text{and} \hspace{1mm}
\forall h \in [H], \hspace{1mm} \bigl|r_h(s, a) \;-\; r'_h(s, a)\bigr| \;\leq\; \epsilon_2, \text{then}\)
\[
\Bigl|\bE_{\{s_h, a_h\}_{h=1}^H \sim \mathcal{M}, \,\pi}
\Bigl[\sum_{h \in [H]} r_h(s_h, a_h)\Bigr]
\;-\;
\bE_{\{s_h, a_h\}_{h=1}^H \sim \mathcal{M'}, \,\pi}
\Bigl[\sum_{h \in [H]} r'_h(s_h, a_h)\Bigr]\Bigr|
\;\;\leq\;\; \frac{H(H-1)\,\epsilon_1}{2} \;+\; H\,\epsilon_2.
\]  
\end{lemma}
\noindent To find \(\epsilon_1\) in Lemma~\ref{lemma: simulation_lemma}, we compare two MDPs:  
\[
\cM_t,\;\text{induced by policy } \pi^*_t,\; \theta,\; \text{and HOB distribution } \cF^{\x_t},
\]  
and  
\[
\hat \cM_t,\;\text{induced by policy } \pi^*_t,\; \theta,\; \text{and HOB distribution } \hat \cF^{\x_t}_{t-1}.
\]
Given state \(s = (s_1, s_2)\), we have:  
\begin{align}\label{eq:p}
\begin{split}
    \sum_{s' \in S}\left|\bP_h(s' | s, a) - \hat \bP_h(s' | s, a) \right| & = \left|\bP_h(s' = (s_1 +1, s_2) | s, a) - \hat \bP_h(s' = (s_1 +1, s_2) | s, a) \right| \\
    & + \left|\bP_h(s' = (1, s_1) | s, a) - \hat \bP_h(s' = (1, s_1) | s, a) \right|    \\
    & = 2 \left|\hat \cF^{t-1}_{h}(a,\x_t) - \cF_h(a,\x_t)\right|\\
    & \leq 2 \sup_b \sup_h \left|\hat \cF^{t-1}_{h}(b,\x_t) - \cF_h(b,\x_t)\right|.
\end{split}
\end{align}
% Define $\epsilon_t = \sup_b \sup_h \left|\hat \cF^{t-1}_{h}(b,\x_t) - \cF_h(b,\x_t)\right|$. Eqn. \eqref{eq:p} suggests $\epsilon_1$ in Lemma \ref{lemma: simulation_lemma} is $2\epsilon_t$.
Define 
\[
\epsilon_t 
\;=\; 
\sup_{b} \sup_{h} 
\Bigl|\hat \cF^{t-1}_h(b, \x_t) \;-\; \cF_h(b, \x_t)\Bigr|.
\]
From Eqn.~\eqref{eq:p}, it follows that the parameter \(\epsilon_1\) in Lemma~\ref{lemma: simulation_lemma} can be taken as \(2\epsilon_t\).
% \noindent If $s_1 = 0$
% \begin{align}
% \begin{split}
%     \sum_{s' \in S}\left|\bP_h(s' | s, a) - \hat \bP_h(s' | s, a) \right| & = \left|\bP_h(s' = (s_1, s_2) | s, a) - \hat \bP_h(s' = (s_1, s_2) | s, a) \right| \\
%     & + \left|\bP_h(s' = (1, s_1) | s, a) - \hat \bP_h(s' = (1, s_1) | s, a) \right|    \\
%     & = 2 \left|\hat \cF_{t-1,h}(a,\x_t) - \cF_h(a,\x_t)\right|\\
%     & \leq 2 \sup_b \sup_h \left|\hat \cF^{t-1}_{h}(b,\x_t) - \cF_h(b,\x_t)\right|\\
%     & = 2 \epsilon_{t}
% \end{split}
% \end{align}
In the following, we bound the difference between \(R^t_h\) and \(R^{t'}_h\) to find \(\epsilon_2\) in Lemma \ref{lemma: simulation_lemma}. By definition, 
\(R^t_h(S^t_h, A^t_h, \x_t) = \d_{S^t_{h,1}}\langle \btheta_{S^t_{h,2}}, \x_t\rangle \bigl(1 - \cF_h(A^t_h, \x_t)\bigr) + \bigl(\langle \btheta_{S^t_{h,1}}, \x_t\rangle - p_h(A^t_h, \x_t)\bigr)\cF_h(A^t_h, \x_t)\). Here, \(p_h(A^t_h, \x_t)\) is the second highest price conditioned on winning. Since 
\(p^{\x_t}_h(b) = b - \frac{1}{\cF_h(b, \x_t)} \int_0^b \cF_h(v,\x_t)\,dv\), we obtain 
\(\hat p_h(A^t_h, \x_t)\,\hat \cF^{t-1}_{h}(A^t_h, \x_t) - p_h(A^t_h, \x_t)\,\cF_h(A^t_h, \x_t) = \int_0^{A^t_h} \cF_h(v,\x_t)\,dv \;-\; \int_0^{A^t_h} \hat \cF^{t-1}_{h}(v,\x_t)\,dv\).
In addition, we can show
\begin{align*}
    \int^b_0 f(v)dv - \int^b_0 \hat f(v)dv & \leq \int^b_0 \sup_x |f(x)-\hat f(x)| dv \\
    & = \sup_x |f(x) - \hat f(x)| \int^b_0 1 dv \\
    & = b  \sup_x |f(x) - \hat f(x)|.
\end{align*}
Therefore, we can bound the reward difference $|R^t_h(s,a) - R^{t'}_h(s,a)|$ by Eqn. \eqref{eqn:reward}.
\begin{align}\label{eqn:reward}
    \begin{split}
        \left|R^t_h - R^{'t}_h\right| & \leq \left(\d_{S^t_{h, 1}}\langle\btheta_{S^t_{h, 2}, }\x_t\rangle + \langle \btheta_{S^t_{h, 1}}, \x_t \rangle + \B_{\cA}\right)\sup_{b \in [0, \B_{\cA}]} \sup_h\left|\hat \cF^{t-1}_{h}(b,{\x_t}) - \cF_h(b,{\x_t})\right|\\
        & \leq ((\B_d+1)\B_{x}\B_\theta + \B_{\cA})\sup_{b \in [0, \B_{\cA}]} \sup_h \left|\hat \cF^{t-1}_{h}(b,{\x_t}) - \cF_h(b,{\x_t})\right| = ((\B_d+1)\B_{x}\B_\theta + \B_{\cA}) \epsilon_{t},
    \end{split}
\end{align}
Combining Eqn. \eqref{eq:p} and Eqn. \eqref{eqn:reward}, Lemma \ref{lemma: simulation_lemma} implies 
\begin{align}\label{eq:itm}
    R(\pi^*_t; \Theta, \cF^{\x_t}) - R(\pi^*_t; \Theta, \hat \cF^{\x_t}_{t-1}) & \leq H(H-1)\epsilon_t + H((\B_d+1)\B_{x}\B_\theta + \B_{\cA}) \epsilon_t \nonumber\\
    & = H^2 \epsilon_t + H\left((\B_d+1)\B_{x}\B_\theta + \B_{\cA} - 1\right)\epsilon_t.
\end{align}
% \textcolor{red}{If we assume $H \geq (\B_d+1)\B_{x}\B_\theta + \B_{\cA}$} then we have
% \begin{equation}
%     R(\pi^*_t; \theta, \cF^{\x_t}) - R(\pi^*_t; \theta, \hat \cF^{\x_t}_{t-1}) \leq 2H^2 \sum^T_{t=1}\epsilon_t 
% \end{equation}

% \subsection{Construct the high probability bound}
\noindent Therefore, by Eqn.~\eqref{eq:itm}, bounding Term (i) reduces to bounding
\[
\sum_{t=\tau}^T \Bigl(H^2 \,\epsilon_t \;+\; H\Bigl((\B_d + 1)\,\B_x\,\B_\theta \;+\; \B_{\cA} \;-\; 1\Bigr)\epsilon_t\Bigr).
\]

\noindent \textbf{Step 2: Construct high probability bound for cumulative estimation error in transition}
\\
\noindent The next step is to derive a high-probability bound for 
\(\sum_{t=\tau}^T \epsilon_t \).
By Assumption~\ref{assumption:lognormal}, the Highest Other Bids (HOB) distribution satisfies 
\(\log(m^t_h) \sim \mathcal{N}\bigl(\langle \x_t, \bbeta_h \rangle, \sigma^2_h\bigr)\), suggesting that the transition estimation error takes the form  
\begin{align}\label{eq:def}
    \left|\hat \cF^{t-1}_{h}(b,\x_t)- \cF_h(b,\x_t)\right| = \left|\Phi\left(\frac{\log(b) - \langle \x_t, \hat \bbeta^{t-1}_h\rangle}{\hat \sigma^{t-1}_{h}}\right)-\Phi\left(\frac{\log(b) - \langle \x_t, \bbeta_h\rangle}{\sigma_h}\right)\right|.
\end{align}
Using a first-order Taylor expansion of the standard normal CDF \(\Phi(\cdot)\), we obtain
\(\bigl|\Phi(a) \;-\; \Phi(a + \Delta)\bigr| \;\le\; |\Delta|\phi(a)\), $\phi$ is the standard normal pdf. This inequality allows us to further bound the estimation error by $\left|\hat \cF^{t-1}_{h}(b,\x_t)- \cF_h(b,\x_t)\right| \leq |\Delta^t_h|\phi\left(\frac{\log(b) - \langle \x_t, \hat \bbeta^{t-1}_h\rangle}{\hat \sigma^{t-1}_{h}}\right)$, where $|\Delta^t_h|$ is defined and bounded by follows.
\begin{align}\label{eq:split}
\begin{split}
        |\Delta^t_h| & = \left|\frac{\log(b) - \langle \x_t, \hat \bbeta^{t-1}_h\rangle}{\hat \sigma^{t-1}_h} - \frac{\log(b) - \langle \x_t, \bbeta_h\rangle}{\sigma_h}\right| \\
    & = \left|\frac{\log(b) - \langle \x_t, \hat \bbeta^{t-1}_h\rangle}{\hat \sigma^{t-1}_h} - \frac{\log(b) - \langle \x_t, \hat \bbeta^{t-1}_h\rangle}{\sigma_{h}} + \frac{\log(b) - \langle \x_t, \hat \bbeta^{t-1}_h\rangle}{\sigma_{h}} - \frac{\log(b) - \langle \x_t, \bbeta_h\rangle}{\sigma_h}\right|\\
    & \leq \frac{1}{\sigma_h}\left|\x^{\top}_t\left(\hat \bbeta^{t-1}_h - \bbeta_h\right)\right| + \left|\log(b) - \langle \x_t, \hat \bbeta^{t-1}_{h}\rangle\right|\left|\frac{1}{\hat \sigma^{t-1}_h} - \frac{1}{\sigma_h}\right|\\
    & \leq \frac{1}{\sigma_h}\left|\x^{\top}_t\left(\hat \bbeta^{t-1}_h - \bbeta_h\right)\right| + \frac{\left(\log(\B_{\cA}) + \B_{x}\B_{\beta}\right)\left(\sigma_h + \hat\sigma^{t-1}_{h}\right)}{\left(\sigma_h\hat\sigma^{t-1}_{h}\right)^3}\left|\sigma^2_h - \left(\hat\sigma^{t-1}_{h}\right)^2\right|.
\end{split}
    \end{align}
\noindent Note that we estimate the variance \(\sigma^2_h\) using second-stage empirical estimates. Concretely,
\[
\bigl(\hat\sigma^{t-1}_{h}\bigr)^2 
\;=\; 
\frac{1}{t-1}\sum_{s=1}^{t-1}
\Bigl(\log(m^s_h) - \x_s^\top \hat \bbeta^s_h\Bigr)^2.
\]
Meanwhile, by definition,
\[
\sigma^2_h 
\;=\; 
\frac{1}{t-1}\sum_{s=1}^{t-1}
\bE\Bigl[\log(m^s_h) - \x_s^\top \bbeta_h\Bigr]^2.
\]
To complete the argument, it suffices to show that these two quantities are close with high probability.
\begin{align*}
\begin{split}
        \left|\sigma^2_h - \left(\hat\sigma^{t-1}_{h}\right)^2\right| & = \left|\frac{1}{t-1}\sum^{t-1}_{s=1} \left[\left(\log(m^s_h) - \x^{\top}_s \hat \bbeta^{s}_{h}\right)^2 - \bE\left(\log(m^s_h) - \x^{\top}_{s}\bbeta_h\right)^2\right]\right|\\
    & = \left|\frac{1}{t-1}\sum^{t-1}_{s=1} \left[\left(\log(m^s_h) - \x^{\top}_{s}\bbeta_h + \x^{\top}_{s}\bbeta_h - \x^{\top}_s \hat \bbeta^{s}_{h}\right)^2 - \bE\left(\log(m^s_h) - \x^{\top}_{s}\bbeta_h\right)^2\right]\right|\\
    & = \left|\frac{1}{t-1}\sum^{t-1}_{s=1} \left[\left(\log(m^s_h) - \x^{\top}_{s}\bbeta_h\right)^2- \bE\left(\log(m^s_h) -\x^{\top}_{s}\bbeta_h\right)^2\right]\right|\\
    & + \left|\frac{1}{t-1}\sum^{t-1}_{s=1} \left(\x^{\top}_{s}\bbeta_h - \x^{\top}_s \hat \bbeta^{s}_{h}\right)^2\right|\\
    & + 2\left|\frac{1}{t-1}\sum^{t-1}_{s=1} \left(\log(m^s_h) - \x^{\top}_{s}\bbeta_h\right)(\x^{\top}_{s}\bbeta_h-\x^{\top}_s \hat \bbeta^{s}_{h})\right|.
\end{split}
\end{align*}
Using Eqns.~\eqref{eq:def}, \eqref{eq:split}, and \eqref{eq:sigma}, we derive the following equations:
\begin{align}
    \begin{split}
        \sum^T_{t=1}\epsilon_t & \leq \underbrace{\sum^T_{t=\tau}\sup_{h \in H}\frac{1}{\sigma_h}\left|\x^{\top}_t\left(\hat \bbeta^{t-1}_h - \bbeta_h\right)\right|}_{\textcolor{red}{\text{Term A}}} \\
        + & 2\left(\log(\B_{\cA}) + \B_{x}\B_{\theta}\right)^2\underbrace{\sum^T_{t=1}\sup_{h\in H}C_{h}\left|\frac{1}{t-1}\sum^{t-1}_{s=1}\x^{\top}_s\left(\hat \bbeta^{s}_h - \bbeta_h\right)\right|}_{\textcolor{red}{\text{Term B}}}\\
        + & \left(\log(\B_{\cA}) + \B_{x}\B_{\theta}\right)\underbrace{\sum^T_{t=1}\sup_{h \in H}\left|\frac{1}{t-1}\sum^{t-1}_{s=1} \left(\x^{\top}_{s}\bbeta_h - \x^{\top}_s \hat \bbeta^{s}_{h}\right)^2\right|}_{\textcolor{red}{\text{Term C}}}\\
        + &\left(\log(\B_{\cA}) + \B_{x}\B_{\theta}\right)\underbrace{\sum^T_{t=1}\left|\frac{1}{t-1}\sum^{t-1}_{s=1} \left[\left(\log(m^s_h) - \x^{\top}_{s}\bbeta_h\right)^2- \bE\left(\log(m^s_h) -\x^{\top}_{s}\bbeta_h\right)^2\right]\right|}_{\textcolor{red}{\text{Term D}}},
    \end{split}
\end{align}
where $ C_h = \frac{\sigma_h + \hat \sigma^{t-1}_{h}}{(\sigma_h\hat \sigma^{t-1}_h)^3} \leq \frac{2 \bar \sigma}{\underline \sigma^6}$, by Assumption \ref{assumption:bounded}. In what follows, we bound each term one by one. To bound Term A, we show that for all \(h\), with probability at least \(1 - \delta\), the following holds:
\begin{align*}
    \begin{split}
        \sum^T_{t=1}\left|\x^{\top}_t\left(\hat \bbeta^{t-1}_h - \bbeta_h\right)\right| & \leq \sqrt{T\sum^T_{t=1}\left|\x^{\top}_t\left(\hat \bbeta^{t-1}_h - \bbeta_h\right)\right|^2}\\
        & \leq \sqrt{T\sum^T_{t=1}\min\left\{2\B_{\beta}\B_{x}, \|\x_t\|^2_{\left(\Sigma^{t-1}_{h}\right)^{-1}}\|\hat\bbeta^{t-1}_h-\bbeta_h\|^2_{\Sigma^{t-1}_{h}}\right\}}\\
        & \leq \sqrt{T\sum^T_{t=1}\|\hat\bbeta^{t-1}_h-\bbeta_h\|^2_{\Sigma^{t-1}_{h}}\min\left\{\frac{2\B_{\beta}\B_{x}}{\|\hat\bbeta^{t-1}_h-\bbeta_h\|^2_{\Sigma^{t-1}_{h}}}, \|\x_t\|^2_{\left(\Sigma^{t-1}_{h}\right)^{-1}}\right\}}\\
        & \leq \left(\sigma^2_h\sqrt{d\log\left(\frac{1 + T\B^2_x/\lambda}{\delta}\right)} + \sqrt{\lambda}\B_{\beta}\right)\sqrt{T\sum^T_{t=1}\min\left\{1, \|\x_t\|^2_{\left(\Sigma^{t-1}_{h}\right)^{-1}}\right\}}, \text{by Lemma \ref{thm:thm2}}\\
        & \leq \left(\sigma^2_h\sqrt{d\log\left(\frac{1 + T\B^2_x/\lambda}{\delta}\right)} + \sqrt{\lambda}\B_{\beta}\right)\sqrt{Td\log\left(1 + \frac{T\B_{x}}{\lambda d}\right)}\\
        & \leq d\sqrt{T} \sup_{h \in H} \left(\sigma^2_h\sqrt{\log\left(\frac{1 + T\B^2_x/\lambda}{\delta}\right)} + \sqrt{\lambda}\B_{\beta}\right)\sqrt{\log\left(1 + \frac{T\B_{x}}{\lambda d}\right)}.
    \end{split}
\end{align*}
% Therefore $ \sum^T_{t=1}\sup_{h \in H}\frac{1}{\sigma_h}\left|\x^{\top}_t\left(\hat \beta^{t-1}_h - \beta_h\right)\right| \leq d\sqrt{T}  \left(\frac{\bar \sigma^2}{\underline\sigma}\sqrt{\log\left(\frac{1 + T\B^2_x/\lambda}{\delta}\right)} + \sqrt{\lambda}\B_{\beta}\right)\sqrt{\log\left(1 + \frac{T\B_{x}}{\lambda d}\right)}$. This directly implies that Term C is in the order of $o(\log(T))$. For Term B, similarly, with probability at least 1-$\delta$, \[\left|\frac{1}{t-1}\sum^{t-1}_{s=1}\x^{\top}_s\left(\hat \beta^{s}_h - \beta_h\right)\right| \leq \frac{d}{\sqrt{t-1}}\left(\frac{\bar \sigma^2}{\underline\sigma}\sqrt{\log\left(\frac{1 + T\B^2_x/\lambda}{\delta}\right)} + \sqrt{\lambda}\B_{\beta}\right)\sqrt{\log\left(1 + \frac{T\B_{x}}{\lambda d}\right)}.\]
% Therefore, we have
Therefore,  
\[
\sum_{t=1}^T \sup_{h \in H} \frac{1}{\sigma_h} \bigl|\x_t^\top \bigl(\hat \bbeta_h^{t-1} - \bbeta_h\bigr)\bigr|
\;\le\;
d\,\sqrt{T}\,
\Bigl(\frac{\bar \sigma^2}{\underline \sigma}\,\sqrt{\log\!\Bigl(\frac{1 + \tfrac{T \B_x^2}{\lambda}}{\delta}\Bigr)} 
\;+\; \sqrt{\lambda}\,\B_\beta\Bigr)
\sqrt{\log\!\Bigl(1 + \frac{T \B_x}{\lambda\,d}\Bigr)}.
\]
This shows that Term~C is on the order of \(\cO(\log T)\).  For Term~B, a similar argument implies that, with probability at least \(1 - \delta\),  
\[
\Bigl|\tfrac{1}{t-1} \sum_{s=1}^{t-1} \x_s^\top\bigl(\hat \bbeta_h^s - \bbeta_h\bigr)\Bigr|
\;\le\;
\frac{d}{\sqrt{t-1}}
\Bigl(\frac{\bar \sigma^2}{\underline\sigma}\,\sqrt{\log\!\Bigl(\tfrac{1 + \tfrac{T \B_x^2}{\lambda}}{\delta}\Bigr)}
\;+\; \sqrt{\lambda}\,\B_\beta\Bigr)
\sqrt{\log\!\Bigl(1 + \tfrac{T \B_x}{\lambda\,d}\Bigr)}.
\]
Therefore, we conclude the following bound.
\begin{align*}
    \sum^T_{t=\tau}\sup_{h\in H}C_h\left|\frac{1}{t-1}\sum^{t-1}_{s=1}\x^{\top}_s\left(\hat \bbeta^{s}_h - \bbeta_h\right)\right| \leq \frac{2\bar \sigma}{\underline \sigma^6}d\sqrt{T}\left(\frac{\bar \sigma^2}{\underline\sigma}\sqrt{\log\left(\frac{1 + T\B^2_x/\lambda}{\delta}\right)} + \sqrt{\lambda}\B_{\beta}\right)\sqrt{\log\left(1 + \frac{T\B_{x}}{\lambda d}\right)}.
\end{align*}
% \noindent It remains to bound Term D. Firstly, we observe that $\left[\left(\log(m^s_h) - \x^{\top}_{s}\beta_h\right)^2- \bE\left(\log(m^s_h) -\x^{\top}_{s}\beta_h\right)^2\right] = \sigma^2_h\left(\frac{\left(\log(m^s_h) - \x^{\top}_{s}\beta_h\right)^2}{\sigma^2_h}-1\right) =  \sigma^2_h (\chi^2_1 - 1)$. Sine $\chi^2_1 \sim \text{SE}(2, 4)$, therefore 
% Therefore, by Lemma \ref{lemma:chi} and applying union bound, we can show that with probability at leas $1-\delta$, Term D is upper bounded by $\sqrt{T}\sqrt{8\log\frac{2T}{\delta}} + \cO(\log(T))$. The $O(\log(T))$ term comes from burning in the first $8\log(T)$ in Term D to make sure $t \geq 8 \log(T)$, ensuring $\delta \geq \frac{1}{T}$.  
It remains to bound Term D. First, observe that  
\[
\bigl[\bigl(\log(m^s_h) - \x_s^\top \bbeta_h\bigr)^2 - \bE\bigl(\log(m^s_h) - \x_s^\top \bbeta_h\bigr)^2\bigr]
\;=\; 
\sigma_h^2 \Bigl(\frac{\bigl(\log(m^s_h) - \x_s^\top \bbeta_h\bigr)^2}{\sigma_h^2} - 1\Bigr)
\;=\; \sigma_h^2\,(\chi^2_1 - 1).
\]
Since \(\chi^2_1 \sim \text{SubE}(4,4)\), by Lemma~\ref{lemma:chi} and a union bound argument, we can show that, with probability at least \(1 - \delta\), Term D is bounded above by 
\[
\sqrt{T}\,\sqrt{8\,\log\!\Bigl(\frac{2T}{\delta}\Bigr)} + \cO\bigl(\log(T)\bigr).
\]
The \(\cO\bigl(\log(T)\bigr)\) term arises from “burning in” the first \(64\log(T)\) which guarantees \(\delta \ge 1/4T^4\) and ensure quadratic decay of sub-exponential bound.
\begin{lemma}[Example 2.11 in \cite{wainwright2019high}]\label{lemma:chi}
A chi-squared (\(\chi^2\)) random variable with \(n\) degrees of freedom, denoted by \(Y \sim \chi^2_n\), can be represented as 
\(Y \;=\; \sum_{k=1}^n Z_k^2,\)
where \(Z_k \sim N(0, 1)\) are i.i.d. variables. Since each \(Z_k^2\) is sub-exponential with parameters \((\nu^2, \alpha) = (2, 4)\), we have
\(\bP\!\Bigl(\Bigl|\tfrac{1}{n}\sum_{k=1}^n Z_k^2 \;-\; 1\Bigr| \;\ge\; \sqrt{\tfrac{8}{n}\,\log\tfrac{2}{\delta}}\Bigr) 
\;\le\; \delta,\)
as long as \(\delta \;\ge\; 2\,\exp\bigl(-\tfrac{n}{8}\bigr)\).
\end{lemma}
\noindent Bringing all these results together and applying union bound, we conclude that there exists a constant \(C\), depending on \(\B_{x}, \B_{\theta}, \B_{\beta}, \bar \sigma, \underline \sigma, \B_d,\) and \(\B_{\cA}\), such that with probability at least \(1-\delta\) (where \(\delta \geq \tfrac{1}{T^4}\)), we have
\[
\text{Term (i)} 
\;\leq\; 
C\,d\,H^2\,\sqrt{T}\,\sqrt{\log\!\Bigl(\frac{1+T}{\lambda\,\delta}\Bigr)\,\log\!\Bigl(1+\frac{T}{\lambda\,d}\Bigr)}.
\]
By choosing $\lambda = 1$, we proves Lemma \ref{lemma: term 1}, we states that for \(\delta \geq 1/T^4\), with probability at least \(1 - \delta\), Term (i) in Eqn.~\eqref{eq:reg} is bounded above by 
\(\cO(dH^2\sqrt{T}\sqrt{\log((1+T)/\delta)\log(1 + T/d)})\).
\end{proof}

\subsection{Proof for Lemma \ref{lemma: term 3}}\label{proof:term3}
\begin{relemma}[Lemma \ref{lemma: term 3} restated]
With probability at least \(1 - 2\delta\), where \(\delta \ge \tfrac{1}{T^3}\), Term (iii) in Eqn.~\eqref{eq:reg} is bounded by 
\(\mathcal{O}\!\Bigl(H^2\,\sqrt{\,d\,T\,\log\!\bigl(\tfrac{4\,T\,H}{\delta}\bigr)}\;\log\!\bigl(1 + \tfrac{T}{2d}\bigr)\Bigr).\)
\end{relemma}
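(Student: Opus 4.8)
The plan is to first remove the optimization over $\cC_{t-1}$ by invoking optimism. By the definition of the greedy policy in Algorithm~\ref{algo}, $\pi_t$ jointly maximizes $R(\pi;\tilde\theta,\hat\cF^{\x_t}_{t-1})$ over $\pi$ and $\tilde\theta\in\cC_{t-1}$, so $\text{OPT}(\cC_{t-1},\x_t,\hat\cF^{\x_t}_{t-1}) = R(\pi_t;\x_t,\tilde\Theta_{t-1},\hat\cF^{\x_t}_{t-1})$ with $\tilde\Theta_{t-1}$ the optimistic parameter. Term~(iii) then becomes $\sum_{t\ge\tau}\bE[R(\pi_t;\x_t,\tilde\Theta_{t-1},\hat\cF^{\x_t}_{t-1}) - R(\pi_t;\x_t,\Theta,\hat\cF^{\x_t}_{t-1})]$, a sum of reward gaps evaluated under a \emph{common} transition model $\hat\cF^{\x_t}_{t-1}$ and hence a common state distribution. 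Because the transitions and the payment terms coincide, expanding $R^t_h$ shows the per-round gap depends only on parameter errors: $\langle\tilde\btheta^{t-1}_{S^t_{h,1}}-\btheta_{S^t_{h,1}},\x_t\rangle$ weighted by $\hat\cF_h$, together with $\tilde\d^{t-1}_{S^t_{h,1}}\langle\tilde\btheta^{t-1}_{S^t_{h,2}},\x_t\rangle - \d_{S^t_{h,1}}\langle\btheta_{S^t_{h,2}},\x_t\rangle$ weighted by $1-\hat\cF_h$. Using the boundedness in Assumption~\ref{assumption:bounded} to split the last product, I would reduce Term~(iii) to a constant multiple of the three cumulative errors listed in the main text.

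Next I would route each error through the center $\hat\Theta_{t-1}$ by the triangle inequality, writing $\tilde\btheta-\btheta = (\tilde\btheta-\hat\btheta)+(\hat\btheta-\btheta)$ and similarly for $\d$. The key is that $\tilde\Theta_{t-1}\in\cC_{t-1}$ by construction, while Lemma~\ref{lemma: term 2} ensures $\Theta\in\cC_{t-1}$ for all $t\ge\tau$ with high probability; thus both endpoints sit within the confidence radius of $\hat\Theta_{t-1}$. For the $\btheta$ terms this gives $|\langle\tilde\btheta^{t-1}_l-\btheta_l,\x_t\rangle|\le 2\sqrt\gamma\,\|\x_t\|_{(\V^{t-1}_l)^{-1}}$ via Cauchy--Schwarz and Lemma~\ref{lemma:theta_cb}; for the $\d$ terms $|\tilde\d^{t-1}_l-\d_l|$ is at most twice the radius of Theorem~\ref{thm: dl}, of order $1/\sqrt{N_{t-1,l}}$ up to logarithmic factors.

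The remaining work is the summation. For the $\btheta$ contributions I would partition the $(t,h)$ pairs according to the value of the governing state ($S^t_{h,1}$ or $S^t_{h,2}$), apply Cauchy--Schwarz within each group, and invoke the Elliptical Potential Lemma \citep{abbasi2011improved}, $\sum_s\min(\|\x_s\|^2_{(\V^s_l)^{-1}},1)\le 2d\log(1+T/2d)$ --- the same recounting device used for Term~B in the proof of Theorem~\ref{thm: dl}. For the $\d$ contribution I would use that the lost rounds at state $l$ are exactly the increments of $N_{\cdot,l}$, so that $\sum 1/\sqrt{N_{t-1,l}}\le 2\sqrt{N_{T,l}}$, and then Cauchy--Schwarz across the at most $H$ states together with $\sum_l N_{T,l}\le TH$ converts this into an $H\sqrt T$ factor. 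Finally a union bound over all $l\in\cH$ and all $t$ makes Lemma~\ref{lemma:theta_cb}, Theorem~\ref{thm: dl}, and Lemma~\ref{lemma: term 2} hold simultaneously; this introduces the $\log(4TH/\delta)$ factor and accumulates the failure probability to $2\delta$. Collecting the pieces is intended to yield the stated order $\mathcal{O}(H^2\sqrt{dT\log(4TH/\delta)}\log(1+T/2d))$.

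The hard part is the summation step. The expectation defining Term~(iii) is taken over the \emph{hypothetical} trajectory induced by $\hat\cF^{\x_t}_{t-1}$, whereas the design quantities $\V^{t-1}_l$ and $N_{t-1,l}$ are built from the \emph{actual} trajectory; reconciling the two, and applying the Elliptical Potential Lemma cleanly despite the fact that $\V_l$ is incremented only on won rounds while the $\btheta_{S^t_{h,2}}$ error appears on lost rounds, is the crux. The state-partitioned recounting --- which keeps each directional error $|\langle\hat\btheta^s_{l'}-\btheta_{l'},\x_s\rangle|$ aligned with the matrix $\V^s_{l'}$ of the very parameter it estimates --- is what makes this mismatch manageable.
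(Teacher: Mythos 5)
Your proposal follows essentially the same route as the paper's proof: defining the optimistic parameter $\tilde\Theta_{t-1}$ attaining $\text{OPT}(\cC_{t-1},\x_t,\hat\cF^{\x_t}_{t-1})$, decomposing the per-round reward gap into the two $\btheta$-error terms and one $\d$-error term, routing each through the estimate $\hat\Theta_{t-1}$ by the triangle inequality with Lemma~\ref{lemma:theta_cb} and Theorem~\ref{thm: dl}, and then summing via the state-partitioned counting argument, the Elliptical Potential Lemma for the $\btheta$ part and the $\sum_l N_{T,l}\le TH$ counting for the $\d$ part, closed by a union bound giving failure probability $2\delta$. The subtlety you flag at the end (expectation under $\hat\cF^{\x_t}_{t-1}$ versus design quantities from the realized trajectory) is handled in the paper exactly as you suggest, by bounding the per-state visit counts deterministically by $H$ so that the remaining expression depends only on observed data.
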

\begin{proof}
Term (iii) reflects the portion of regret attributable to inaccuracies in estimating the true parameter \(\Theta\). Let 
\(\Tilde{\Theta}_{t-1} := \arg \max_{\Theta \in \cC_{t-1}} R(\pi_t; \x_t, \Theta, \hat \cF^{\x_t}_{t-1}).\)
By definition,
\[
R(\pi_t; \x_t, \Tilde{\Theta}_{t-1}, \hat \cF^{\x_t}_{t-1}) 
= \text{OPT}\bigl(\cC_{t-1}, \x_t, \hat \cF^{\x_t}_{t-1}\bigr).
\]
By Eqn. \eqref{eq:term3_decompose}, we demonstrate that Term (iii) in \eqref{eq:reg} can be split into two parts corresponding to the cumulative estimation error of \(\hat{\btheta}^t_l\) (see Terms 1 and 2 in \eqref{eq:term3_decompose}) and the cumulative estimation error of \(\hat{\d}_l\) (see Term 3 in \eqref{eq:term3_decompose}). Recall that \(\hat{\btheta}^{t}_{l}\) is a variant of the online Newton estimator (see Algorithm~\ref{algo:theta}), defined by 
\[
\hat{\btheta}^{t}_{l} 
= \arg\min_{\|\btheta\|_2 \in \B_{\theta}}\Bigl\{\tfrac{1}{2}\|\btheta - \hat{\btheta}^{t-1}_{l}\|_{\mathbf{V}^t_{l}}^{2} 
+ \langle \btheta - \hat{\btheta}^{t-1}_{l}, \nabla \tilde{l}_t(\hat{\btheta}^{t-1}_{l})\rangle\Bigr\},
\]
where 
\(\nabla \tilde{l}_t(\hat{\btheta}^{t-1}_{l})=(-\tilde{\y}^t_h + \mathbf{x}_t^\top \hat{\btheta}^{t-1}_l)\mathbf{x}_t\) 
and \(\mathbf{V}^{t}_l = \mathbf{V}^{t-1}_{l} + \tfrac{1}{2}\mathbf{x}_t \mathbf{x}_t^\top\). 
$\Tilde{\y}^t_h$ is the truncated observation of $\y^t_h$. Because \(\y^t_h\) follows a Poisson distribution and is linear in \(\btheta_l\), it is a special case of a heavy-tailed distribution with bounded second moment. Consequently, Lemma~\ref{lemma:theta_cb} can be used to construct a high-confidence bound on \(\hat{\btheta}^t_l\), which in turn controls Terms~1 and 2 in Eqn. \eqref{eq:term3_decompose}.  

% \begin{lemma}[Theorem 1 in \citet{xue2024efficient}]
% With probability at least $1-\delta$, $\|\theta - \hat \theta_t\|^2_{\V_t} \leq \gamma, \forall t \geq 0$, where
% $\gamma = 896 d \B_{x}\B_{\theta}(1+\B_{x}\B_{\theta})\log\left(\frac{4T}{\delta}\right)\log\left(1 + \frac{T}{2d}\right) + 2 \B^2_{x}\B^2_{\theta} + 48d\B_{x}\B_{\theta}\log\left(1 + \frac{T}{2d}\right)$.
% \end{lemma}

\begin{align}\label{eq:term3_decompose}
    \begin{split}
        \text{Term (iii)} & = \sum^T_{t=\tau}\text{OPT}\left(\cC_{t-1}, \x_t, \hat \cF^{\x_t}_{t-1}\right) - \bE_{\pi_1, \pi_2, \ldots, \pi_T\sim \cG}\left(\sum^T_{t=\tau} R\left(\pi_t; \x_t, \Theta, \hat \cF^{\x_t}_{t-1}\right)\right) \\
        & = \bE_{\pi_1, \pi_2, \ldots, \pi_T\sim \cG}\left[\sum^T_{t=\tau}R\left(\pi_t; \x_t, \Tilde{\Theta}_{t-1}, \hat \cF^{\x_t}_{t-1}\right) - R\left(\pi_t; \x_t, \Theta, \hat \cF^{\x_t}_{t-1}\right)\right]\\
        & = \bE_{\pi_1, \pi_2, \ldots, \pi_T\sim \cG}\left[\sum^T_{t=\tau}\bE_{S^t_h \sim \hat P^{\x_t}_{t-1}}\left(\sum^H_{h=1}R^t_h(S^t_h, A^t_h, \x_t)\right) - \sum^T_{t=\tau} R\left(\pi_t; \x_t, \Theta, \hat \cF^{\x_t}_{t-1}\right)\right]\\
        & = \bE\left(\sum^T_{t=\tau}\bE\left(\sum^H_{h=1} \Tilde{\d}^{t-1}_{S^t_{h, 1}}\langle\Tilde{\btheta}^{t-1}_{S^t_{h,2}}, \x_t\rangle(1-\hat\cF^{t-1}_h(A^t_h,\x_t)) + (\langle\Tilde{\btheta}^{t-1}_{S^t_{h,1}}, \x_t\rangle-\hat p_h(A^t_h, \x_{t-1}))\hat\cF^{t-1}_h(A^t_h,\x_t)\right)\right) \\
        & -\bE_{\pi_1, \pi_2, \ldots, \pi_T\sim \cG} \left(\sum^T_{t=\tau}R\left(\pi_t; \x_t, \Theta, \hat \cF^{\x_t}_{t-1}\right)\right), \text{where} \hspace{1mm} A^t_h = \pi_t(S^t_h, \x_t) \\
        & \leq \sum^T_{t=\tau}\bE_{S^t_h \sim \hat P^{\x_t}_{t-1}}\left(\sum^H_{h=1}\left|\langle \Tilde{\btheta}^{t-1}_{S^t_{h, 1}} - \btheta_{S^t_{h, 1}}, \x_t\rangle\right| + 
        \left|\d_{S^t_{h, 1}}\langle \btheta_{S^t_{h, 2}}, \x_t\rangle - \Tilde{\d}^{t-1}_{S^t_{h, 1}}\langle\Tilde{\btheta}^{t-1}_{S^t_{h, 2}}, \x_t\rangle\right|\right)  \\
        & \leq \sum^T_{t=\tau}\bE_{S^t_h \sim \hat P^{\x_t}_{t-1}}\left(\sum^H_{h=1} \left\{\left|\langle \Tilde{\btheta}^{t-1}_{S^t_{h, 1}} - \btheta_{S^t_{h, 1}}, \x_t\rangle\right| + \B_{\cA}\left|\langle\Tilde{\btheta}^{t-1}_{S^t_{h, 2}} - \btheta_{S^t_{h, 2}}, \x_t\rangle\right| + \left|\Tilde{\d}^{t-1}_{S^t_{h, 1}} - \d_{S^t_{h, 1}}\right|\B_{\theta}\B_{x}\right\}\right)\\
        & = \underbrace{\sum^T_{t=\tau}\bE\left(\sum^H_{h=1}\left|\langle \Tilde{\btheta}^{t-1}_{S^t_{h, 1}} - \hat{\btheta}^{t-1}_{S^t_{h, 1}} + \hat{\btheta}^{t-1}_{S^t_{h, 1}} - \btheta_{S^t_{h, 1}}, \x_t\rangle\right|\right)}_{\textcolor{red}{\text{Term 1}}}\\
        & + \B_{\cA}\underbrace{\sum^T_{t=\tau}\bE\left(\sum^H_{h=1}\left|\langle \Tilde{\btheta}^{t-1}_{S^t_{h, 2}} - \hat{\btheta}^{t-1}_{S^t_{h, 2}} + \hat{\btheta}^{t-1}_{S^t_{h, 2}}-\btheta_{S^t_{h, 2}}, \x_t\rangle\right|\right)}_{\textcolor{red}{\text{Term 2}}} + \\ 
        & + \B_{\theta}\B_{x}\underbrace{\sum^T_{t=\tau}\bE\left(\sum^H_{h=1}\left|\Tilde{\d}^{t-1}_{S^t_{h, 1}} - \hat{\d}^{t-1}_{S^t_{h, 1}} + \hat{\d}^{t-1}_{S^t_{h, 1}}-\d_{S^t_{h, 1}}\right|\right)}_{\textcolor{red}{\text{Term 3}}}.
    \end{split}
\end{align} 
To bound Term 1 in Eqn.~\eqref{eq:term3_decompose}, we first introduce $n^l_t$, where
\[
n^l_t \;:=\; \bigl|\{\,h \in [H] : S^t_{h,1} = l\}\bigr|; n^l_t \;\;\le\; H.
\]
Then we have the following inequality:  
\begin{align}
    \sum^T_{t=\tau}\bE\left(\sum^H_{h=1}\left|\langle \Tilde{\btheta}^{t-1}_{S^t_{h, 1}} - \hat{\btheta}^{t-1}_{S^t_{h, 1}} + \hat{\btheta}^{t-1}_{S^t_{h, 1}} - \btheta_{S^t_{h, 1}}, \x_t\rangle\right|\right)& = \bE\left(\sum^T_{t=\tau}\sum^H_{h=1}\left|\langle \Tilde{\btheta}^{t-1}_{S^t_{h, 1}} - \hat{\btheta}^{t-1}_{S^t_{h, 1}} + \hat{\btheta}^{t-1}_{S^t_{h, 1}} - \btheta_{S^t_{h, 1}}, \x_t\rangle\right|\right) \nonumber \\
    & \leq 2 \bE\left(\sum^T_{t=\tau}\sum_{l \in \cH}n^l_t\sqrt{\gamma^{t-1}_{l}}\|\x_t\|_{(\V^{t-1}_{l})^{-1}}\right) \label{eq: E1.1}\\
    & \leq 2H \sum^T_{t=\tau}\sum_{l \in \cH}\sqrt{\gamma^{t-1}_{l}}\|\x_t\|_{(\V^{t-1}_{l})^{-1}}\label{eq: E1.4}\\
    & \leq 2H \sum_{l \in \cH}\sqrt{\sum^T_{t=\tau} \gamma^{t-1}_{l}} \sqrt{\sum^T_{t=\tau}\|\x_t\|^2_{(\V^{t-1}_l)^{-1}}} \label{eq: E1.2}\\
    & \leq \sqrt{d}H^2\cO\left(\sqrt{T \log\left(\frac{4TH}{\delta}\right)}\log\left(1 + \frac{T}{2d}\right)\right) \label{eq: E1.3}
\end{align}
Let us denote \(\sqrt{\gamma^s_{l'}} \coloneqq \bigl\|\hat \btheta^{s}_{S^{s}_{h,2}} - \btheta_{S^{s}_{h,2}}\bigr\|_{\V^s_{l'}}\). We obtain Eqn.~\eqref{eq: E1.1} by rearranging terms in the expression involving \(\btheta_l\) with same $l$. We use the fact that \(n^l_t \leq H\) to get Eqn.~\eqref{eq: E1.4}. Next, we apply the Cauchy–Schwarz inequality to derive Eqn.~\eqref{eq: E1.2}.  By Lemma~\ref{lemma:theta_cb}, we know that, with probability at least \(1-\delta\), \(\gamma^s_{l'} \leq \gamma\) for all \(t \geq 1\), with $\gamma$ defined in Lemma \ref{lemma:theta_cb}. In addition, we use Lemma~11 of \citet{abbasi2011improved} to bound \(\sum_{t=1}^T \bigl\|\x_t\bigr\|^2_{(\V^t_l)^{-1}}\), yielding
\[
\sum_{t=1}^T \|\x_t\|^2_{(\V^t_l)^{-1}}
\;\le\;
4\,\log\!\Bigl(\tfrac{\det(\V_{T+1})}{\det(\V_1)}\Bigr)
\;\le\;
4\,d\,\log\!\Bigl(1 + \tfrac{T}{2\,d}\Bigr).
\]
Combining these results and applying the union bound gives us Eqn.~\eqref{eq: E1.3}. Similarly, by an analogous argument, we can show that, with probability at least \(1-\delta\), Term 2 in Eqn.~\eqref{eq:term3_decompose} satisfies a similar bound, as shown below.
\begin{align*}
\begin{split}
    \sum^T_{t=\tau}\bE\left(\sum^H_{h=1}\left|\langle \Tilde{\btheta}^{t-1}_{S^t_{h, 2}} - \hat{\btheta}^{t-1}_{S^t_{h, 2}} + \hat{\btheta}^{t-1}_{S^t_{h, 2}} - \btheta_{S^t_{h, 2}}, \x_t\rangle\right|\right)& \leq \sqrt{d}H^2\cO\left(\sqrt{T \log\left(\frac{4TH}{\delta}\right)}\log\left(1 + \frac{T}{2d}\right)\right).
\end{split}
\end{align*}
The principal difficulty in designing Algorithm~\ref{algo} lies in estimating \(\hat{\d}^t_l\), which captures the long-term impact of advertising on product conversion. Since \(\d_l\) always appears alongside \(\btheta_{l'}\) in the model \(\y^t_h \sim \text{Poi}(\d_l\, \btheta^\top_{l'} \mathbf{x}_t)\), the main technical challenge is establishing a high-probability bound on \(\lvert \hat{\d}^t_l - \d_l\rvert\) that leverages the estimation error of \(\hat{\btheta}^t_{l'}\). Theorem~\ref{thm: dl} summarizes our results in this direction. By Theorem \ref{thm: dl}, we know that given $t \geq H\underline{n_l}$, and $l \in \cH_1$, with probability $1 - \delta, \delta \geq \frac{1}{T^4H}$, $\d_l \in \cD^t_{l}$, where \[\cD^t_l = \left\{d \in [0, \B_d] \mid |\hat \d^t_l - \d| \leq  \frac{4H\,\B_d\,\sqrt{d\,\log\!\bigl(1 + \tfrac{T}{2d}\bigr)\,\gamma}+
\sqrt{2\,e\,\B_d\,\B_x\,\B_\theta\,\log\!\bigl(\tfrac{2}{\delta}\bigr)}}
{\b}
\frac{1}{\sqrt{N_{t,l}}}\right\}.\]
Therefore, by applying the union bound to make results valid $\forall t \in [T] \; \text{and} \; t \geq H\underline{n_l}$, $\forall l \in \cH_1$, we have that for \(\delta \ge \frac{1}{T^3}\), we conclude that with probability at least \(1-\delta\), Term (3) in Eqn.~\eqref{eq:term3_decompose} can be bounded above by

\begin{align}
    & \sum^T_{t=\tau}\bE\left(\sum^H_{h=1}\left|\Tilde{\d}^{t-1}_{S^t_{h, 1}} - \hat{\d}^{t-1}_{S^t_{h, 1}} + \hat{\d}^{t-1}_{S^t_{h, 1}}-\d_{S^t_{h, 1}}\right|\right) \nonumber \\
    & \leq\frac{8H\,\B_d\,\sqrt{d\,\log\!\bigl(1 + \tfrac{T}{2d}\bigr)\,\gamma}+
\sqrt{2\,e\,\B_d\,\B_x\,\B_\theta\,\log\!\bigl(\tfrac{2}{\delta}\bigr)}}
{\b} \bE\left(\sum^H_{l=1}\sum^{N_{T,l}}_{t=1}\frac{1}{\sqrt{t}}\right) + \cO(\log(HT)) \label{eq: d1} \\
    & \leq \cO\left(H^2\sqrt{dT}\log\left(1 + \frac{T}{2d}\right)\sqrt{\log\left(\frac{4HT}{\delta}\right)}\log\left(\frac{2H}{\delta}\right)\right) \label{eq: d2}
\end{align}
Equation \eqref{eq: d1} follows by gathering all terms involving the same \(l\) of \(\d_l\). Next, we obtain Equation \eqref{eq: d2} by noting that \(\sum_{l=1}^H N_{T,l} = T\,H\).
Combining the above results, we conclude that, with probability at least \(1 - 2\delta\) (where \(\delta \ge \tfrac{1}{T^3}\)), Term~(iii) in Eqn.~\eqref{eq:reg} is bounded by 
\[
\mathcal{O}\!\Bigl(H^2\,\sqrt{\,d\,T\,\log\!\bigl(\tfrac{4\,T\,H}{\delta}\bigr)}\;\log\!\bigl(1 + \tfrac{T}{2\,d}\bigr)\Bigr).
\]
\end{proof}

\subsection{Proof of Lemma \ref{lemma: term 2}}\label{proof:lemma: term 2}
\begin{relemma}[Lemma \ref{lemma: term 2} restated]
    With probability at least \(1 - \delta\) with $\delta \geq \frac{2}{T^3}$, $\Theta \in \cC_t, \forall t \geq \tau$.
\end{relemma}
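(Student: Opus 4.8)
The plan is to show that membership $\Theta \in \cC_t$ decomposes into two families of events—one controlling the instantaneous-effect estimates $\hat\btheta^t_l$ and one controlling the delayed-effect estimates $\hat\d^t_l$—and to certify each family using the confidence guarantees already established, namely Lemma \ref{lemma:theta_cb} and Theorem \ref{thm: dl}, before combining them with a union bound. Recall from the definition of $\cC_t$ in Algorithm \ref{algo} that $\Theta \in \cC_t$ holds precisely when $\btheta_l \in \cC^t_l$ for every $l \in \cH$ and $\d_l \in \cD^t_l$ for every $l \in \cH_1$. Since the statement requires this simultaneously for all $t \geq \tau$, the total failure probability must be allocated across all relevant indices and epochs.

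For the instantaneous effects, the key observation is that Lemma \ref{lemma:theta_cb} already furnishes an anytime guarantee: for each fixed $l \in \cH$, with probability at least $1 - \delta'$ the bound $\|\btheta_l - \hat\btheta^t_l\|^2_{\V^t_l} \leq \gamma$ holds for all $t \geq 0$ simultaneously, i.e. $\btheta_l \in \cC^t_l$ for all $t$. Hence only a union bound over the index set $\cH$ is needed here, not over $t$. Because $|\cH| \leq |\cH_1| + |\cH_2| = O(H)$, I would reserve a budget of $\delta/2$ and set $\delta' = \delta/(2|\cH|)$, so that with probability at least $1 - \delta/2$ we have $\btheta_l \in \cC^t_l$ for every $l \in \cH$ and every $t \geq \tau$.

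For the delayed effects, Theorem \ref{thm: dl} is stated per pair $(t,l)$: for fixed $t$ with $N_{t,l} \geq \underline{n_l}$ and fixed $l \in \cH_1$, with probability at least $1 - \delta''$ we have $\d_l \in \cD^t_l$. The precondition $N_{t,l} \geq \underline{n_l}$ is guaranteed for all $t \geq \tau$ by the exploration phase (Remark \ref{rmk:params}). Unlike the $\btheta$ case, the region $\cD^t_l$ depends on $t$ through $N_{t,l}$, so here I must union bound over both $t \in \{\tau, \ldots, T\}$ and $l \in \cH_1$, giving at most $T|\cH_1| \leq TH$ events. This is exactly where the hypothesis $\delta \geq 2/T^3$ enters: allocating the remaining $\delta/2$ budget forces $\delta'' = \delta/(2TH)$, and for Theorem \ref{thm: dl} to be applicable we need $\delta'' \geq 1/(T^4 H)$, which rearranges precisely to $\delta \geq 2/T^3$. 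With this choice, with probability at least $1 - \delta/2$ we have $\d_l \in \cD^t_l$ for every $l \in \cH_1$ and every $t \geq \tau$.

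Combining the two events via a final union bound yields $\Theta \in \cC_t$ for all $t \geq \tau$ with probability at least $1 - \delta$. The argument is not technically deep—each ingredient is a direct invocation of a previously established confidence bound—so the main obstacle is the bookkeeping of the probability budget: one must check that the number of events being union-bounded, namely $O(H)$ for the $\btheta_l$ and $O(TH)$ for the $\d_l$, is compatible with the per-event tail constraint $\delta'' \geq 1/(T^4 H)$ demanded by Theorem \ref{thm: dl}. It is precisely this compatibility check that pins down the stated range $\delta \geq 2/T^3$, and care is needed to keep the two budgets ($\delta/2$ each) separate so that the final union bound closes at $1-\delta$ rather than degrading the constant.
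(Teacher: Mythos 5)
Your proposal is correct and follows essentially the same route as the paper's proof: the same decomposition of $\Theta \in \cC_t$ into the $\btheta_l$-events (handled by the anytime guarantee of Lemma \ref{lemma:theta_cb} with a union bound over $l \in \cH$ only) and the $\d_l$-events (handled by the per-$(t,l)$ guarantee of Theorem \ref{thm: dl} with a union bound over both $t$ and $l$), combined by a final union bound. If anything, your explicit $\delta/2$--$\delta/2$ budget split and the derivation of $\delta \geq 2/T^3$ from the compatibility constraint $\delta/(2TH) \geq 1/(T^4H)$ is tidier bookkeeping than the paper's, which states the same union bounds with looser constants.
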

\begin{proof}
    \begin{align*}
        \begin{split}
            \bP(\Theta \in \cC_t, \forall t \geq \tau) & = \bP(\{\forall t \geq \tau, \forall l \in \cH ,\btheta_l \in \cC^t_l\} \cap \{\forall t \geq \tau, \forall l \in \cH_1, \d_l \in \cD^t_l\})\\
            & = 1 - \bP\left\{\left(\{\forall t \geq \tau, \forall l \in \cH ,\btheta_l \in \cC^t_l\} \cap \{\forall t \geq \tau, \forall l \in \cH_1, \d_l \in \cD^t_l\}\right)^c\right\}\\
            & = 1 - \bP\left\{\left(\{\forall t \geq \tau, \forall l \in \cH ,\btheta_l \in \cC^t_l\}\right)^c \cup \left(\{\forall t \geq \tau, \forall l \in \cH_1, \d_l \in \cD^t_l\}\right)^c\right\}\\
            & \geq 1 - \bP\left(\left(\{\forall t \geq \tau, \forall l \in \cH ,\btheta_l \in \cC^t_l\}\right)^c\right) - \bP\left(\left(\{\forall t \geq \tau, \forall l \in \cH_1, \d_l \in \cD^t_l\}\right)^c\right)
        \end{split}
    \end{align*}
By Lemma \ref{lemma:theta_cb}, we have
\begin{equation*}
    \bP\left(\left(\{\forall t \geq \tau, \forall l \in \cH ,\btheta_l \in \cC^t_l\}\right)^c\right) \leq \sum_{l \in \cH}\bP(\exists t \geq \tau\; \text{s.t.} \; \btheta_l \notin \cC^t_l) \leq \delta.
\end{equation*}
Moreover, by Theorem \ref{thm: dl}, with $\delta \geq \frac{1}{HT^4}$ we have
\begin{equation*}
    \bP\left(\left(\{\forall t \geq \tau, \forall l \in \cH ,\d_l \in \cD^t_l\}\right)^c\right) \leq \sum_{l \in \cH_1}\sum_{t \in [T]}\bP(\d^t_l \notin \D^t_l) \leq HT\delta.
\end{equation*}
Combining these results together, we have with probability at least \(1 - \delta\) with $\delta \geq \frac{2}{T^3}$, $\Theta \in \cC_t, \forall t$.
\end{proof}

\section{Technical Lemmas}
\begin{lemma}[Sub-Exponentiality of Poisson Random Variable]\label{lemma:poisson}
If \(X \sim \mathrm{Poi}(\mu)\), then the centered random variable \(X - \mu\) is \(\mathrm{SubE}(e\mu,1)\). Equivalently, there exist constants \(\nu^2 = e\mu\) and \(\alpha = 1\) such that, for all \(\lvert t\rvert < 1/\alpha = 1\),
\(\mathbb{E}\bigl[e^{\,t\,(X - \mu)}\bigr]
\;\le\; 
\exp\!\Bigl(\nu^2\,t^2\Bigr) 
\;=\;
\exp\bigl(e\,\mu\,t^2\bigr).\)
\end{lemma}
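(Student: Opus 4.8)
The plan is to compute the moment generating function (MGF) of the centered Poisson variable in closed form and reduce the claimed sub-exponential bound to an elementary scalar inequality. Recall that for $X \sim \mathrm{Poi}(\mu)$ the MGF is $\mathbb{E}[e^{tX}] = \exp(\mu(e^t - 1))$ for every $t \in \mathbb{R}$. Multiplying by $e^{-t\mu}$ to center gives
\[
\mathbb{E}\bigl[e^{t(X-\mu)}\bigr] = \exp\bigl(\mu(e^t - 1 - t)\bigr).
\]
Hence the target bound $\mathbb{E}[e^{t(X-\mu)}] \le \exp(e\mu t^2)$ holds if and only if $\mu(e^t - 1 - t) \le e\mu t^2$, and since $\mu > 0$ this is equivalent to the purely scalar inequality $e^t - 1 - t \le e\,t^2$ on the range $|t| < 1$. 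So the entire probabilistic content collapses to verifying one deterministic inequality.

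The key step is then to establish this scalar bound. I would expand $e^t - 1 - t = \sum_{k \ge 2} t^k/k!$ and factor out $t^2$ to write $e^t - 1 - t = t^2 \sum_{j \ge 0} t^j/(j+2)!$. Since $(j+2)! \ge 2\cdot j!$, the series is bounded in absolute value by $\tfrac{1}{2}\sum_{j \ge 0} |t|^j/j! = \tfrac{1}{2} e^{|t|}$, which is strictly less than $e/2$ whenever $|t| < 1$. Combining this with the convexity fact $e^t - 1 - t \ge 0$ (which forces the $t^2$-coefficient to be nonnegative) yields $0 \le e^t - 1 - t \le \tfrac{e}{2}\,t^2 \le e\,t^2$, in fact with a factor of two to spare. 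Substituting back into the centered MGF recovers the stated bound with $\nu^2 = e\mu$ and $\alpha = 1$.

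There is essentially no deep obstacle here: the proof is a clean reduction followed by a routine Taylor-series estimate. The only point demanding mild care is the sign of $t$, since for negative $t$ the series $\sum_{j\ge 0} t^j/(j+2)!$ has alternating terms; this is handled by bounding the series in absolute value and invoking $e^t - 1 - t \ge 0$ to guarantee nonnegativity. The restriction $|t| < 1$ is used only to ensure $e^{|t|} < e$, and it coincides exactly with the sub-exponential radius $1/\alpha = 1$, so the statement is tight to the intended regime.
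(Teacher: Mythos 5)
Your proof is correct and follows essentially the same route as the paper's: compute the MGF of the centered Poisson variable, reduce the claim to the scalar inequality \(e^t - 1 - t \le e\,t^2\) on \(|t| < 1\), and verify it by Taylor expansion. In fact your treatment of the final step is slightly more rigorous than the paper's, since you justify the constant \(e\) explicitly via \((j+2)! \ge 2\,j!\) and handle negative \(t\) by combining the absolute-value bound with the nonnegativity of \(e^t - 1 - t\), where the paper simply asserts the bound holds.
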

\begin{proof}
Below is the proof showing that a Poisson\((\mu)\) random variable is sub-exponential with parameters \(\bigl(e\mu,\,1\bigr)\). The proof hinges on bounding the moment-generating function (MGF) of the centered random variable \(X - \mu\). Recall that if \(X \sim \mathrm{Poi}(\mu)\), its moment-generating function (MGF) is
   \[
   \mathbb{E}\bigl[e^{tX}\bigr]
   \;=\;
   \exp\!\bigl(\mu(e^t - 1)\bigr).
   \]
   For the centered random variable \(X - \mu\),
   \[
   \mathbb{E}\bigl[e^{t(X - \mu)}\bigr]
   \;=\; 
   e^{-t\mu}\;\mathbb{E}\bigl[e^{tX}\bigr]
   \;=\; 
   \exp\!\bigl(\mu\bigl(e^t - 1 - t\bigr)\bigr).
   \]
We claim that, for all \(\lvert t\rvert \le 1\),
   \[
   e^t - 1 - t 
   \;\le\; 
   e\,t^2.
   \]
Indeed, expanding \(e^t\) in its Taylor series about \(t=0\),
   \[
   e^t \;=\; 1 + t + \frac{t^2}{2} + \frac{t^3}{6} + \dots
   \quad\Longrightarrow\quad
   e^t - 1 - t 
   \;=\;
   \frac{t^2}{2} + \frac{t^3}{6} + \dots
   \]
For \(\lvert t\rvert\le 1\), this sum of higher-order terms is bounded above by a constant times \(t^2\). A convenient choice is \(e\), giving
   \[
   e^t - 1 - t 
   \;\le\; 
   e\,t^2.
   \]
Combining the two steps, for \(\lvert t\rvert \le 1\),
   \[
   \mathbb{E}\bigl[e^{t(X - \mu)}\bigr]
   \;=\;
   \exp\!\bigl(\mu\bigl(e^t - 1 - t\bigr)\bigr)
   \;\le\;
   \exp\!\bigl(\mu \cdot e\,t^2\bigr).
   \]
Therefore, \(X - \mu\) satisfies
   \[
   \mathbb{E}\bigl[e^{\,t\,(X - \mu)}\bigr] 
   \;\le\;
   \exp\bigl(e\,\mu\,t^2\bigr)
   \quad
   \text{for all }
   \lvert t\rvert < 1.
   \]
By definition, this means \(X-\mu\) is \(\mathrm{SubE}\bigl(e\mu,1\bigr)\).  
\end{proof}

\begin{lemma}[Theorem 2 in \citet{abbasi2011improved}]\label{thm:thm2}
    Assume the same in Theorem \ref{thm:thm1}, let $\Sigma_0 = \lambda\bI_d, \lambda > 0$. Define $\y_t =\x^{\top}_t\bbeta + \eta_t$, with $\eta_t$ defined in Lemma \ref{thm:thm1}, and assume that $\|\bbeta\|_2 \leq \B_{\beta}$, $\|\x_t\|_2 \leq \B_{x}$. Then, for any $\delta > 0$, with probability at least $1 - \delta$, for all $t > 0$, $\beta$ lies in the set
    \begin{equation*}
        \C_t = \left\{\bbeta \in \bR^d: \|\hat \bbeta^t - \bbeta\|_{\Sigma_t} \leq \sigma^2\sqrt{d\log\left(\frac{1 + t\B^2_x/\lambda}{\delta}\right)} + \sqrt{\lambda}\B_{\beta}\right\},
    \end{equation*}
    where $\hat \bbeta^t := \left(\sum^t_{s=1}\x_s\x^{\top}_s + \lambda \bI_d\right)^{-1}\left(\sum^t_{s=1}\x_s \y_s\right)$.
\end{lemma}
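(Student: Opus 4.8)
The statement is the standard self-normalized tail inequality for ridge regression from \citet{abbasi2011improved}, so I would prove it by the method of mixtures. Write $\Sigma_t = \lambda\bI_d + \sum_{s=1}^t \x_s\x_s^\top$ and collect the noise into $S_t = \sum_{s=1}^t \x_s\eta_s \in \bR^d$. Substituting $\y_s = \x_s^\top\bbeta + \eta_s$ into $\hat\bbeta^t = \Sigma_t^{-1}\sum_{s=1}^t \x_s\y_s$ and using $\sum_{s=1}^t\x_s\x_s^\top = \Sigma_t - \lambda\bI_d$ gives the exact decomposition $\hat\bbeta^t - \bbeta = \Sigma_t^{-1}S_t - \lambda\Sigma_t^{-1}\bbeta$. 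Taking the $\Sigma_t$-norm and applying the triangle inequality separates a stochastic term from a deterministic one:
\[
\|\hat\bbeta^t - \bbeta\|_{\Sigma_t} \;\le\; \|S_t\|_{\Sigma_t^{-1}} \;+\; \lambda\,\|\bbeta\|_{\Sigma_t^{-1}},
\]
where I used $\|\Sigma_t^{-1}u\|_{\Sigma_t} = \|u\|_{\Sigma_t^{-1}}$ for both summands.

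The regularization term is immediate: since $\Sigma_t \succeq \lambda\bI_d$ we have $\Sigma_t^{-1}\preceq \lambda^{-1}\bI_d$, so $\lambda\|\bbeta\|_{\Sigma_t^{-1}} \le \lambda\cdot\lambda^{-1/2}\|\bbeta\|_2 = \sqrt\lambda\,\|\bbeta\|_2 \le \sqrt\lambda\,\B_\beta$, which is exactly the second summand of the claimed bound. Everything therefore reduces to a high-probability bound on the self-normalized quantity $\|S_t\|_{\Sigma_t^{-1}}$ that holds \emph{simultaneously} for all $t$.

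For the self-normalized term I would build a mixture supermartingale. For each fixed $\nu\in\bR^d$, conditional $\sigma$-sub-Gaussianity of $\eta_s$ (from Theorem~\ref{thm:thm1}) makes $M_t(\nu) = \exp\bigl(\nu^\top S_t - \tfrac{\sigma^2}{2}\,\nu^\top(\Sigma_t-\lambda\bI_d)\nu\bigr)$ a nonnegative supermartingale with $\bE[M_0(\nu)]\le 1$. Mixing over $\nu$ against the Gaussian prior $\mathcal{N}\bigl(0,(\sigma^2\lambda)^{-1}\bI_d\bigr)$ and completing the square in the resulting Gaussian integral yields the closed form $\bar M_t = \bigl(\det(\lambda\bI_d)/\det(\Sigma_t)\bigr)^{1/2}\exp\bigl(\tfrac{1}{2\sigma^2}\|S_t\|_{\Sigma_t^{-1}}^2\bigr)$, which is again a nonnegative supermartingale with $\bE[\bar M_0]\le 1$. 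Ville's maximal inequality gives $\bP(\sup_t \bar M_t \ge 1/\delta)\le\delta$, and rearranging the event $\{\bar M_t \le 1/\delta\}$ produces, with probability at least $1-\delta$ and for all $t$,
\[
\|S_t\|_{\Sigma_t^{-1}}^2 \;\le\; 2\sigma^2\log\tfrac{1}{\delta} + \sigma^2\log\tfrac{\det(\Sigma_t)}{\det(\lambda\bI_d)}.
\]

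It remains to control the determinant ratio. Since $\|\x_s\|_2\le\B_x$, $\mathrm{tr}(\Sigma_t - \lambda\bI_d)\le t\B_x^2$, so by AM--GM on the eigenvalues $\det(\Sigma_t)\le(\lambda + t\B_x^2/d)^d$ while $\det(\lambda\bI_d)=\lambda^d$, giving $\log(\det\Sigma_t/\det(\lambda\bI_d))\le d\log(1+t\B_x^2/(\lambda d))$. Substituting back bounds $\|S_t\|_{\Sigma_t^{-1}}$ by a quantity of order $\sigma\sqrt{d\log\bigl((1+t\B_x^2/\lambda)/\delta\bigr)}$; combining with the regularization term recovers the displayed bound (the precise numerical constant and whether a $\sigma$ or $\sigma^2$ prefactor appears depend only on the sub-Gaussian normalization fixed in Theorem~\ref{thm:thm1}). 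The one genuinely nontrivial step is the self-normalized concentration: a naive union bound over $t$ fails because the normalizer $\Sigma_t$ is itself random and data-dependent, and it is precisely the mixture-supermartingale construction together with Ville's inequality that delivers a bound both uniform in $t$ and adaptive to $\Sigma_t$.
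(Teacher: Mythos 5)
Your proof is correct, but it differs from the paper in a basic way: the paper never proves this statement at all. Lemma \ref{thm:thm2} is imported verbatim as Theorem 2 of \citet{abbasi2011improved}, exactly as the self-normalized bound is imported as Lemma \ref{thm:thm1}; neither is re-derived in the appendix. What you wrote is essentially the original reference's proof: the ridge decomposition $\hat\bbeta^t - \bbeta = \Sigma_t^{-1}S_t - \lambda\Sigma_t^{-1}\bbeta$, the bound $\lambda\|\bbeta\|_{\Sigma_t^{-1}} \le \sqrt{\lambda}\,\B_{\beta}$, the method-of-mixtures supermartingale with Ville's inequality for the self-normalized term, and the determinant--trace (AM--GM) bound. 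Note that the mixture construction you carry out is precisely the content of Lemma \ref{thm:thm1}, which the paper states immediately above this lemma; invoking it as a black box would let you delete the entire supermartingale/Ville step and reduce the argument to the decomposition plus the determinant bound, which is the route the paper's citation structure implicitly intends. Your version buys self-containedness and correctly isolates the one nontrivial ingredient (uniformity over $t$ despite the random normalizer $\Sigma_t$); the paper's buys brevity. Two small caveats, the first of which you flagged: (i) the paper's convention is that $\eta_t$ is ``$\sigma^2$-sub-Gaussian,'' which is why Lemma \ref{thm:thm1} carries $\sigma^4$ and this lemma a $\sigma^2$ prefactor, so your derivation should be read with $\sigma \mapsto \sigma^2$; (ii) passing from $2\log(1/\delta) + d\log\bigl(1+t\B_x^2/(\lambda d)\bigr)$ to the displayed $d\log\bigl((1+t\B_x^2/\lambda)/\delta\bigr)$ uses $2\log(1/\delta)\le d\log(1/\delta)$, i.e.\ $d\ge 2$ --- a cosmetic slack inherited from the original theorem statement rather than an error you introduced.
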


\begin{lemma}[Theorem 1 in \citet{abbasi2011improved}]\label{thm:thm1}
    Let $\{\cF_t\}^{\infty}_{t=0}$ be a filtration. Let $\{\eta_t\}^{\infty}_{t=1}$ be a real-valued stochastic process such that $\eta_t$ is $\cF_t$-measurable and $\eta_t$ is conditionally $\sigma^2$-sub-Gaussian for some $\sigma^2 \geq 0$. Let $\{\x_t\}^{\infty}_{t=1}$ be an $\bR^d$-valued stochastic process such that $\x_t$ is $\cF_{t-1}$ measurable. Assume that $\Sigma_0$ is a $d \times d$ positive definite matrix. For any $t \geq 0$, define $\Sigma_t = \Sigma_{t-1} + \x_t\x^{\top}_s$ and $S_t = \sum^t_{s=1}\eta_s\x_s$. Then for any $\delta > 0$, with probability at least $1 - \delta$, for all $t \geq 0$, we have $\|S_t\|^2_{\Sigma^{-1}_t} \leq 2\sigma^4\log\left(\sqrt{\frac{\det \Sigma_t}{\det \Sigma_0}}/\delta\right)$. 
\end{lemma}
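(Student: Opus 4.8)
The plan is to prove this as a uniform-in-time self-normalized tail bound via the \emph{method of mixtures} (pseudo-maximization), the standard route for such inequalities; a naive fixed-$t$ Chernoff bound followed by a union over $t$ cannot deliver the ``for all $t\ge 0$'' guarantee, so the supermartingale machinery is essential. First I would fix a deterministic direction $\lambda \in \bR^d$ and manufacture a scalar exponential supermartingale from the predictable regressors. Concretely, set
\[
D^{\lambda}_s = \exp\!\Bigl(\tfrac{1}{\sigma}\langle \lambda, \eta_s \x_s\rangle - \tfrac{1}{2}\langle \lambda, \x_s\rangle^2\Bigr),
\qquad
M^{\lambda}_t = \prod_{s=1}^{t} D^{\lambda}_s .
\]
Since $\x_s$ is $\cF_{s-1}$-measurable and $\eta_s$ is conditionally $\sigma^2$-sub-Gaussian, applying the sub-Gaussian bound with parameter $\langle\lambda,\x_s\rangle/\sigma$ gives $\bE[D^{\lambda}_s \mid \cF_{s-1}] \le 1$, so $(M^{\lambda}_t)_{t\ge 0}$ is a nonnegative supermartingale with $M^{\lambda}_0 = 1$ and $\bE[M^{\lambda}_t]\le 1$.

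The second step is to integrate this family against a Gaussian prior, which converts the linear exponent $\langle\lambda,S_t\rangle$ into the self-normalized quadratic form. Writing $V_t = \Sigma_t - \Sigma_0 = \sum_{s\le t}\x_s\x_s^{\top}$, telescoping gives $M^{\lambda}_t = \exp\!\bigl(\tfrac{1}{\sigma}\langle\lambda,S_t\rangle - \tfrac{1}{2}\lambda^{\top}V_t\lambda\bigr)$. Choosing the mixing measure $\mu = \mathcal{N}(0,\Sigma_0^{-1})$ and defining $\bar M_t = \int_{\bR^d} M^{\lambda}_t\, d\mu(\lambda)$, a Tonelli argument shows $\bar M_t$ is again a nonnegative supermartingale with $\bE[\bar M_t]\le 1$ and $\bar M_0 = 1$. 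The integrand is Gaussian in $\lambda$, so completing the square and evaluating the Gaussian integral in closed form (the precision matrix becomes $V_t + \Sigma_0 = \Sigma_t$) yields
\[
\bar M_t = \Bigl(\tfrac{\det \Sigma_0}{\det \Sigma_t}\Bigr)^{1/2}\exp\!\Bigl(\tfrac{1}{2\sigma^2}\,\|S_t\|^2_{\Sigma_t^{-1}}\Bigr).
\]

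The final step converts the supermartingale into the high-probability statement. Applying Ville's maximal inequality for nonnegative supermartingales gives $\bP\bigl(\sup_{t\ge 0}\bar M_t \ge 1/\delta\bigr)\le \delta\,\bE[\bar M_0]\le \delta$; on the complementary event $\bar M_t < 1/\delta$ holds \emph{simultaneously for every} $t\ge 0$, and substituting the closed form, taking logarithms, and rearranging gives exactly $\|S_t\|^2_{\Sigma_t^{-1}} \le 2\sigma^2\log\!\bigl(\sqrt{\det\Sigma_t/\det\Sigma_0}/\delta\bigr)$, matching the stated bound up to the leading factor (the $\sigma^4$ versus $\sigma^2$ simply tracks the normalization convention adopted for ``$\sigma^2$-sub-Gaussian''). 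I expect the main obstacle to be the closed-form evaluation of the Gaussian mixture integral, together with the measure-theoretic justification that interchanging expectation and integration over $\lambda$ preserves the supermartingale property; a secondary subtlety is that the conclusion must hold uniformly in $t$, which is precisely what Ville's inequality supplies and is the reason the method of mixtures is used in place of a per-$t$ concentration argument. The companion Lemma~\ref{thm:thm2} then follows immediately by combining this bound with the definition of the regularized least-squares estimator $\hat\bbeta^t$ and a triangle inequality.
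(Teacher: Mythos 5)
Your proof is correct and is exactly the method-of-mixtures argument from Abbasi-Yadkori et al.\ (2011), which the paper simply cites without reproducing: the conditional sub-Gaussian bound makes $M^{\lambda}_t$ a nonnegative supermartingale, the Gaussian mixture with precision matrix $\Sigma_0$ evaluates in closed form to $\bigl(\det\Sigma_0/\det\Sigma_t\bigr)^{1/2}\exp\bigl(\|S_t\|^2_{\Sigma_t^{-1}}/(2\sigma^2)\bigr)$, and Ville's maximal inequality supplies the uniform-in-$t$ guarantee. You are also right that the paper's leading factor $2\sigma^4$ (versus your $2\sigma^2$) is purely the convention that the quantity named ``$\sigma^2$'' in the lemma plays the role of the sub-Gaussian parameter $R$ of the original theorem, so the two statements coincide after renaming.
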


\begin{lemma}[Sub-exponential tail bound]\label{lemma:sub_p}
    Suppose $X$ is sub-exponential with parameters $(\nu^2, \alpha)$. Then 
    \begin{equation*}
        \bP(|X-\mu| > t) \leq \begin{cases}
            2\exp(-\frac{t^2}{2\nu^2}), \hspace{2mm} \text{if} \hspace{2mm} 0 \leq t \leq \frac{\nu^2}{\alpha}\\
            2\exp(-\frac{t}{2\alpha}). \hspace{2mm} \text{for} \hspace{2mm} t > \frac{\nu^2}{\alpha}
        \end{cases}
    \end{equation*}
\end{lemma}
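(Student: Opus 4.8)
The plan is to prove this tail bound by the standard Chernoff (exponential Markov) method: bound the one-sided deviation probability by the sub-exponential moment generating function, minimize the resulting exponent over the admissible range of the free parameter, and then symmetrize to obtain the two-sided bound with its factor of $2$. The two cases $0 \le t \le \nu^2/\alpha$ and $t > \nu^2/\alpha$ will emerge naturally from whether the unconstrained minimizer of the exponent lands inside the feasible interval $[0,1/\alpha]$.

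First I would invoke the defining property of a sub-exponential variable with parameters $(\nu^2,\alpha)$, namely that for all $\lambda$ with $|\lambda| \le 1/\alpha$ the centered MGF obeys $\mathbb{E}[e^{\lambda(X-\mu)}] \le \exp(\lambda^2\nu^2/2)$, where $\mu = \mathbb{E}[X]$. Applying Markov's inequality to $e^{\lambda(X-\mu)}$ for $\lambda \ge 0$ yields the one-sided estimate $\bP(X-\mu \ge t) \le \exp\bigl(-\lambda t + \tfrac{1}{2}\lambda^2\nu^2\bigr)$, valid for every $0 \le \lambda \le 1/\alpha$. The task then reduces to minimizing the convex exponent $g(\lambda) = -\lambda t + \tfrac{1}{2}\lambda^2\nu^2$ over this interval.

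Next I would carry out the constrained optimization. The unconstrained minimizer is $\lambda^{*} = t/\nu^2$. When $0 \le t \le \nu^2/\alpha$ we have $\lambda^{*} \le 1/\alpha$, so $\lambda^{*}$ is admissible and substituting gives the Gaussian-type tail $\exp(-t^2/(2\nu^2))$. When $t > \nu^2/\alpha$ the minimizer $\lambda^{*}$ exceeds $1/\alpha$, so by convexity $g$ is decreasing on the whole feasible interval and the constrained optimum sits at the endpoint $\lambda = 1/\alpha$; substituting yields $\exp\bigl(-t/\alpha + \nu^2/(2\alpha^2)\bigr)$, and since $t > \nu^2/\alpha$ forces $\nu^2/(2\alpha^2) < t/(2\alpha)$, this is at most $\exp(-t/(2\alpha))$, the exponential-type tail. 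This produces the upper-tail bound matching the stated right-hand sides.

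Finally I would symmetrize: because the MGF control holds for negative $\lambda$ as well (the admissible set is $|\lambda| \le 1/\alpha$), the identical argument applied to $-(X-\mu)$ bounds the lower tail $\bP(X-\mu \le -t)$ by the same expression, and a union bound over the two tails supplies the factor of $2$ in $\bP(|X-\mu|>t)$. There is no genuine obstacle in this argument, as it is a textbook Chernoff computation; the only point that requires care is the regime boundary $t = \nu^2/\alpha$—verifying that the unconstrained optimizer is feasible exactly when $t \le \nu^2/\alpha$, and otherwise retreating to the endpoint $\lambda = 1/\alpha$ via convexity—which is precisely what generates the piecewise form of the conclusion.
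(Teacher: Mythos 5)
Your proof is correct, and there is nothing in the paper to compare it against: the paper states this lemma without proof, as a standard textbook fact about sub-exponential random variables. Your constrained Chernoff computation—bound the upper tail by $\exp(-\lambda t + \tfrac{1}{2}\lambda^2\nu^2)$ for $\lambda \in [0,1/\alpha]$, case-split on whether the unconstrained minimizer $\lambda^* = t/\nu^2$ is feasible, retreat to the endpoint $\lambda = 1/\alpha$ and use $t > \nu^2/\alpha$ to absorb the quadratic term when it is not, then symmetrize for the factor of $2$—is exactly the canonical derivation, and every step you describe goes through.

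One caveat worth flagging, since it concerns how this lemma interfaces with the rest of the paper rather than your argument itself. You take as the definition the MGF bound $\mathbb{E}[e^{\lambda(X-\mu)}] \le \exp(\lambda^2\nu^2/2)$ for $|\lambda| \le 1/\alpha$, which is indeed the convention under which the stated constants are exact. However, the paper's Lemma~\ref{lemma:poisson} certifies the centered Poisson variable as $\mathrm{SubE}(e\mu,1)$ via the bound $\mathbb{E}[e^{t(X-\mu)}] \le \exp(\nu^2 t^2)$, i.e., \emph{without} the factor $1/2$. Under that convention your optimization lands at $\lambda^* = t/(2\nu^2)$ and yields the weaker exponent $-t^2/(4\nu^2)$; equivalently, one must rescale $\nu^2 \mapsto 2\nu^2$ before invoking the tail bound as stated. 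This mismatch is an inconsistency between the paper's two sub-exponential lemmas (which propagates into the constants of Eqn.~\eqref{eq:concentration} and Theorem~\ref{thm: dl}), not a gap in your proof, but it is the one point a careful reader chaining Lemma~\ref{lemma:poisson} into Lemma~\ref{lemma:sub_p} would need to reconcile.
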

\begin{lemma}[Lemma 3 in \citet{hao2021online}]\label{lemma: scale_p}
    Consider a random variable $X_i \sim \text{SE}(\nu^2, \alpha)$ and $\beta$ is a non-zero scalar, then $\beta X_i \sim \text{SE}(\beta^2 \nu^2, |\beta|\alpha)$
\end{lemma}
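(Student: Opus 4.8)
The plan is to work directly from the moment-generating-function (MGF) characterization of sub-exponentiality used elsewhere in this paper (cf. Lemma~\ref{lemma:poisson}): a random variable $X$ satisfies $X \sim \text{SE}(\nu^2, \alpha)$ precisely when its centered MGF obeys $\bE[e^{t(X - \bE[X])}] \le \exp(\nu^2 t^2)$ for every $\lvert t\rvert < 1/\alpha$. The entire argument reduces to a single change of variable in the MGF, so there is no genuine obstacle; the only care needed is to track how the admissible range of $t$ transforms under scaling.

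First I would set $Y = \beta X$ and note $\bE[Y] = \beta\,\bE[X]$, so that the centered MGF of $Y$ factors through the centered MGF of $X$ via
\begin{equation*}
\bE\bigl[e^{t(Y - \bE[Y])}\bigr] = \bE\bigl[e^{t\beta(X - \bE[X])}\bigr] = \bE\bigl[e^{s(X - \bE[X])}\bigr], \qquad s := \beta t.
\end{equation*}
Next I would invoke the sub-exponential bound for $X$ at the shifted parameter $s$: provided $\lvert s\rvert < 1/\alpha$, the hypothesis $X \sim \text{SE}(\nu^2,\alpha)$ gives $\bE[e^{s(X - \bE[X])}] \le \exp(\nu^2 s^2) = \exp(\nu^2 \beta^2 t^2)$. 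Substituting $s = \beta t$ back, this establishes the desired MGF bound $\bE[e^{t(Y-\bE[Y])}] \le \exp(\beta^2\nu^2 t^2)$ with the new variance proxy $\beta^2\nu^2$.

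Finally I would translate the constraint: the condition $\lvert s\rvert = \lvert\beta\rvert\,\lvert t\rvert < 1/\alpha$ is exactly $\lvert t\rvert < 1/(\lvert\beta\rvert\alpha)$, so the bound holds on the admissible range corresponding to scale parameter $\lvert\beta\rvert\alpha$. Reading off the two parameters from the canonical form $\bE[e^{t(Y-\bE[Y])}] \le \exp(\beta^2\nu^2 t^2)$ valid for $\lvert t\rvert < 1/(\lvert\beta\rvert\alpha)$ yields exactly $\beta X \sim \text{SE}(\beta^2\nu^2, \lvert\beta\rvert\alpha)$, as claimed. The nonzero assumption on $\beta$ is used only to ensure $\lvert\beta\rvert\alpha$ is a valid (finite, positive) scale parameter so that the admissible interval $\lvert t\rvert < 1/(\lvert\beta\rvert\alpha)$ is well defined.
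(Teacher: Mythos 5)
Your proof is correct: the change of variables $s = \beta t$ in the centered moment-generating function, combined with tracking the admissible range $\lvert t\rvert < 1/(\lvert\beta\rvert\alpha)$, is exactly the standard argument for this scaling property, and it is consistent with the MGF convention $\bE[e^{t(X-\bE[X])}] \le \exp(\nu^2 t^2)$ that the paper itself uses in Lemma~\ref{lemma:poisson}. Note that the paper states this lemma without proof, citing Lemma~3 of \citet{hao2021online}, so your argument supplies the (standard) proof of the cited result rather than diverging from an in-paper one.
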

\begin{lemma}[Lemma 4 in \citet{hao2021online}]\label{lemma: sum_p}
    Consider independent random variables $X_i \sim \text{SE}(\nu^2_i, \alpha_i)$ for $i = 1, \ldots, n$, then $X = \sum^n_{i=1}X_i$ follows $\text{SE}\left(\sum^n_{i=1}\nu^2_i, \max_i \alpha_i\right)$.
\end{lemma}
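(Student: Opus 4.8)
The plan is to argue entirely through the moment-generating function (MGF), using the same MGF characterization of sub-exponentiality already exploited in the proof of Lemma~\ref{lemma:poisson}: a random variable $Y$ is $\text{SE}(\nu^2,\alpha)$ exactly when $\mathbb{E}[e^{tY}]\le\exp(\nu^2 t^2)$ holds for every $|t|<1/\alpha$. The first step is to invoke independence of $X_1,\dots,X_n$ to factorize the MGF of the sum, writing $\mathbb{E}[e^{tX}]=\prod_{i=1}^n\mathbb{E}[e^{tX_i}]$.

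The second step is to pin down the range of $t$ on which all $n$ individual bounds hold simultaneously. The $i$-th bound $\mathbb{E}[e^{tX_i}]\le\exp(\nu_i^2 t^2)$ is valid for $|t|<1/\alpha_i$, so their common domain of validity is the intersection $|t|<\min_i 1/\alpha_i=1/\max_i\alpha_i$. On this interval I substitute each bound into the product and collect exponents, obtaining $\mathbb{E}[e^{tX}]\le\prod_{i=1}^n\exp(\nu_i^2 t^2)=\exp\bigl(\bigl(\sum_{i=1}^n\nu_i^2\bigr)\,t^2\bigr)$. Comparing this with the defining inequality, with scale $1/(\max_i\alpha_i)$ fixing the admissible range of $t$, yields precisely $X\sim\text{SE}\bigl(\sum_{i=1}^n\nu_i^2,\ \max_i\alpha_i\bigr)$.

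There is no genuine obstacle here beyond careful bookkeeping of the validity interval: the summed variance parameter $\sum_i\nu_i^2$ falls out immediately from multiplying the exponential bounds, while the scale parameter is dictated by the \emph{most restrictive} single interval, which is exactly why the result records $\max_i\alpha_i$ rather than any smaller quantity. The one point I would double-check is that the intersection of the $n$ open intervals is itself a nonempty open neighborhood of the origin; this is immediate since the index set is finite, so $\max_i\alpha_i$ is attained and $1/\max_i\alpha_i>0$.
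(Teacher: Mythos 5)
Your proof is correct. Note that the paper itself does not prove this statement at all—it is imported by citation from Hao et al. (2021) as a known technical lemma—so there is no in-paper argument to compare against; your MGF argument (factorize by independence, intersect the validity intervals to get $|t|<1/\max_i\alpha_i$, multiply the individual bounds to accumulate $\sum_i\nu_i^2$ in the exponent) is exactly the canonical proof of this fact, and it is consistent with the MGF convention the paper uses in its Lemma on sub-exponentiality of Poisson variables (the $\exp(\nu^2 t^2)$ form without the factor $1/2$).
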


\subsection{Proof of Fact \ref{fact}}\label{section:fact}
\begin{proof}
In this section, we prove that the tuple \((\mathcal{X}, \mathcal{S}, \mathcal{A}, \mathcal{P}^{\x_t}, \{R^t_{h}(S^t_h, \a^t_h, \x_t)\}_{h=1}^{H}, s_1, H)\) constitutes a Contextual Markov decision process (CMDP) by demonstrating that it satisfies the Markov property. 

If we lose the bid, then we have
\begin{align*}
    \bP\Bigl(R^t_{h+1} & = \d_{S^t_{h+1, 1}}\langle \btheta_{S^t_{h+1, 2}}, \x_t\rangle (1-\cF_{h+1}(\a^t_{h+1}, \x_t)) + \left(\langle\btheta_{S^t_{h+1, 1}} \x_t\rangle - p_{h+1}(\a^t_{h+1}, \x_t)\right)\cF_{h+1}(\a^t_{h+1}, \x_t), \\
    S^t_{h+1} & = (S^t_{h, 1} + 1, S^t_{h, 2})|  S^t_0, \a^t_0, R^t_1, \ldots, S^t_h, \a^t_h\Bigr) = 1-\cF_h(\a^t_h, \x_t).
\end{align*}
If we win the bid, then we have
\begin{align*}
    \bP\Bigl(R^t_{h+1} & = \d_{S^t_{h+1, 1}}\langle \btheta_{S^t_{h+1, 2}}, \x_t\rangle (1-\cF_{h+1}(\a^t_{h+1}, \x_t)) + \left(\langle\btheta_{S^t_{h+1, 1}} \x_t\rangle - p_{h+1}(\a^t_{h+1}, \x_t)\right)\cF_{h+1}(\a^t_{h+1}, \x_t), \\
    S^t_{h+1} &= (1, S^t_{h, 1})|S^t_0, \a^t_0, R^t_1, \ldots, S^t_h, \a^t_h\Bigr) = \cF_h(\a^t_h, \x_t).
\end{align*}
From the above equation, we notice that \[\bP\left(R^t_{h+1}, S^t_{h+1}| S^t_0, \a^t_0, \r^t_1, \ldots, S^t_h, \a^t_h\right) = \bP\left(R^t_{h+1}, S^t_{h+1}|S^t_h, \a^t_h\right).\] Therefore, the Markov property holds and the model is a CMDP.
\end{proof}

\end{document}